%% file: main.tex
\documentclass{article}

\usepackage[preprint]{arxiv}

\usepackage[utf8]{inputenc} 
\usepackage[T1]{fontenc} 

\usepackage{microtype}
\usepackage{graphicx}
\usepackage{subcaption}
\usepackage{booktabs} 
\usepackage{hyperref} 
\usepackage{url} 
\usepackage{booktabs} 
\usepackage{nicefrac} 
\usepackage{microtype} 
\usepackage{verbatim}
\usepackage{caption}
\usepackage{tikz}

\usepackage{hyperref}     
\hypersetup{
    colorlinks=true,      
    linkcolor=violet,       
    citecolor=BlueViolet,        
    urlcolor=violet         
}

\usepackage{listings}

\usepackage{amsmath, amsfonts, amssymb, amsthm}
\usepackage{mathtools}
\usepackage{thmtools, thm-restate}
\input{math_commands}

\usepackage{natbib}
\setcitestyle{authoryear, open={(},close={)}}
\usepackage[capitalize, noabbrev]{cleveref}

\usepackage{algorithm}
\usepackage{algpseudocode}

\usepackage{enumitem}
\setlist[itemize]{itemsep = 5pt, topsep=0pt, parsep=0pt, partopsep=0pt}
\setlist[enumerate]{itemsep = 5pt, topsep=0pt, parsep=0pt, partopsep=0pt}

\usepackage{xcolor}
\usepackage[dvipsnames]{xcolor} 

\theoremstyle{plain}
\newtheorem{theorem}{Theorem}[section]
\newtheorem{proposition}[theorem]{Proposition}
\newtheorem{lemma}[theorem]{Lemma}
\newtheorem{corollary}[theorem]{Corollary}
\theoremstyle{definition}
\newtheorem{definition}[theorem]{Definition}

\theoremstyle{definition}
\newtheorem{remark}[theorem]{Remark}
\newtheorem{example}[theorem]{Example}

\title{Robustness Verification of Polynomial Neural Networks}

\addauthor{Yulia Alexandr}{1,\dag}{UCLA}{yulia@math.ucla.edu}
\addauthor{Hao Duan}{2,\dag}{UCLA}{hduan7@ucla.edu}
\addauthor{Guido Mont\'ufar}{1,2,3}{UCLA \& MPI MiS}{montufar@math.ucla.edu}

\firstpagefootnotes{%
\textsuperscript{\dag}\,Equal contribution.\\[0.3ex] \textsuperscript{1} \, Department of Mathematics, University of California, Los Angeles, CA 90095, USA \\[0.1ex] \textsuperscript{2} \, Department of Statistics \& Data Science, University of California, Los Angeles, CA 90095, USA \\[0.1ex] \textsuperscript{3} \, Max Planck Institute for Mathematics in the Sciences, 04103 Leipzig, Germany}

\begin{document}

\maketitle

\begin{abstract}
  We study robustness verification of neural networks via metric algebraic geometry. 
  For polynomial neural networks, certifying a robustness radius amounts to computing the distance to the algebraic decision boundary. 
  We use the Euclidean distance (ED) degree as an intrinsic measure of the complexity of this problem, analyze the associated ED discriminant, and introduce a parameter discriminant that detects parameter values at which the ED degree drops. 
  We derive formulas for the ED degree for several network architectures and characterize the expected number of real critical points in the infinite-width limit. 
  We develop symbolic elimination methods to compute these quantities and homotopy-continuation methods for exact robustness certification. 
  Finally, experiments on lightning self-attention modules reveal decision boundaries with strictly smaller ED degree than generic cubic hypersurfaces of the same ambient dimension. 

  \smallskip 

  \emph{Keywords:} ED degree, ED discriminant, Kac-Rice formula, parameter discriminant, robustness verification, decision boundary, polynomial networks  
  \smallskip 

  \emph{MSC classes:} 14Q65, 65H14, 14Q30, 90C23, 60G15, 68T07
\end{abstract}

\section{Introduction}
  The deployment of artificial neural networks in safety-critical settings has created a growing demand for formal robustness guarantees, ensuring that small perturbations of the input do not lead to incorrect or unsafe predictions. 
  Robustness verification, i.e., certifying that the output prediction remains invariant over a prescribed neighborhood of a given input, has therefore emerged as a central problem in modern applications. 
  In this paper, we study this problem for classifiers implemented as polynomial neural networks and, more generally, for classifiers whose decision boundaries are algebraic or semi-algebraic. 
  We aim to advance theoretical frameworks to better understand how robustness verification is influenced by
  (i) the architecture of the neural network; 
  (ii) the particular test point; and 
  (iii) the particular parameter of the network. 

  A wide range of verification methods have been proposed, including convex relaxations
  \citep{salman_2019}, mixed-integer programming \citep{tjeng_2019}, satisfiability
  modulo theories \citep{katz_2017}, and abstract interpretation \citep{gehr_2018}.
  While these approaches have achieved notable success in various settings \citep{weng_2018, zhang_2018, botoeva_2020, brix_2023}, they exhibit a fundamental trade-off between computational tractability and exactness: methods that provide exact guarantees may incur exponential complexity or depend strongly on architectural features \citep{meng_2022, liu_2024, froese_2025}, whereas scalable relaxations enable efficient computation but may produce loose certificates and fail on challenging instances \citep{singh_2018, wong_2018, salman_2019}. 
  There remains limited theoretical understanding of how the intrinsic structure of a neural network governs the complexity of robustness verification \citep{ferlez_2021, konig_2024}. 

  We propose an algebraic framework for quantifying the complexity of robustness verification focusing on polynomial models and the intrinsic properties of the decision boundary. 
  The decision boundary of a classifier consists of input points at which the predicted class changes, which are points satisfying equations of the form $f_{i}(\bx ) - f_{j}(\bx) = 0$.
  Robustness verification at a test point $\bxi$ then amounts to certifying that its distance to the decision boundary is bounded below by some $\epsilon$ (see \cref{fig:main}). 
  For polynomial networks, this decision boundary is an algebraic variety. 
  This allows us to connect robustness verification to metric algebraic geometry, where the complexity of the associated distance minimization problem is captured by algebraic invariants of the underlying variety. 
  
  \begin{figure}
    \begin{center}
      \centerline{\includegraphics[width=0.5\columnwidth]{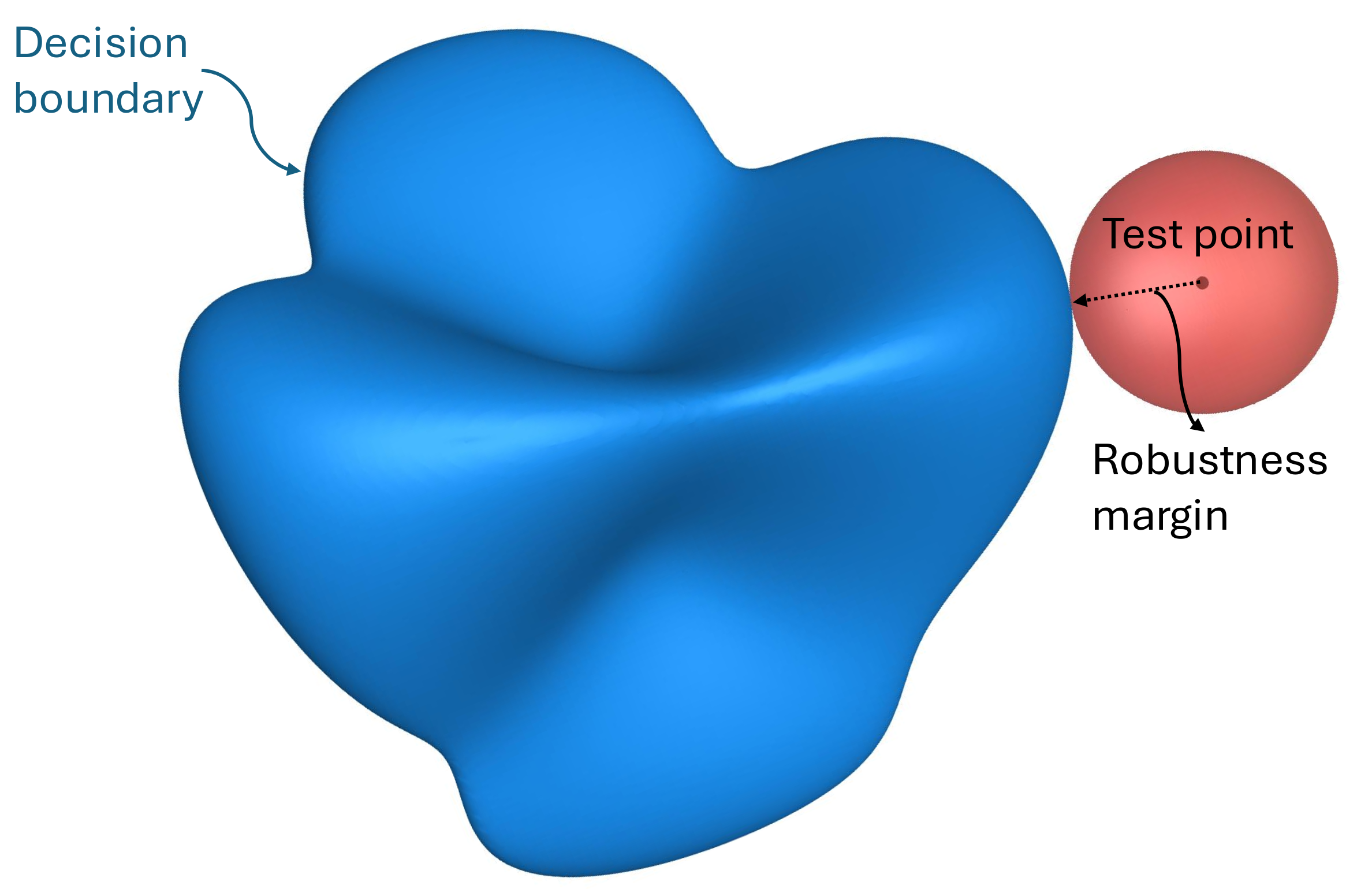}}
      \caption{The blue hypersurface shows the decision boundary of a polynomial
      neural network.
      The black dot is a test input; we compute its projection onto this decision boundary to obtain the robustness margin.} 
      \label{fig:main}
    \end{center}
    \vspace{-2em}
  \end{figure}
  
  Our central complexity measure is the Euclidean Distance (ED) degree, a fundamental invariant from algebraic geometry, defined as the generic number of complex critical points of the distance function to an algebraic variety \citep{draisma_2016,maxim_2020,breiding_2024a,lai_2025}. 
  In the context of robustness verification, the ED degree provides an architecture-dependent measure of verification complexity that is independent of optimization heuristics. 
  We further study the associated ED discriminant and introduce a parameter discriminant. 
  The \textit{ED discriminant} characterizes input points at which the number of \emph{real} critical points changes, while the \textit{parameter discriminant} identifies network parameters for which the ED degree drops, indicating a reduction in the algebraic complexity of the decision boundary. 

  We derive closed-form expressions for the ED degree and characterize the parameter discriminant for several classes of polynomial neural network architectures. 
  We further analyze the expected number of real critical points using the Kac-Rice framework. 
  For shallow networks in the infinite-width limit, we obtain explicit formulas and dimension-dependent bounds in terms of the degree of the activation function. 
  On the computational side, we develop symbolic and numerical algorithms for computing the ED degree and discriminants, and introduce an exact robustness certification algorithm based on numerical homotopy continuation. 
  This yields provably correct certificates whenever the underlying algebraic problem can be solved. 
  These results show that, by analyzing the algebraic invariants of the decision boundary, one can systematically quantify how verification complexity depends on the architecture, the network parameters, and the test point.
  
  \subsection{Contributions} 
  Our main contributions are summarized as follows:
  \begin{itemize}[leftmargin=*]
    \item In Section~\ref{sec:verification-as-algebraic-problem}, we formulate robustness verification as a metric projection problem onto algebraic varieties, providing an exact KKT-characterization and a relaxation. 
    We also give a numerical homotopy-continuation-based algorithm for exact robustness certification (Algorithm \ref{alg:hc}).
    
    \item In Section~\ref{sec:ed-degree}, we propose the ED degree as a measure of robustness verification complexity and describe its computation (Algorithm~\ref{alg:ed-degree}).

    \item In Section~\ref{sec:ed-degree-generic}, we derive closed-form expressions for the ED degree of decision boundaries of polynomial neural networks under generic parameters, for both wide and bottleneck architectures (Proposition~\ref{prop:generic-ed-degree-deep} and Proposition \ref{prop:bottleneck}), and complement these with numerical results for intermediate regimes.

    \item In Section~\ref{sec:exp-ed-degree}, we study the expected real ED degree of the decision boundary of polynomial networks via Kac-Rice methods. 
    We prove a general theorem for the expected number of real critical points (Theorem~\ref{thm:ed-kac-rice}). 
    For shallow networks in the infinite-width limit, we obtain an exact formula for one-dimensional input data (Proposition~\ref{prop:ed-kac-rice}) and an upper bound for arbitrary input dimensions (Theorem~\ref{thm:ed-kac-rice-high}). 

    \item In Section~\ref{sec:ed-disc}, we study the ED discriminant, which classifies test instances over the input space by the corresponding complexity of the verification task. 
    We provide a symbolic algorithm for its computation (Algorithm~\ref{alg:ed-discriminant}), together with examples. 
    
    \item In Section~\ref{sec:par-disc}, we introduce the parameter discriminant, which classifies the parameters of a neural network by the corresponding complexity of the verification task on generic test data.
    We establish a complete characterization for quadric boundaries (Theorem~\ref{thm:ed-degree-conics-n-variables}) and derive the parameter discriminant as a stratified hypersurface. 

    \item In Section~\ref{sec:attention}, we report numerical results on the ED degree and ED discriminant for lightning self-attention modules. 
  \end{itemize}
  
  An implementation of algorithms and experiments is provided at: \url{https://github.com/edwardduanhao/AlgebraicVerification}. 

  \subsection{Related Works}
    \paragraph{Neural network verification}
    Formal verification of neural networks has been extensively studied using satisfiability modulo theories, mixed-integer programming, and abstract interpretation \citep{katz_2017, tjeng_2019, anderson_2020, gehr_2018}.
    These approaches provide exact or conservative robustness guarantees for piecewise-linear networks, notably ReLU architectures, but typically face a tradeoff between scalability and precision \citep{meng_2022, liu_2024, froese_2025}.
    While highly effective in practice, existing methods largely treat robustness as an algorithmic problem and offer limited insight into how architectural or geometric properties of a network govern the intrinsic complexity of verification.

    \paragraph{Distance-based robustness}
    Robustness verification can be formulated as the problem of computing the minimum distance from an input to the classifier's decision boundary.
    This viewpoint underlies many adversarial robustness and certification methods based on local optimization \citep{szegedy_2014, carlini_2017} or convex relaxations \citep{wong_2018, salman_2019}.
    Prior work focuses on computing or bounding adversarial perturbations \citep{hein_2017,raghunathan_2018}, but does not characterize the global structure or number of critical points of the optimization~problem.

    \paragraph{Algebraic perspectives on neural networks}
    A growing body of work applies tools from algebraic geometry, tropical geometry, and related areas to the study of neural networks, addressing questions of expressivity \citep{kileel_2019, brandenburg_2024, kubjas_2024, alexandr_2025}, identifiability \citep{allman_2009, henry_2025, usevich_2025}, optimization
    landscapes \citep{venturi_2019, kohn_2022}. 
    These works model neural networks as polynomial or piecewise-polynomial maps and analyze the resulting algebraic varieties or stratified spaces.
    In contrast, our focus is on formal robustness verification, rather than representation or training dynamics. 

    \paragraph{Metric algebraic geometry}
    The Euclidean distance degree and the ED discriminant arise naturally in nearest-point problems on algebraic varieties and are central objects in metric algebraic geometry \citep{draisma_2016, maxim_2020, breiding_2024a, lai_2025}.
  This literature develops both the geometric theory and the computational methods
  for Euclidean distance optimization, including the study of critical points,
  discriminants, and related complexity measures \citep{breiding_kohn_sturmfels_2024}. 
  Our work applies this perspective to robustness verification for neural networks,
  where the relevant nearest-point problem is projection to the decision boundary.

\section{Verification as an Algebraic Problem} \label{sec:verification-as-algebraic-problem}

  \paragraph{Notation}
  We use $[n]$ to denote the index set $\{1,2,\ldots,n\}$.
  For a given $\bu = (u_{1},\ldots,u_{n})^{\top}\in \R^{n}$ and any $\bx = (x_{1}
  ,\ldots,x_{n})^{\top}\in \R^{n}$, we define the squared Euclidean distance
  \[
    d_{\bu}(\bx) := \sum_{i=1}^{n}(x_{i}- u_{i})^{2}.
  \]
  For a center $\bx_{0}\in \R^{n}$ and radius $r>0$, we define the open Euclidean
  ball as $$\mathbb{B}(\bx_{0},r) := \{ \bx \in \R^{n}\mid d_{\bx_0}(\bx) < r^{2}\}.$$

  \subsection{Robustness Verification via Metric Projection}
    Let $f_{\bt}: \R^{n}\to \R^{k}$ be a parametric model for $k$-class classification, with parameters $\bt \in \R^{p}$.
    For any input $\bx\in \R^{n}$, the model outputs a score vector $[f_{\bt,1}(\bx ), \ldots, f_{\bt,k}(\bx)]$, and predicts $\argmax_{i}f_{\bt,i}(\bx)$.

    The standard formulation of robustness verification \citep{wong_2018} is as follows: given a test point $\bxi\in \R^{n}$ assigned to class $c$, we seek to certify that the predicted class remains invariant under all $\ell_{2}$ perturbations of radius~$\epsilon$.
    Formally, the condition we want to verify is
    \[
      \argmax_{i\in[k]}f_{\bt, i}(\bx) = c, \quad \forall \bx \in \mathbb{B}(\bxi ,
      \epsilon).
    \]
    This condition can be reformulated as an optimization problem.
    For a fixed $c$, we define the worst-case margin~as
    \[
      \delta(\epsilon) := \min_{c' \neq c}\inf_{\bx \in \mathbb{B}(\bxi, \epsilon)} \{ f_{\bt, c}(\bx) - f_{\bt, c'}(\bx) \}.
    \]
    If $\delta(\epsilon) > 0$, then the score of class $c$ exceeds all others in $\mathbb{B}(\bxi, \epsilon)$, thereby certifying robustness at $\bxi$.

    \paragraph{Geometric interpretation}
    We interpret robustness verification as computing a metric projection of the test point onto the classifier's decision boundary.
    Define the \emph{active decision boundary} between classes $c$ and $c'$ as
    \[
      \B_{c,c'}^{\bt}:= \{ \bx \in \R^{n}: f_{\bt, c}(\bx) = f_{\bt, c'}(\bx) = \max
      _{i \in [k]}f_{\bt, i}(\bx) \},
    \]
    and denote the zero level set of logit difference as
    \[
      \V_{c,c'}^{\bt}:= \{ \bx \in \R^{n}: f_{\bt, c}(\bx) - f_{\bt, c'}(\bx) = 0\}.
    \]
    Note that $\B_{c,c'}^{\bt}\subseteq \mathcal{V}_{c,c'}^{\bt}$, since the decision boundary requires extra constraints
    $f_{\bt, i}(\bx) - f_{\bt, c}(\bx) \le 0$ for all $i \neq c, c'$.
    We quantify the robustness margin as the minimum distance from the test point $\bxi$ to the active decision boundary by defining
    \begin{align}
      \gamma := \min_{c' \neq c}\inf_{\bx \in \B_{c,c'}^{\bt}}\sqrt{d_{\bxi}(\bx)}. \label{eq:robustness-margin}
    \end{align}
    The condition $\gamma > \epsilon$ implies that the ball $\mathbb{B}(\bxi, \epsilon)$ does not intersect any active decision boundary, i.e.,
    \[
      \mathbb{B}(\bxi, \epsilon) \cap \bigcup_{c' \neq c}\B_{c,c'}^{\bt}= \varnothing.
    \]
    Equivalently, for every $\bx \in \mathbb{B}(\bxi, \epsilon)$ and every $c' \neq c$, we have $f_{\bt,c}(\bx) > f_{\bt,c'}(\bx)$, so the pairwise margins remain nonnegative everywhere in the ball, and $\delta(\epsilon) > 0$. 
    By continuity of the logits, the predicted class then remains invariant within $\mathbb{B}(\bxi, \epsilon)$, thereby certifying robustness at $\bxi$.

  \subsection{Algebraic Optimization Formulation}
    Suppose each logit $f_{\bt,i}$ is a polynomial in $\bx$, so $\V_{c,c'}^{\bt}$ is a \emph{real algebraic variety} defined by the polynomial equation $f_{\bt,c}(\bx)-f_{\bt,c'}(\bx)=0$.
    Imposing the additional inequality constraints $f_{\bt,i}(\bx)-f_{\bt,c}(\bx) \le 0$ for all $i\neq c,c'$ restricts this to a \emph{semialgebraic} set $\B_{c,c'}^{\bt}$.

    To compute the robustness margin $\gamma$, we formulate a constrained metric projection problem for each $c' \neq c$:
    \begin{align}
      \text{minimize}\quad    & d_{\bxi}(\bx) \nonumber                                                       \\
      \text{subject to}\quad  & f_{\bt, c'}(\bx) - f_{\bt, c}(\bx) = 0, \label{eq:main-opt-prob}              \\
                              & f_{\bt, i}(\bx) - f_{\bt, c}(\bx) \le 0, \quad \forall i \neq c,c'. \nonumber
    \end{align}
    This computes the projection of the test point $\bxi$ onto the active decision boundary between classes $c$ and $c'$.

    \paragraph{KKT conditions}
    To characterize candidate solutions of Problem \ref{eq:main-opt-prob}, we introduce the Lagrangian
    \begin{equation*}
      \begin{aligned}
        \mathcal{L}(\bx, \lambda, \boldsymbol{\mu}) :={} & \frac{1}{2}\, d_{\bxi}(\bx) + \lambda \bigl(f_{\bt, c'}(\bx) - f_{\bt, c}(\bx)\bigr) + \sum_{i \neq c, c'}\mu_{i}\bigl(f_{\bt, i}(\bx) - f_{\bt, c}(\bx)\bigr),
      \end{aligned}
    \end{equation*}
    where $\lambda$ and $\boldsymbol{\mu}= (\mu_{i})_{i \neq c, c'}$ are the KKT multipliers corresponding to the equality and inequality constraints.

    By the KKT conditions \citep{karush_1939, kuhn_2014}, any local optimum $(\bx^{\ast},\lambda^{\ast},\boldsymbol{\mu}^{\ast})$ satisfies
    \begin{equation}
      \left\{
      \begin{aligned}
        \nabla_{\bx}\mathcal{L}(\bx^{\ast},\lambda^{\ast},\boldsymbol{\mu}^{\ast}) & = 0,                              \\
        f_{\bt, c'}(\bx^{\ast}) - f_{\bt, c}(\bx^{\ast})                           & = 0,                              \\
        \mu_{i}^{\ast}\bigl(f_{\bt, i}(\bx^{\ast}) - f_{\bt, c}(\bx^{\ast})\bigr)  & = 0, \quad \forall i \neq c,c',   \\
        f_{\bt, i}(\bx^{\ast}) - f_{\bt, c}(\bx^{\ast})                            & \le 0, \quad \forall i \neq c,c', \\
        \mu_{i}^{\ast}                                                             & \ge 0, \quad \forall i \neq c,c'.
      \end{aligned}
      \right. \label{eq:kkt-conditions}
    \end{equation}
    The first three conditions in (\ref{eq:kkt-conditions}) form a square system of $n+k-1$ polynomial equations in $n+k-1$ unknowns, while the remaining conditions encode feasibility constraints. 
  
    Since all equations are polynomial, the KKT system can be solved using tools from numerical algebraic geometry. We consider specifically \textit{homotopy continuation} because it is a complete numerical method guaranteed to find all
    isolated complex solutions \citep{li_1997, sommese_2001, sommese_2005a, sommese_2005b, duff_2018}.
    This computes all complex solutions to the stationarity, complementary slackness, and equality constraints. 
    Subsequently, one retains the minimal real solution satisfying the remaining feasibility conditions.
    \cref{fig:hc} illustrates the concept, with details in \cref{app:homotopy}. 

    \begin{figure}[ht]
      \centering
      \begin{subfigure}
        [t]{0.38\textwidth}
        \centering
        \includegraphics[width=\linewidth]{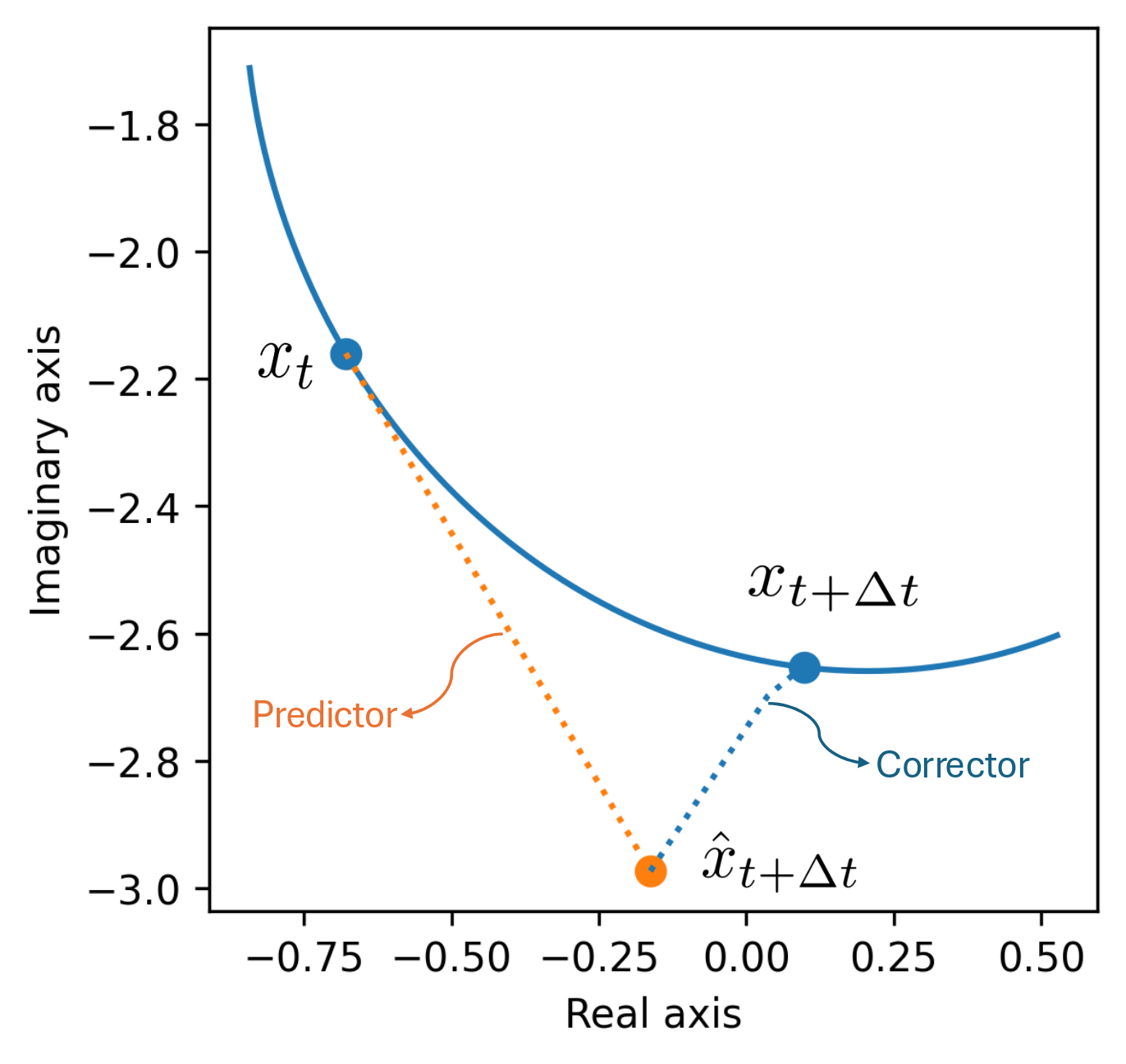}
        \label{fig:left}
      \end{subfigure}
      \hspace{0.1\textwidth}
      \begin{subfigure}
        [t]{0.374\textwidth}
        \centering
        \includegraphics[width=\linewidth]{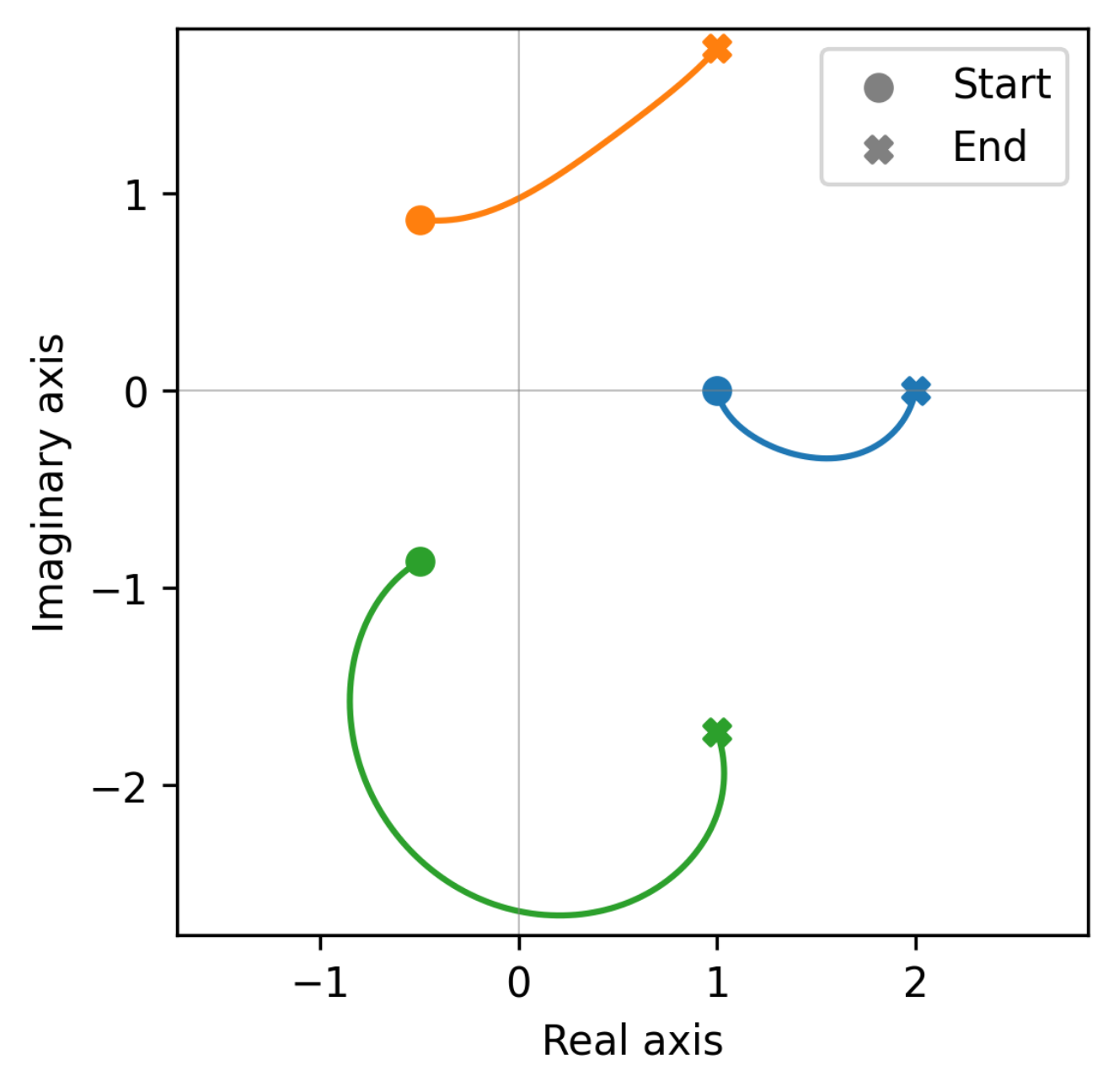}
        \label{fig:right}
      \end{subfigure}
      \caption{Left: An iteration of homotopy continuation, moving from $x_{t}$ to
      $x_{t+\Delta t}$ via a tangent predictor (orange) and a Newton corrector (blue).
      Right: Homotopy paths tracking the roots of $x^{3}- 4 x^{2}+ 8x - 8 = 0$ (with
      $x_{1,2}= 1 \pm \sqrt{3}\, i$ and $x_{3}= 2$) from the start system
      $x^{3}- 1 = 0$ (with $x_{1,2}= -\frac{1}{2}\pm \frac{\sqrt{3}}{2}\, i$ and $x
      _{3}= 1$).}
      \label{fig:hc}
    \end{figure}
    
    Solving the full KKT system can be of interest in some cases, but is computationally prohibitive in practice.
    The B\'ezout upper bound on the number of complex solutions \citep{bezout_1779} grows exponentially in the number of variables.

    \paragraph{Relaxation of inequality constraints}
    To mitigate this complexity, we consider a relaxation that removes the inequality constraints, resulting in a projection problem onto $\V_{c,c'}^{\bt}$ instead of $\B_{c,c'}^{\bt}$.
    Define the relaxed robustness margin
    \begin{align}
      \tilde{\gamma}:= \min_{c' \neq c}\inf_{\bx \in \V_{c,c'}^{\bt}}\sqrt{d_{\bxi}(\bx)}. 
      \label{eq:robustness-margin-relaxed}
    \end{align}
    This relaxation is exact under mild conditions:
    \begin{restatable}
      {proposition}{proprelax} \label{prop:relaxation-equivalence} Let
      $f_{\bt}: \mathbb{R}^{n}\to \mathbb{R}^{k}$ be a continuous classifier, and let
      $\hat{c}(\bx) := \arg\max_{i \in [k]}f_{\bt, i}(\bx)$ denote the predicted class.
      Fix a test point $\bxi \in \mathbb{R}^{n}$ and write $c := \hat{c}(\bxi)$.
      Assume that the prediction at $\bxi$ is unique, i.e.,
      \[
        f_{\bt,c}(\bxi) > f_{\bt,c'}(\bxi) \quad \text{for all }c' \neq c.
      \]
      Then, for $\gamma$ and $\tilde{\gamma}$ defined in (\ref{eq:robustness-margin})
      and (\ref{eq:robustness-margin-relaxed}), we have $\gamma = \tilde{\gamma}$.
    \end{restatable}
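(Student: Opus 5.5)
Since $\B_{c,c'}^{\bt}\subseteq \V_{c,c'}^{\bt}$ for every $c'\neq c$, taking infima over the larger set gives $\tilde\gamma \le \gamma$ immediately. The substance is the reverse inequality $\gamma \le \tilde\gamma$. For this I will show that for every $c'\neq c$ and every $\bx\in\V_{c,c'}^{\bt}$ there exist some $c''\neq c$ and a point $\by\in \B_{c,c''}^{\bt}$ with $d_{\bxi}(\by)\le d_{\bxi}(\bx)$. Taking infima over $\bx$ and then over $c'$ yields $\gamma\le\tilde\gamma$. If every $\V_{c,c'}^{\bt}$ is empty, both sides equal $+\infty$ under the convention $\inf\varnothing=+\infty$, and the statement holds trivially.

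The construction uses the segment from the test point. Fix $\bx\in\V_{c,c'}^{\bt}$ and set $\bz(t):=\bxi+t(\bx-\bxi)$ for $t\in[0,1]$. Define the continuous function
\[
g(t):=f_{\bt,c}(\bz(t))-\max_{i\neq c} f_{\bt,i}(\bz(t)).
\]
By the uniqueness assumption, $g(0)>0$. At $t=1$ we have $f_{\bt,c}(\bx)=f_{\bt,c'}(\bx)\le \max_{i\ne c}f_{\bt,i}(\bx)$, so $g(1)\le 0$. By the intermediate value theorem, the set $\{t\in[0,1]: g(t)\le 0\}$ is nonempty and closed; let $t^*$ be its minimum. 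Then $t^*>0$, $g(t^*)=0$, and $g>0$ on $[0,t^*)$, although only $g(t^*)=0$ is actually needed. Put $\by:=\bz(t^*)$ and pick $c''\in\argmax_{i\neq c}f_{\bt,i}(\by)$. Then $f_{\bt,c}(\by)=f_{\bt,c''}(\by)\ge f_{\bt,i}(\by)$ for all $i\ne c$, so $f_{\bt,c}(\by)=\max_{i\in[k]}f_{\bt,i}(\by)$, and hence $\by\in\B_{c,c''}^{\bt}$. Finally, $d_{\bxi}(\by)=(t^*)^2 d_{\bxi}(\bx)\le d_{\bxi}(\bx)$.

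The one step that needs care is recognizing that the point where the segment first leaves the region of class $c$ may lie on the boundary with a class $c''$ different from $c'$. This is why the argument must compare against the minimum over all $c'\ne c$ rather than pair by pair; indeed, the pairwise equality $\inf_{\B_{c,c'}}=\inf_{\V_{c,c'}}$ can fail. Using $\max_{i\neq c}$ in the definition of $g$ handles this cleanly. I would also briefly note that continuity of $f_{\bt}$ is what makes $g$ continuous and the intermediate value theorem applicable.
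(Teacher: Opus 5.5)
Your proof is correct and uses the same core idea as the paper: walk along the segment from $\bxi$ and apply the intermediate value theorem to find a point on an active boundary that is no farther away. The paper argues by contradiction at a minimizer $\bx^{\ast}$ of the relaxed problem, implicitly using that the infimum over the closed set $\bigcup_{c'\neq c}\V_{c,c'}^{\bt}$ is attained, and applies the intermediate value theorem to the pairwise difference $f_{\bt,c}-f_{\bt,\ell}$; your choice $g=f_{\bt,c}-\max_{i\neq c}f_{\bt,i}$ instead lands directly in some $\B_{c,c''}^{\bt}$ for every $\bx$, so you need neither a minimizer nor a contradiction step.
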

    The proof is provided in \cref{proof:relaxation-equivalence}.
    As a consequence, we may consider the relaxed optimization problem
    \begin{align}
      \begin{split}\text{minimize}\quad&d_{\bxi}(\bx) \\ \text{subject to}\quad&f_{\bt, c'}(\bx) - f_{\bt, c}(\bx) = 0. \label{eq:relaxed-opt-prob}\end{split}
    \end{align}
    The corresponding Lagrangian is
    \[
      \mathcal{L}(\bx, \lambda) = \frac{1}{2}d_{\bxi}(\bx) + \lambda \bigl(f_{\bt, c'}(\bx) - f_{\bt, c}(\bx)\bigr),
    \]
    and the KKT conditions reduce to
    \begin{align}
      \begin{split}\bx^{\ast}- \bxi + \lambda^{\ast}\bigl(\nabla f_{\bt, c'}(\bx^{\ast}) - \nabla f_{\bt, c}(\bx^{\ast})\bigr)&= 0, \\ f_{\bt, c'}(\bx^{\ast}) - f_{\bt, c}(\bx^{\ast})&= 0. \label{eq:relaxed-kkt-conditions}\end{split}
    \end{align}
    These yield a square system of $n+1$ equations in $n+1$ unknowns, instead of $n+k-1$, resulting in substantial computational savings when the number of classes $k$ is large.
    This relaxed polynomial system forms the basis of our exact certification procedure via numerical homotopy continuation, which is summarized in Algorithm~\ref{alg:hc} and described in more detail in ~\cref{app:experiments}.
    \begin{algorithm}[tb]
    \caption{Exact robustness certification via homotopy continuation}
    \label{alg:hc}
    \begin{algorithmic}[1]
    \Require Network parameters $\bt$, test point $\bxi \in \mathbb{R}^n$, radius $\epsilon > 0$
    \Ensure Relaxed robustness margin $\tilde{\gamma}$ and certification decision
    
    \State $c \gets \arg\max_{i\in[k]} f_{\bt,i}(\bxi)$
    \State $\tilde{\gamma} \gets +\infty$
    \For{each $c' \in [k]\setminus\{c\}$}
        \State Define the relaxed critical system
        \[
        F_{c,c'}^{\bt}(\bx,\lambda;\bxi):=
        \begin{bmatrix}
        \bx-\bxi+\lambda\bigl(\nabla f_{\bt,c'}(\bx)-\nabla f_{\bt,c}(\bx)\bigr)\\[2mm]
        f_{\bt,c'}(\bx)-f_{\bt,c}(\bx)
        \end{bmatrix}.
        \]
        \State Construct a homotopy
        \[
        H_{c,c'}(\bz,t;\gamma)=\gamma(1-t)G(\bz)+tF_{c,c'}^{\bt}(\bz;\bxi),
        \qquad \bz=(\bx,\lambda),
        \]
        \State where $G$ is a start system with known solutions and $\gamma\in\mathbb{C}^\ast$ is generic
        \State Track all solution paths from $t=0$ to $t=1$
        \State Let $\mathcal{R}_{c,c'}$ be the real endpoints of $F_{c,c'}^{\bt}(\bx,\lambda;\bxi)=0$
        \If{$\mathcal{R}_{c,c'} \neq \varnothing$}
            \State $\tilde{\gamma} \gets \min\!\Bigl\{\tilde{\gamma},\;
            \min_{(\bx,\lambda)\in\mathcal{R}_{c,c'}} \sqrt{d_{\bxi}(\bx)}\Bigr\}$
        \EndIf
    \EndFor
    \State \Return \textbf{certified} if $\tilde{\gamma}>\epsilon$, else \textbf{not certified}
    \end{algorithmic}
    \end{algorithm}

  \subsection{Polynomial Neural Networks} 
    In our theoretical analysis we will consider classifiers that are implemented as polynomial neural networks, defined as follows. 

    For given $L \ge 1$ and $\boldsymbol{h}=(h_0,h_1,\dots,h_L)\in\mathbb{N}_{\geq 0}^{L+1}$,  
    a \emph{fully connected feedforward neural network} with architecture $\boldsymbol{h}$ is a map
    $f_\vtheta:  \R^{h_0}\to\R^{h_L}$ of the form
    \[
      f_{\bt}
      =
      f_L \circ \sigma_{L-1} \circ f_{L-1} \circ \cdots \circ \sigma_1 \circ f_1,
    \]
    where, for each $\ell\in[L]$,
    \[
      f_\ell(\bx)=W_\ell \bx+\bb_\ell,
      \qquad
      W_\ell\in\R^{h_\ell\times h_{\ell-1}},\quad
      \bb_\ell\in\R^{h_\ell}, 
    \]
    and $\sigma_\ell:\R^{h_\ell}\to\R^{h_\ell}$ is an activation function that acts
    coordinatewise, i.e.,
    \[
    \sigma_\ell(x_1,\dots,x_{d_\ell})
      =
      \bigl(\phi_\ell(x_1),\dots,\phi_\ell(x_{d_\ell})\bigr) , 
    \]
    for some scalar activation function $\phi_\ell:\R\to\R$. 
    The parameter of the network is the tuple \(\bt = (\left(W_{\ell}, \bb_{\ell}\right))_{\ell=1}^{L}\) consisting of the weights $W_\ell$ and biases $\bb_\ell$ of each layer $\ell\in[L]$. 
    The network is \emph{shallow} if $L=2$ and \emph{deep} if $L>2$. 
    In this paper, we focus on \emph{polynomial neural networks}, which have activation functions $\phi_\ell$ that are polynomial.
    More specifically, we consider a monomial activation fixed across layers,  
    \[
      \phi_\ell(t)=t^d, \qquad \ell\in[L-1],
    \]
    for some fixed integer $d\ge 1$.  
    Equivalently, the coordinatewise activation maps are
    \[
      \sigma_\ell(x_1,\dots,x_{d_\ell})
      =
      (x_1^d,\dots,x_{d_\ell}^d),
      \qquad \ell\in[L-1].
    \]
    In this setting, each component of the network map $f_\vtheta$ is a polynomial in the input $\bx$ and also a polynomial in the parameter $\vtheta$. 
    If all biases $\bb_\ell$ are fixed to zero, then the resulting network map is homogeneous. 
    For $k$-class classification,  we take $h_0=n$ and $h_L=k$, so $f_{\bt}:\mathbb{R}^n\to\mathbb{R}^k$ has output coordinates $f_{\bt,1},\dots,f_{\bt,k}$  representing the class logits, and the predicted class is $\arg\max_{i\in[k]} f_{\bt,i}(\bx)$.

\section{ED Degree of a Decision Boundary} \label{sec:ed-degree}
  We analyze the complexity of the metric projection problem onto the decision
  boundary of a classifier using the \emph{Euclidean Distance (ED) degree} from algebraic
  geometry.
  \begin{definition}[Euclidean Distance Degree]
    Let $\mathcal{V}\subseteq \mathbb{C}^{n}$ be a complex algebraic variety\footnote{For
    a real variety $\mathcal{V}_{\mathbb{R}}\subseteq \mathbb{R}^{n}$, we consider
    its complexification $\mathcal{V}\subseteq \mathbb{C}^{n}$. 
    The number of real critical points depends on the data $\bu$, but the number of complex critical
    points is generically constant.}, and let
    \[
      \mathcal{V}_{\mathrm{reg}}:= \mathcal{V}\setminus \mathcal{V}_{\mathrm{sing}}
      = \left\{ \bx \in \mathcal{V}\mid \dim T_{\bx}\mathcal{V}= \dim \mathcal{V}
      \right\}
    \]
    denote its smooth locus.
    The \emph{Euclidean Distance (ED) degree} of $\mathcal{V}$ is the number of complex
    critical points of the squared Euclidean distance function $d_{\bu}(\bx)$ with
    respect to a generic\footnote{In this context, a point is \emph{generic} if
    it lies in a dense open Zariski subset of the parameter space. 
    Specifically, it does not lie on the ED discriminant locus defined in Section
    \ref{sec:ed-disc}.} data point $\bu \in \mathbb{R}^{n}$ restricted to
    $\mathcal{V}_{\mathrm{reg}}$:
    \[
      \mathrm{EDdegree}(\mathcal{V}) := \#\left\{ \bx \in \mathcal{V}_{\mathrm{reg}}
      \;\middle|\; \bx - \bu \perp T_{\bx}\mathcal{V}\right\},
    \]
    where $T_{\bx}\mathcal{V}$ denotes the tangent space of $\mathcal{V}$ at $\bx$
    and the orthogonality is with respect to the standard dot~product.
  \end{definition}

  In Problem~\ref{eq:relaxed-opt-prob}, we minimize the distance from $\bxi$ to~$\V_{c,c'}^{\bt}$.
  Let $I^{\bt}_{c,c'}= \langle f_{\bt,c}(\bx) - f_{\bt,c'}(\bx) \rangle$ denote the
  ideal generated by the logit difference for fixed $\bt$.
  Note the ED degree is defined with respect to $\mathbb{C}$, which ensures the count
  of critical points is a generic invariant.
  We treat $\mathcal{V}_{c,c'}^{\bt}\subseteq\mathbb{C}^{n}$ as a complex algebraic variety by taking the Zariski closure.

  The ED degree of $\mathcal{V}_{c,c'}^{\bt}$ is an upper bound on the optimization complexity: for our data point $\bxi$, the number of complex critical points is bounded above by $\mathrm{EDdegree}(\mathcal{V}_{c,c'}^{\bt})$, with equality holding when $\bxi$ is generic.

  For fixed $\bt$, the variety $\mathcal{V}^{\bt}_{c,c'}$ is a hypersurface, since its ideal is generated by a single polynomial. Its ED degree is computed via Algorithm~\ref{alg:ed-degree}, where the resulting ideal $C^{\bt}_{c,c'}$ is zero-dimensional \citep[][Lemma 2.1]{draisma_2016}, and hence has finitely many solutions. 
  Its degree equals the ED degree of the decision boundary. 
  \begin{algorithm}
    [tb]
    \caption{Symbolic computation of the ED degree}
    \label{alg:ed-degree}
    \begin{algorithmic}
      [1] \Require Network parameters $\bt$, classes $(c,c')$ \Ensure ED degree
      of $\mathcal{V}^{\bt}_{c,c'}$

      \State Sample a generic point $\bu = (u_{1},\ldots,u_{n}) \in \mathbb{R}^{n}$.

      \State Form the augmented Jacobian
      \[
        AJ^{\bt}_{c,c'}=
        \begin{bmatrix}
          \bu - \bx                                                   \\
          \nabla_{\bx}\!\left(f_{\bt,c}(\bx) - f_{\bt,c'}(\bx)\right)
        \end{bmatrix}.
      \]

      \State Define the ideal
      \[
        J^{\bt}_{c,c'}= I^{\bt}_{c,c'}+ \langle \text{all $2$-minors of }AJ^{\bt}
        _{c,c'}\rangle .
      \]

      \State Saturate to remove the singular locus:
      \[
        C^{\bt}_{c,c'}= \left( J^{\bt}_{c,c'}: \langle \nabla_{\bx}(f_{\bt,c}- f_{\bt,c'}
        ) \rangle^{\infty}\right).
      \]

      \State \Return $\deg(C^{\bt}_{c,c'})$.
    \end{algorithmic}
  \end{algorithm}

  In the remainder of this section, we address the questions:
  \begin{enumerate}[leftmargin=2em, labelsep=0.5em] 
    \item[(i)] For a given architecture and \textit{generic} parameters $\bt$, what
      is the ED degree of the decision boundary $\mathcal{V}_{c,c'}^{\bt}$?

    \item[(ii)] What is the number of \textit{real} critical points of the
      Euclidean distance minimization problem?
  \end{enumerate}

  \subsection{ED Degree for Generic Parameters} \label{sec:ed-degree-generic}
    In this subsection, we study the ED degree of the decision boundary for a parameter $\bt$ that is fixed to be generic.
    This reflects the typical complexity of the problem, noting that the non-generic parameters form a thin set of measure zero.
    Denote the polynomial of the boundary as $B_{\bt}(\bx) = f_{\bt,c}(\bx) - f_{\bt,c'}(\bx)$, so $\mathcal{V}_{c,c'}^{\bt} = \{\bx \in \mathbb{R}^{n}\mid B_{\bt}(\bx) = 0\}$.
    For shallow networks, we obtain the following result.
    \begin{restatable}
      {proposition}{propgenericeddegree} \label{prop:generic-ed-degree} 
      Consider a neural network with architecture $(n, h, k)$ and a degree-$d$ polynomial
      activation function. 
      Assume the parameters $\bt$ are generic. 
      The ED degree of the decision boundary $\mathcal{V}_{c,c'}^{\bt}$ is given by
      \[
        \text{EDdegree}(\mathcal{V}_{c,c'}^{\bt}) = d\sum_{i = 0}^{m-1}(d-1)^{i},\;
        \text{where } m = \min\{n, h\}.
      \]
    \end{restatable}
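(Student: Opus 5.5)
Our plan is to reduce the decision boundary polynomial to a normal form and then apply the known ED degree formula for generic hypersurfaces. For a shallow network with architecture $(n,h,k)$ we have
\[
  B_{\bt}(\bx) = \sum_{j=1}^{h} a_j\,(\bw_j^\top \bx + b_j)^d + \beta,
\]
where $\bw_j$ are the rows of $W_1$, $b_j$ the entries of $\bb_1$, $a_j = (W_2)_{c,j}-(W_2)_{c',j}$, and $\beta$ is the difference of output biases. For generic $\bt$, the $a_j$ are nonzero and generic, and the affine forms $\ell_j(\bx)=\bw_j^\top\bx+b_j$ are in general position. We will treat the cases $h\ge n$ and $h<n$ separately, because $m=\min\{n,h\}$ is the number of independent directions in which $B_{\bt}$ actually varies.

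\emph{Case $h < n$.} The linear parts $\bw_1,\dots,\bw_h$ span an $h$-dimensional subspace $U\subseteq\R^n$, and $B_{\bt}(\bx)$ depends only on the orthogonal projection of $\bx$ onto $U$. Hence $\mathcal{V}^{\bt}_{c,c'} = \mathcal{W}\times U^{\perp}$, a cylinder over a hypersurface $\mathcal{W}\subseteq U\cong\C^h$. The ED problem splits orthogonally: the critical point's $U^\perp$-component equals that of $\bu$. So $\mathrm{EDdegree}(\mathcal{V}) = \mathrm{EDdegree}(\mathcal{W})$. The orthogonal change of coordinates preserves the ED degree, but it does not preserve genericity of the coefficients of $\mathcal{W}$. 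We still need to check that $\mathcal{W}$ is ED-generic in the sense required below, since $\mathcal{W}$ is cut out by $\sum_{j=1}^h a_j \ell_j^d+\beta$, where the $\ell_j$ form an affine coordinate system on $U$, though not an orthonormal one.

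\emph{Main step: $h\ge n$ (and $\mathcal{W}$ above with $n=h$).} The target value $d\sum_{i=0}^{m-1}(d-1)^i$ is exactly the ED degree of a generic (affine) hypersurface of degree $d$ in $\C^m$, by \citep[][Prop.~7.6 / Cor.~8.?]{draisma_2016}. Equivalently, it is the count of critical points of the Lagrange system $\bx-\bu=\lambda\nabla B$, $B=0$ when no solutions escape to infinity and none lie in the singular locus. Our proof therefore consists of verifying two transversality conditions for our special $B_{\bt}$:
\begin{itemize}
\item[(a)] the affine hypersurface is smooth;
\item[(b)] its intersection with the hyperplane at infinity, i.e.\ the projective hypersurface $\{\sum_j a_j(\bw_j^\top\bx)^d=0\}\subseteq\mathbb{P}^{m-1}$, is smooth and transverse to the isotropic quadric $\{\sum x_i^2=0\}$.
\end{itemize}
Under (a) and (b), Theorem~8.1 / Proposition~2.6-type results of \citet{draisma_2016} give the generic formula. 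When $h\ge n$, the leading form $\sum_j a_j(\bw_j^\top\bx)^d$ with $h\ge n$ generic powers of linear forms is a generic-enough form. In particular, for $h=n$ it is a diagonal (Fermat-type) form after a generic linear change of variables. Smoothness then follows by the usual Jacobian argument, since $\partial_{\ell_j}=d\,a_j\ell_j^{d-1}$ vanishes simultaneously only at the origin. Adding generic biases and $\beta$ gives (a) by Bertini.

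\emph{Main obstacle.} The hardest part is condition (b): transversality to the isotropic quadric, since our family is not the full space of degree-$d$ polynomials. We would handle it with an upper-semicontinuity argument. The ED degree is at most the generic value for every member of the parameter family, and the set of $\bt$ achieving equality is Zariski open. So it suffices to exhibit one explicit parameter attaining the bound. We would pick $h=m$ with $\bw_j$ chosen so that, after the induced change of variables, the form is $\sum_j a_j x_j^d$ for generic $a_j$ in orthonormal coordinates. For this Fermat-type form, tangency with $\sum x_i^2$ can be ruled out directly: the Lagrange condition $a_j x_j^{d-1}=\mu x_j$ together with the two equations $\sum a_jx_j^d=0$ and $\sum x_j^2=0$ forces a contradiction for generic $a_j$. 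Alternatively, the Lagrange system decouples coordinatewise, and one can count its $d\sum(d-1)^i$ solutions by hand. For $h>n$, we specialize extra neurons' $a_j$ to $0$ and again use semicontinuity.
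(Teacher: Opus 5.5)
Your proposal is correct, but it takes a different route from the paper's proof of this proposition.

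\emph{The paper's route.} The paper argues by direct identification. For $n\le h$ it asserts that $B_{\bt}$ is a generic scaled Fermat polynomial, then invokes \citet{lee_2017} (generic scaled Fermat hypersurfaces have the ED degree of a generic hypersurface) together with \citet[Prop.~2.6]{draisma_2016}. For $n>h$ it makes the same cylinder reduction you do and simply declares the base $\mathcal{Z}\subseteq\mathbb{C}^h$ to be a generic hypersurface.

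\emph{Your route.} You use the strategy the paper reserves for \cref{prop:generic-ed-degree-deep}: a universal upper bound (Prop.~2.6 of Draisma et al., applied in $\mathbb{C}^m$ after the cylinder reduction), lower semicontinuity of the ED degree over the parameter family, and one explicit witness. This buys rigor exactly where you suspected trouble. The substitution $y_j=\bw_j^\top\bx+b_j$ that turns $B_{\bt}$ into a scaled Fermat polynomial is an isometry only when the $\bw_j$ are orthonormal. The ED degree is not invariant under non-orthogonal linear maps; for $d=2$ this is the difference between a circle and an ellipse. So the paper's identification needs precisely your witness-plus-semicontinuity step to be justified. Likewise, for $h>n$ the boundary is a sum of $h$ rather than $n$ powers, and your specialization of the extra output-weight differences to zero handles this cleanly inside the same family. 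The price is relying on semicontinuity, which is standard and which the paper itself uses in the deep case.

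Two small adjustments.
\begin{itemize}
\item For $h<n$, run the semicontinuity argument on the $(n,h,k)$ family in $\mathbb{C}^n$ directly, with the upper bound coming from the cylinder and the witness $W_1=[\,I_h\;\;0\,]$. Do not try to transfer genericity of $\bt$ to $\mathcal{W}$: an isometric identification $U\cong\mathbb{C}^h$ is not polynomial in $\bt$, so that transfer is not free.
\item For the witness you can simply cite \citet{lee_2017} on generic scaled Fermat hypersurfaces. Prop.~2.6 of Draisma et al.\ is the upper bound with equality for general complete intersections, not a transversality criterion. Your direct tangency check does go through for generic $a_j$. Note that under $a_jx_j^{d-1}=\mu x_j$, the equation $\sum_j a_jx_j^d=0$ is already implied by $\sum_j x_j^2=0$.
\end{itemize}
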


    This result reveals that the algebraic complexity is governed by the layer bottleneck $m = \min\{n, h\}$, implying that the difficulty of exact verification grows exponentially with the width of the network even when the other dimension is fixed.
    \begin{corollary}
      In the case of a quadratic activation $d=2$, the summation simplifies to $m$, yielding an ED degree of~$2m$.
    \end{corollary}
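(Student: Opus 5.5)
The statement to prove is the Corollary: for $d=2$ the formula of Proposition~\ref{prop:generic-ed-degree} gives $2m$. I would obtain it by direct substitution into that proposition, which is already established for generic parameters and arbitrary degree $d$.

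First, with $d=2$ we have $d-1=1$, so every term $(d-1)^i$ of the sum equals $1$. Hence
\[
\sum_{i=0}^{m-1}(d-1)^i=\sum_{i=0}^{m-1}1=m,
\]
where $m=\min\{n,h\}$. Second, multiplying by the prefactor $d=2$ gives $\mathrm{EDdegree}(\mathcal{V}_{c,c'}^{\bt})=2m$, as claimed.

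There is no real obstacle. The only point to check is that the hypotheses of Proposition~\ref{prop:generic-ed-degree} do not exclude $d=2$. The proposition is stated for an arbitrary degree-$d$ activation, so $d=2$ is covered.

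As a sanity check, the boundary $B_{\bt}$ is then a generic quadric of rank $m$: in suitable coordinates it is $\sum_{j\le m}\lambda_j(\mathbf{w}_j^\top\bx)^2$ plus lower-order terms. The known ED degree of a generic affine quadric of rank $m$ is $2m$, which agrees with the formula.
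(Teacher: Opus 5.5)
Your argument is correct and is exactly how the paper gets the corollary: it is stated without a separate proof, as the specialization of Proposition~\ref{prop:generic-ed-degree} to $d=2$, where $\sum_{i=0}^{m-1}1^i=m$. One caveat applies only to your sanity check, which holds because the linear part $2\sum_j\alpha_j b_j\mathbf{w}_j$ of $B_{\bt}$ lies in the span of the $\mathbf{w}_j$, i.e.\ in the range of $A_{\bt}$, so for $h<n$ the boundary is a cylinder over a generic quadric in $m$ variables; a rank-$m$ quadric in $n>m$ variables with a \emph{generic} linear term would instead have ED degree $2m+1$ by Theorem~\ref{thm:ed-degree-conics-n-variables}.
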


    \begin{example}
      For $d = 3$ and generic parameters, \cref{prop:generic-ed-degree} yields the
      following values:
      \begin{itemize}[leftmargin=2em, labelsep=0.5em] 
        \item Architecture $(3,2,2)$: $m=2$, ED degree $9$.

        \item Architecture $(3,3,2)$: $m=3$, ED degree $21$.

        \item Architecture $(4,4,2)$: $m=4$, ED degree $45$.
      \end{itemize}
    \end{example}

    \begin{remark}[Deep architectures]
      For the architecture $(n, h_{1}, \dots, h_{s}, k)$, the polynomial
      $B_{\bt}(\bx)$ has degree~$D=d^{s}$.
      Unlike the shallow case, deep networks define nested functional compositions.
      This rigid algebraic structure often leads to \textit{reducible} varieties
      rather than generic hypersurfaces.
      Consequently, substituting the total degree $D$ into the generic formula of~\cref{prop:generic-ed-degree}
      yields only a theoretical upper bound, while the true ED degree may be strictly
      lower.
      This is seen in Proposition \ref{prop:bottleneck} and Table
      \ref{tab:intermediate_experiments}.
    \end{remark}
    Computing the exact ED degree of deep networks is difficult and remains an open
    problem in general.
    However, for networks that constrict to width~$1$, the decision boundary
    factorizes into parallel copies of a shallow surface, allowing us to prove the
    following result.
    \begin{restatable}
      {proposition}{propbottleneck} \label{prop:bottleneck} Consider a deep neural
      network with architecture $(n, h_{1}, 1, \dots, 1, k)$ having $s$ hidden
      layers and degree-$d$ activation. Assume $h_{1}\ge n$ and generic $\bt$.
      Then
      \[
        \text{EDdegree}(\mathcal{V}_{c,c'}^{\bt}) = D \sum_{i=0}^{n-1}(d-1)^{i},
      \]
      where $D = d^{s}$ is the total composite degree of the network.
    \end{restatable}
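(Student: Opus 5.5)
Since every layer after the first hidden layer has width $1$, the boundary polynomial factors through a single scalar function, and the plan is to exploit this. Let $g(\bx) = \sum_{j=1}^{h_1} a_j (\bw_j^\top \bx + b_j)^d + \beta$ be the scalar pre-activation of the first width-$1$ layer. It is a generic shallow-network output with architecture $(n,h_1,1)$. All subsequent layers compose $g$ with univariate polynomials of the form $t\mapsto \alpha t^d+\beta'$. Hence $B_{\bt}(\bx) = p(g(\bx))$, where $p$ is a univariate polynomial of degree $D/d = d^{s-1}$ whose coefficients depend generically on the later-layer parameters. The final logit difference $f_{\bt,c}-f_{\bt,c'}$ is a linear combination of the last hidden scalar to the power $d$ plus a constant, so it is still a polynomial in $g$.

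First, I would show that for generic $\bt$ the polynomial $p$ has $d^{s-1}$ distinct roots $r_1,\dots,r_{D/d}$. This follows because the discriminant of $p$ is a nonzero polynomial in the parameters: one exhibits a single parameter choice (for instance with nonzero biases) giving simple roots. Then $\mathcal{V}_{c,c'}^{\bt} = \bigcup_i \{g = r_i\}$, a union of parallel level sets of the shallow surface. The ED degree is additive over irreducible components, or more simply over a disjoint union of hypersurfaces. Critical points on the smooth locus of the union with generic data $\bu$ are critical points of the individual components $\{g=r_i\}$ lying off their pairwise intersections. Distinct level sets are disjoint, so
\[
\mathrm{EDdegree}(\mathcal{V}_{c,c'}^{\bt}) = \sum_{i=1}^{D/d} \mathrm{EDdegree}(\{g = r_i\}).
\]
Singular points of the union are exactly the singular points of the individual level sets, since $\nabla B = p'(g)\nabla g$ and $p'(r_i)\ne 0$.

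Second, each $\{g=r_i\}$ is the decision boundary of a shallow network of architecture $(n,h_1,2)$: absorb $r_i$ into the output bias. With $h_1\ge n$ we have $m=n$. Applying \cref{prop:generic-ed-degree} gives $d\sum_{i=0}^{n-1}(d-1)^i$ for each $i$, and summing over the $D/d$ roots yields $D\sum_{i=0}^{n-1}(d-1)^i$.

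The main obstacle is the genericity transfer in this second step. \cref{prop:generic-ed-degree} applies to generic shallow parameters, but here the first-layer parameters are generic while the constants $r_i$ are algebraic functions of the other parameters, with $D/d$ of them simultaneously. I would handle this by observing that the set of shallow parameters $(W_1,\bb_1,\mathbf a, \beta)$ at which the ED degree equals the generic value is Zariski open and dense. For fixed generic $(W_1,\bb_1,\mathbf a)$, the set of bad offsets $\beta$ is therefore a proper closed subset of $\mathbb{C}$, hence finite. As the later-layer parameters vary, the roots $r_i$ sweep out a dense set, so generically none of the $r_i$ lands in this finite bad set. A secondary check is that saturation in Algorithm~\ref{alg:ed-degree} (removing $\nabla B=0$) matches removing the singular loci componentwise. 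This holds because $p'(r_i)\neq0$ and the components are disjoint.
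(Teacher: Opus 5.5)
Your proposal is correct and follows the paper's proof. In both, the boundary is $p(g(\bx))$ with $p$ univariate of degree $d^{s-1}$; it splits into $d^{s-1}$ disjoint level sets $\{g=r_i\}$ of the shallow output, and summing the shallow ED degree from Proposition~\ref{prop:generic-ed-degree} over them gives $D\sum_{i=0}^{n-1}(d-1)^i$.

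Your extra checks fill in steps the paper only asserts: simple roots via the discriminant, $p'(r_i)\neq 0$ for the singular locus, and that the algebraically determined offsets $r_i$ avoid the finitely many bad levels of $g$. For that last step, replace the ``roots sweep out a dense set'' phrasing with this: for each fixed bad value $t_0$, $p(t_0)$ is affine in the output-bias difference, hence a nonconstant polynomial in the later-layer parameters. So avoiding all bad values is a nonempty Zariski-open condition.
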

    \begin{example}
      The polynomial neural network with architecture $(2, 2, 1, 1, 2)$ and $d= 2$
      has ED degree 16. This is four times smaller than the naive bound of 64 for a
      generic curve of degree 8, illustrating how the network's bottleneck
      structure significantly reduces algebraic complexity.
    \end{example}
    On the other hand, if the network maintains sufficient width, the decision
    boundary achieves maximal generic complexity.
    \begin{restatable}
      {proposition} {propgenericeddegreedeep}\label{prop:generic-ed-degree-deep}
      Consider a deep neural network with architecture
      $(n, h_{1}, \dots, h_{s}, k)$. Assume degree-$d$ activation, and
      $h_{i}\ge n$ for all $i \in [s]$. For generic parameters $\bt$,
      \[
        \text{ED degree}(\mathcal{V}_{c,c'}^{\bt}) = D \sum_{i=0}^{n-1}(D-1)^{i},
      \]
      where $D = d^{s}$ is the composite degree of the network.
    \end{restatable}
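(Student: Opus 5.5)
Our plan is to show that for generic $\bt$ the boundary polynomial $B_{\bt}$ is a \emph{generic} degree-$D$ polynomial, in the sense that its hypersurface satisfies the transversality conditions under which the classical formula $\mathrm{EDdegree} = D\sum_{i=0}^{n-1}(D-1)^i$ holds \citep[][Prop.~2.6/Cor.~8.7]{draisma_2016}. Those conditions are: the affine hypersurface $\{B_{\bt}=0\}$ is smooth, and its projective closure meets the hyperplane at infinity transversally, avoiding tangency with the isotropic quadric $Q_\infty=\{x_1^2+\cdots+x_n^2=0\}$. Equivalently, the top-degree form $B_{\bt}^{(D)}$ defines a smooth hypersurface in $\mathbb{P}^{n-1}$ that is transversal to $Q_\infty$. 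Since each condition is Zariski open in the coefficient space of degree-$D$ polynomials, and $\bt\mapsto B_{\bt}$ is polynomial, it suffices to exhibit a single parameter $\bt_0$ for which all conditions hold. The set of good parameters is then a nonempty Zariski open subset of the irreducible parameter space, hence generic.

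The construction of $\bt_0$ uses the width assumption $h_i\ge n$. We choose the first layer so that the first $n$ neurons are $x_1+b_{1,1},\dots,x_n+b_{1,n}$ (plus generic small biases) and switch off the remaining neurons. Each subsequent layer passes neuron $j$ to neuron $j$ via the identity on the first $n$ coordinates, with chosen biases. After $s$ layers, coordinate $j$ carries a univariate polynomial $p_j(x_j)$ of degree $D=d^s$ (iterated $(t+b)^d$ compositions). For the last layer we choose $W_L$ so that the row difference for classes $c,c'$ is a generic linear combination $a_1,\dots,a_n$, with an added constant bias. This gives $B_{\bt_0}(\bx)=\sum_j a_j p_j(x_j)+b$, a separable polynomial whose top form is $\sum_j a_j x_j^D$, a Fermat-type form.

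We then verify the conditions for $\bt_0$. Affine smoothness amounts to requiring that the constant $b$ avoid the finitely many sums of critical values of the $a_jp_j$. Smoothness at infinity holds because the Fermat hypersurface $\sum a_j x_j^D$ is smooth. Transversality to $Q_\infty$ needs an argument: at a point of tangency, $(a_j x_j^{D-1})_j$ is proportional to $(x_j)_j$. If we pick distinct generic $a_j$, every nonzero $x_j$ must then satisfy $x_j^{D-2}=\mu/a_j$. Substituting into $\sum a_jx_j^D=0$ and $\sum x_j^2=0$ gives $\mu\sum_{j\in S} x_j^2=0$ and $\sum_{j\in S}x_j^2=0$, which are dependent, so we need care. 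We expect this to be the main obstacle, because Fermat-type forms can be tangent to $Q_\infty$ (for instance, $D=2$ with equal weights). If the direct check fails, we would instead invoke a count of the critical system directly: for separable $B$, the critical equations $x_j-u_j=\lambda a_j p_j'(x_j)$ together with $B=0$ can be counted via a Bernstein/BKK argument, or we can perturb further. The fallback is to use the degrees of freedom in the first layer (replace the coordinate axes by a generic orthonormal-breaking linear map $A$, giving top form $\sum a_j (A\bx)_j^D$). This makes transversality to $Q_\infty$ an open condition that we only need to verify at one more explicit example, for which we would do a computer-free check with $n=1$ and induction, or appeal to upper semicontinuity together with the upper bound $D\sum(D-1)^i$.

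Finally, the formula is always an upper bound for any degree-$D$ hypersurface, and the ED degree is lower semicontinuous on the open set where it is attained. Once one example attains the bound, equality therefore holds for generic $\bt$. The bottleneck case (\cref{prop:bottleneck}) shows that the hypothesis $h_i\ge n$ is essential in this construction.
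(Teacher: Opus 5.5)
Your outline follows the paper's proof. You pick a single witness parameter that routes $x_j$ through the first $n$ neurons of every layer, which gives a separable boundary $\sum_j a_j p_j(x_j)+b$ with Fermat top form $\sum_j a_j x_j^D$ (the paper uses zero hidden biases, so $p_j(x_j)=x_j^D$). You then spread the general-position conditions to generic $\bt$ by Zariski openness. Your handling of affine smoothness and of smoothness at infinity is fine.

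The gap is the one condition with real content: transversality of $\{\sum_j a_jx_j^D=0\}\subset\mathbb{P}^{n-1}$ to $Q_\infty$. You stop exactly where the two equations become dependent, and you even suspect the condition fails. None of your fallbacks closes it. The BKK count is only announced. Replacing the axes by a generic linear map $A$ just moves the same check to another example. Appealing to ``upper semicontinuity together with the upper bound'' is circular, because it presupposes a parameter at which the bound is attained. The paper settles this step by citing \citet{lee_2017}; as written, your proposal does not prove the statement.

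The missing check is short once you keep the relation $a_jx_j^{D-2}=\mu$ rather than only the two dependent equations. Take $D\ge 2$; for $D=1$ the boundary is a hyperplane and there is nothing to prove. At a tangency point $x\neq0$ we have $a_jx_j^{D-1}=\mu x_j$ for all $j$. Hence $\mu\neq0$, and $a_jx_j^{D-2}=\mu$ on the support $S$ of $x$.

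If $D=2$, then $a_j=\mu$ on $S$. Distinct weights force $|S|=1$, so $\sum_j x_j^2=x_{j_0}^2\neq0$, a contradiction.

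If $D\ge3$, fix a $(D-2)$-th root $\nu$ of $\mu$. Take $a_j>0$ with positive $(D-2)$-th roots $\alpha_j$. Then $x_j=\omega_j\nu/\alpha_j$ with $\omega_j^{D-2}=1$, and
\[
0=\sum_{j=1}^n x_j^2=\nu^2\sum_{j\in S}\omega_j^2\,\alpha_j^{-2}.
\]
Choose $a_j=2^{-j(D-2)}$, so that $\alpha_j^{-2}=4^j$. The sum is then nonzero, because the term with the largest $j\in S$ has modulus exceeding the sum of the moduli of the others.

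So explicit weights give a witness satisfying every general-position condition: any distinct weights for $D=2$, and $a_j=2^{-j(D-2)}$ for $D\ge3$. Your openness argument then finishes the proof. The semicontinuity remark at the end is unnecessary, since the Draisma et al.\ criterion applies directly on the resulting dense open set of parameters.
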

    For the intermediate layer width regime ($1 < h_{i}< n$), we can compute
    specific cases numerically.
    We present examples in Table~\ref{tab:intermediate_experiments}.
    \begin{table}[t]
      \caption{Numerical ED degrees for deep architectures.}
      \label{tab:intermediate_experiments}
      \begin{center}
          \begin{sc}
            \begin{tabular}{lcccr}
              \toprule Architecture           & {Total degree} & {Computed}  \\
              $(n, [h_{1},\ldots, h_{s}]), d$ & $D = d^{s}$    & {ED Degree} \\
              \midrule $(3, [3, 2]), 2$       & 4              & 36          \\
              $(3, [3, 2]), 3$                & 9              & 333         \\
              $(3, [2, 3]), 3$                & 9              & 81          \\
              $(4, [4, 3]), 2$                & 4              & 128         \\
              $(4, [4, 2]), 2$                & 4              & 64          \\
              $(5, [5, 3]), 2$                & 4              & 260         \\
              $(3, [3, 2, 2]), 2$             & 8              & 168         \\
              $(3, [3, 3, 2]), 2$             & 8              & 328         \\
              \bottomrule
            \end{tabular}
          \end{sc}
      \end{center}
    \end{table}
    \begin{remark}
      The results indicate that intermediate width architectures have ED degree between
      the theoretical bounds. \cref{prop:generic-ed-degree-deep} serves as a
      generic upper bound, and \cref{prop:bottleneck} serves as a constructive lower
      bound for fixed depth~$s$, where width 1 minimizes the algebraic mixing of
      variables.
      For example, architecture $(3, [3, 2], 2)$ yields ED degree $36$, between the
      bounds $12$ and $52$, which is greater than the lower bound $12$ and less
      than the generic upper bound $52$. This suggests that the decision boundary lives
      on a sparse subvariety whose complexity depends on the specific rank constraints
      of the hidden layers.
    \end{remark}

    Our framework allows us to distinguish between the geometric complexity of the boundary and the identifiability of the
    parameters. 
    Let $p$ be the number of parameters and $N$ be the dimension of the space of degree-$D$ polynomials. 
    Let $\Phi: \mathbb{R}^{p}\to \mathbb{R}^{N}$ be the parameterization map sending weights to coefficients.
    The observed drop in ED degree is governed by the dimension of the image
    $\mathrm{Im}(\Phi)$ relative to the ambient dimension $N$. 
    For intermediate width architectures, we typically have $\dim(\mathrm{Im}(\Phi)) \ll N$. 
    This strict inequality ensures that the decision boundary lies on a proper subvariety of the polynomial space,
    which explains the reduced algebraic complexity.
    In contrast, parameter identifiability is determined by the dimension of the
    image relative to the domain dimension $p$. 
    If $\dim(\mathrm{Im}(\Phi)) < p$, then by the Fiber Dimension Theorem \citep[][Chapter 1]{hartshorne_1977}, the fibers
    $\Phi^{-1}(f)$ are positive-dimensional varieties, and thus the network is
    defective.
    This comparison clarifies the trade-off: the ED degree depends on how small
    the image is inside $\mathbb{R}^{N}$, while identifiability depends on how compressed
    the map is from $\mathbb{R}^{p}$. 
    A network can thus achieve low verification complexity while remaining non-identifiable if the architecture is over-parameterized.

  \subsection{Expected Real ED Degree} \label{sec:exp-ed-degree}
    The results in the preceding section concern critical points of the distance minimization problem over the complex field.
    This is the natural quantity for homotopy continuation, which computes all isolated complex solutions of the critical system.
    For robustness verification, however, one is often ultimately interested only in the real critical points, since these correspond to candidate nearest points on the decision boundary in the real input space.
    In practice, the number of real ED critical points is typically much smaller than the ED degree and may vary substantially across different generic realizations of the network parameters.
    To capture this phenomenon, we study the \emph{expected real ED degree} under a distribution on the parameters.
  
    A convenient tool for this purpose is the Kac--Rice formalism \citep{kac_1943, rice_1944}, which gives an exact expression for the expected number of zeros of a random field.
    Applied to our ED critical system, it yields a formula for the expected number of real critical points encountered in robustness verification.
    \begin{restatable}
      {theorem}{thmedkacrice} \label{thm:ed-kac-rice} Let
      $f:\mathbb{R}^{n}\to\mathbb{R}$ be a random field that is almost surely
      twice continuously differentiable.
      Fix $\bxi \in \mathbb{R}^{n}$ and, for
      $(\bx,\lambda)\in\mathbb{R}^{n}\times(\mathbb{R}\setminus\{0\})$, define
      \[
        F(\bx, \lambda):=
        \begin{pmatrix}
          f(\bx)                         \\
          \bx-\bxi+\lambda \nabla f(\bx)
        \end{pmatrix}.
      \]
      Assume that, for almost every $(\bx,\lambda)$, the random vector $F(\bx,\lambda
      )$ is nondegenerate\footnote{By nondegenerate we mean that, for fixed $(\bx ,
      \lambda)$, the random vector $F(\bx,\lambda)$ is not supported on any proper
      affine subspace of $\mathbb{R}^{n+1}$, equivalently that it admits a density
      with respect to Lebesgue measure on $\mathbb{R}^{n+1}$.}, that
      \[
        \mathbb{P}\!\left(\det J_{F}(\bx,\lambda)=0 \,\middle|\, F(\bx,\lambda )=\mathbf{0}
        \right)=0,
      \]
      where $J_{F}(\bx,\lambda)$ denotes the Jacobian matrix of $F$ with respect to
      $(\bx,\lambda)$, and that the integrand below is integrable on a region
      $U := U_{\bx}\times U_{\lambda}$, with $U_{\lambda}\subset\mathbb{R}\setminus
      \{0\}$.
      Then
      \begin{equation*}
        \mathbb{E}\Big[\#\{(\bx,\lambda) \in U:\ F(\bx,\lambda)=\mathbf{0}\}\Big ]
        = \int_{U}|\lambda|^{-n}\, p_{(f(\bx), \nabla f(\mathbf{x}))}\big ( 0, \tfrac
        {\bxi - \bx}{\lambda}\big) \mathcal{I}(\bx, \lambda) \, \mathrm{d}(\bx,\lambda
        ),
      \end{equation*}
      where $p_{(f(\bx), \nabla f(\bx))}$ is the joint density of
      $(f(\bx) ,\nabla f(\bx))$,
      \begin{equation*}
        \mathcal{I}(\bx, \lambda) = \mathbb{E}\Big[\big|{\nabla f(\bx)}^{\top}\operatorname{adj}
        \!\left(I_{n}+\lambda \nabla^{2}f(\bx)\right ){\nabla f(\bx)}\big| \Big|\ f
        (\bx)=0,\ \nabla f(\bx)= \frac{\bxi - \bx}{\lambda}\Big] ,
      \end{equation*}
      with $\operatorname{adj}(\cdot)$ the adjugate of a square matrix.
    \end{restatable}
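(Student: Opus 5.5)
The plan is to apply the standard Kac--Rice formula for a random field $F:\mathbb{R}^{n+1}\supset U\to\mathbb{R}^{n+1}$,
\[
\mathbb{E}\,\#\{z\in U: F(z)=0\}=\int_U \mathbb{E}\big[|\det J_F(z)|\,\big|\,F(z)=0\big]\,p_{F(z)}(0)\,\mathrm{d}z,
\]
which is valid under exactly the hypotheses listed: $F$ is a.s.\ $C^1$ (since $f$ is $C^2$), $F(z)$ has a density, and $\det J_F\neq 0$ a.s.\ on the zero set. The work then consists of two computations. First, rewrite the density $p_{F(\bx,\lambda)}(0)$ in terms of the joint density of $(f(\bx),\nabla f(\bx))$. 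Second, compute $\det J_F$ on the event $F=0$ and show that it equals $\nabla f^\top \operatorname{adj}(I_n+\lambda\nabla^2 f)\nabla f$ up to sign.

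\textbf{Density step.} For fixed $(\bx,\lambda)$ with $\lambda\neq 0$, the map $(a,\bg)\mapsto (a,\bx-\bxi+\lambda \bg)$ is an affine bijection of $\mathbb{R}^{n+1}$ whose Jacobian determinant is $\lambda^n$. The change-of-variables formula therefore gives
\[
p_{F(\bx,\lambda)}(0,\mathbf{0})=|\lambda|^{-n}\,p_{(f(\bx),\nabla f(\bx))}\big(0,\tfrac{\bxi-\bx}{\lambda}\big).
\]
The conditioning event $F=0$ is likewise the same as $\{f(\bx)=0,\ \nabla f(\bx)=(\bxi-\bx)/\lambda\}$. This accounts for the prefactor and for the conditioning in $\mathcal{I}$.

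\textbf{Jacobian step.} Writing $\bg=\nabla f(\bx)$ and $H=\nabla^2 f(\bx)$, the Jacobian with respect to $(\bx,\lambda)$ has the block form
\[
J_F=\begin{pmatrix}\bg^\top & 0\\ I_n+\lambda H & \bg\end{pmatrix}.
\]
I would compute its determinant by a Laplace/Schur-type expansion for bordered matrices. Moving the last column to the front and the first row to the bottom yields, up to a sign $\pm1$, the bordered matrix $\begin{pmatrix} M & \bg\\ \bg^\top & 0\end{pmatrix}$ with $M=I_n+\lambda H$. The standard identity $\det\begin{pmatrix} M & \bg\\ \bg^\top & 0\end{pmatrix}=-\bg^\top\operatorname{adj}(M)\bg$ holds for every $M$, including singular $M$, since both sides are polynomial in the entries and agree on the dense set where $M$ is invertible; there the identity follows from the Schur complement $\det M\cdot(-\bg^\top M^{-1}\bg)$.

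Hence $|\det J_F|=|\bg^\top\operatorname{adj}(M)\bg|$. Substituting into Kac--Rice produces exactly $\mathcal{I}(\bx,\lambda)$ and completes the formula.

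\textbf{Main obstacle.} The step that needs the most care is justifying the application of Kac--Rice itself. Specifically:
\begin{itemize}
\item the nondegeneracy of $F(\bx,\lambda)$ is assumed only for almost every $(\bx,\lambda)$, not for all;
\item $U$ must avoid $\lambda=0$, where the change of variables degenerates;
\item integrability of the integrand is assumed on $U$.
\end{itemize}
I would invoke a version of the Kac--Rice theorem stated for a.e.\ nondegeneracy with an integrable right-hand side, for instance the area-formula-based version in Aza\"is--Wschebor. If needed, I would first prove the formula on compact subsets of $U$ where the density is continuous and then pass to the limit by monotone convergence.
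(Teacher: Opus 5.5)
Your proposal is essentially the paper's proof: apply the Kac--Rice formula to $F$, rewrite $p_{F(\bx,\lambda)}(0)$ as $|\lambda|^{-n}p_{(f(\bx),\nabla f(\bx))}\bigl(0,\tfrac{\bxi-\bx}{\lambda}\bigr)$ by the affine change of variables, and reduce $|\det J_F|$ to $\bigl|\nabla f(\bx)^{\top}\operatorname{adj}(M)\nabla f(\bx)\bigr|$, with $M=I_n+\lambda\nabla^2 f(\bx)$, via a row/column permutation and a Schur complement; your polynomial-identity extension to singular $M$ is slightly more careful than the paper, which tacitly inverts $M$. One harmless slip: only one of your two rearrangements is needed, since moving the first row of $J_F$ to the bottom already gives $\begin{pmatrix} M & \nabla f\\ \nabla f^{\top} & 0\end{pmatrix}$, whereas also moving the last column to the front produces $\begin{pmatrix} \nabla f & M\\ 0 & \nabla f^{\top}\end{pmatrix}$, which is not bordered (the determinant changes only by a sign, so your conclusion stands).
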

    The above theorem is quite general since it applies to any sufficiently regular random
    field satisfying the stated nondegeneracy and integrability assumptions.
    The difficulty is not the applicability of Kac--Rice, but the evaluation of the resulting integral.
    Indeed, the formula involves the joint density of $(f(\bx),\nabla f(\bx))$ and 
    a conditional expectation depending on the Hessian $\nabla^2 f(\bx)$.
    For a generic random field, these quantities are typically hard to characterize explicitly.
    
    To make the computation tractable, we therefore specialize to the neural network Gaussian process (NNGP) regime \citep{lee_2018}.
    This specialization is analytically convenient because, in the infinite-width limit, 
    the random network induces a Gaussian field whose law is completely determined by its kernel.
    Consequently, the random variables $f(\bx)$, $\nabla f(\bx)$, and $\nabla^2 f(\bx)$ are 
    jointly Gaussian, so the density and conditional moments appearing in \cref{thm:ed-kac-rice} can be 
    expressed in terms of kernel derivatives.
    Thus, while Kac--Rice itself is model-agnostic, the NNGP structure makes the formula sufficiently explicit to derive concrete bounds in high dimension and a closed-form expression in dimension one.
    \begin{restatable}{theorem}{thmedkacricehigh} \label{thm:ed-kac-rice-high}
      Let $f:\R^{n} \to \R$ be the neural network Gaussian process (NNGP) induced by an infinite-width two-layer
      network with polynomial activation of degree $d \geq 1$.
      Then there exists a constant $C_{n,\bxi}>0$, independent of $d$, such that
      \[
      \mathbb{E}\Big[\#\{(\bx,\lambda)\in \mathbb{R}^n\times(\mathbb{R}\setminus\{0\}):F(\bx,\lambda)=\mathbf{0}\}\Big]
      \le C_{n,\bxi}\, d^{n/2}.
      \]
      In particular, for fixed $n$, the expected real ED degree is $O(d^{n/2})$ as $d\to\infty$.
    \end{restatable}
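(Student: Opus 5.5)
We plan to specialize \cref{thm:ed-kac-rice} to the NNGP field and then bound the resulting integral by tracking how each ingredient scales with $d$. In the infinite-width limit of a two-layer network with activation $t^d$ and i.i.d.\ Gaussian weights and biases, $f$ is a centered Gaussian field with kernel $K_d(\bx,\by)=c\,\mathbb{E}_{\bw,b}\big[(\bw^\top\bx+b)^d(\bw^\top\by+b)^d\big]$. This is an explicit polynomial in $\bx^\top\by$, $\|\bx\|^2$ and $\|\by\|^2$. In particular $f$ is almost surely a polynomial of degree $d$, so smoothness holds. Nondegeneracy of $(f(\bx),\nabla f(\bx))$ for almost every $\bx$ follows because the span of $\{(\bw^\top\bx+b)^d\}$ contains all polynomials of degree $\le d$. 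The hypothesis $\mathbb{P}(\det J_F=0\mid F=0)=0$ follows from the Gaussian conditional law of the Hessian having a density on a nontrivial affine space. Hence the formula of \cref{thm:ed-kac-rice} applies with $U=\mathbb{R}^n\times(\mathbb{R}\setminus\{0\})$, once integrability is established in the course of the bound.

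The second step is to rewrite the integral in geometric variables. Substitute $\bg=(\bxi-\bx)/\lambda$, i.e.\ parametrize by the candidate foot point $\bx$ and the gradient value $\bg$. With $\lambda=\|\bxi-\bx\|/\|\bg\|$ along a ray, the Jacobian factor $|\lambda|^{-n}$ combines with $\mathrm{d}\lambda$ into a density in $\bg$. After conditioning on $f(\bx)=0$ and $\nabla f(\bx)=\bg$, the integrand splits into three pieces:
\begin{itemize}
\item the density $p_{f(\bx)}(0)$, which integrates over $\bx$ to the expected volume of the zero set, weighted by $\|\bg\|$ as in the Crofton/Kac--Rice area formula;
\item the conditional expectation $\mathcal{I}$;
\item a conditional law of $\nabla f$ on the normal line.
\end{itemize}
Intuitively, this is the classical count $\int_{\V}\int_{\text{normal line}}|\det(I-t\,\mathrm{II})|$: the expected volume of $\V$ should scale like $d^{1/2}$, and each of the $n-1$ principal curvatures like $d^{1/2}$, giving $d^{n/2}$ in total.

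To make this rigorous, we will normalize the kernel so that $K_d(\bx,\bx)$ is of order one on compacts. Then we verify from the explicit form of $K_d$ that the conditional covariances satisfy two bounds: the gradient covariance $\partial_{\bx}\partial_{\by}K_d$ is $O(d)$, and the conditional Hessian covariance given $(f,\nabla f)$ is $O(d^2)$ entrywise, uniformly in $\bx$ after weighting by the Gaussian-type decay of the density. Next we bound $\mathcal{I}$ by expanding $\bg^\top\operatorname{adj}(I_n+\lambda\nabla^2 f)\bg$ as a sum over principal minors. A term with $k$ Hessian factors contributes at most $\|\bg\|^2|\lambda|^k\,\mathbb{E}\|\nabla^2 f\|^k$. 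By Gaussian moment bounds (Hölder plus hypercontractivity) this is at most $C\,\|\bg\|^2|\lambda|^k d^{k}$, including the conditional mean, which is linear in $(0,\bg)$. Combined with the $\bg$-density scaling $d^{-n/2}$ from a covariance of order $d$, and after integrating over $\lambda$ against that density (which effectively localizes $|\lambda|\sim d^{-1/2}$), each term is $O(d^{n/2})$. Summing over the $2^{n-1}$ minors gives the constant $C_{n,\bxi}$.

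The main obstacle is the uniform control of the conditional Hessian moments. The NNGP kernel is not stationary, so the conditional covariance of $\nabla^2 f$ given $(f,\nabla f)$ depends on $\bx$ and could degenerate or grow at large $\|\bx\|$. This would spoil both the $d$-scaling and the integrability over $\bx\in\mathbb{R}^n$. We would first try to handle this by working in projective/spherical coordinates, where the homogenized kernel behaves like a Kostlan-type kernel $(1+\bx^\top\by)^d$ whose derivatives scale exactly as $d^{1/2}$ per order. We would then bound the remaining inhomogeneous correction terms by the dominant top-degree term, absorbing all $\bxi$-dependence into $C_{n,\bxi}$.
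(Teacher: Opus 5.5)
Your outline has the same architecture as the paper's proof. Both specialize \cref{thm:ed-kac-rice}, normalize the field, use a gradient covariance of order $d$ and a Hessian covariance of order $d^2$, expand the adjugate into terms with $k\le n-1$ Hessian factors, and integrate out $\lambda$ against the Gaussian factor to get $d^{(k+1)/2}$ per term. The gap is in the step you flag as the main obstacle, and your proposed fix rests on a false premise.

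With $q(\bx)=1+\|\bx\|^2/n$ and $r(\bx,\bx')=1+\bx^\top\bx'/n$, the NNGP kernel is $K(\bx,\bx')=\sum_s c(s,d)\,q(\bx)^s q(\bx')^s r(\bx,\bx')^{d-2s}$. Its Kostlan piece $s=0$ carries relative weight $c(0,d)/(2d-1)!!=2^d/\binom{2d}{d}\approx\sqrt{\pi d}\,2^{-d}$. The weights concentrate near $s\approx d/4$, i.e.\ an effective degree of about $d/2$. So the top-degree term does not dominate, and comparing the other terms to it can at best give upper bounds.

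You also need a matching \emph{lower} bound on the gradient covariance. After the $\lambda$-integration, the term with $k$ Hessian factors scales like $\alpha^{-(k-1)/2}d^{k}$, where $\alpha$ is the scale of $\operatorname{Cov}(\nabla g)$. An $O(d)$ upper bound alone therefore gives nothing for $k\ge 2$. Separately, no constant rescaling makes $K(\bx,\bx)=(2d-1)!!\,q(\bx)^d$ of order one uniformly in $d$. So ``order one on compacts'' has to mean pointwise normalization, and you must justify that it preserves the count.

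The missing idea is the exact pointwise normalization $g=f/\sqrt{K(\bx,\bx)}$. It is legitimate because $\nabla g=\nabla f/\sqrt{K(\bx,\bx)}$ on $\{f=0\}$, so critical pairs correspond via $\lambda_g=\lambda_f\sqrt{K(\bx,\bx)}$. Then $\mathbb{E}[g(\bx)g(\bx')]=\Phi(\rho(\bx,\bx'))$, where:
\begin{itemize}
\item $\rho$ is the cosine between the lifted points $(1,\bx/\sqrt n)$ and $(1,\bx'/\sqrt n)$. These are your spherical coordinates, but without a change of variables, so the Euclidean distance to $\bxi$ is untouched.
\item $\Phi(t)=\sum_s c(s,d)\,t^{d-2s}/(2d-1)!!$ is a convex combination of monomials.
\end{itemize}
Because the first derivatives of $\rho$ vanish on the diagonal, $g(\bx)$ is independent of $\nabla g(\bx)$. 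Every covariance then becomes $\Phi^{(j)}(1)$ times an explicit $d$-free tensor in $\bx$:
\begin{itemize}
\item $\operatorname{Cov}(\nabla g(\bx))=\alpha_d\bigl(\tfrac{1}{nq}I_n-\tfrac{1}{n^2q^2}\bx\bx^\top\bigr)$ with $\alpha_d=\Phi'(1)=d^2/(2d-1)\in(d/2,d]$. The lower bound comes from the identity $\sum_s s\,c(s,d)=d(d-1)(2d-3)!!/2$; the upper bounds $\Phi^{(r)}(1)\le(d)_r$ come from the convex-combination structure.
\item The Hessian covariance, and hence the conditional one, is $\lesssim d^2q(\bx)^{-2}$.
\item The conditional Hessian mean given $\nabla g(\bx)=\boldsymbol{v}$ (your $\boldsymbol{g}$) is exactly $-(\bx\boldsymbol{v}^\top+\boldsymbol{v}\bx^\top)/(nq(\bx))$, with no $d$.
\end{itemize}

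The last point also corrects your claim that the mean is absorbed into $C\|\boldsymbol{v}\|^2|\lambda|^k d^k$. That fails pointwise, because the mean grows like $\|\boldsymbol{v}\|$. It has to be integrated separately, where it contributes only $O(d^{1/2})$.

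Finally, the explicit $q(\bx)$-dependence is what resolves your worry about large $\|\bx\|$. After the $\lambda$-integration, the $\bx$-integrands are $O(\|\bx-\bxi\|^{m-n+1})$ near $\bxi$ and $O(\|\bx\|^{-2n+m})$ or better at infinity. They are therefore integrable for every $m\le n-1$.
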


    \begin{restatable}{proposition}{coredkacrice} \label{prop:ed-kac-rice} 
      Let $f:\R \to \R$ be the neural network Gaussian process (NNGP) induced by an infinite-width two-layer 
      network with polynomial activation of degree $d \geq 1$.
      Then the expected real ED degree~is
      \[
        \mathbb{E}\Big[\#\{\,(x,\lambda) \in \R \times (\R \setminus \{0\}) : F(x,\lambda)= \mathbf{0}\,\}\Big] = \frac{d}{\sqrt{2d-1}}.
      \]
    \end{restatable}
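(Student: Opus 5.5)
The plan is to reduce the count to the expected number of real zeros of a one-dimensional Gaussian field, and then evaluate that count with the classical Kac--Rice (Edelman--Kostlan) formula after a change of variables that makes the field stationary.

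\emph{Reduction.} For $n=1$ the system $F(x,\lambda)=\mathbf 0$ reads $f(x)=0$ and $x-\xi+\lambda f'(x)=0$. Almost surely every real zero $x$ of $f$ satisfies $f'(x)\neq 0$ (by nondegeneracy of the Gaussian field, as in the hypotheses of \cref{thm:ed-kac-rice}) and $x\neq\xi$. Hence each real zero gives exactly one admissible $\lambda=(\xi-x)/f'(x)\neq 0$. The expected real ED degree therefore equals $\mathbb E\,\#\{x\in\mathbb R: f(x)=0\}$, independent of $\xi$. Alternatively, one can specialize \cref{thm:ed-kac-rice} and integrate out $\lambda$, which gives the same reduction.

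\emph{Kernel.} In the NNGP limit (first-layer weight and bias i.i.d.\ standard Gaussian, with the output layer scaled appropriately), $f$ is a centered Gaussian process with kernel
\[
K(x,y)=\mathbb E\bigl[(wx+b)^d(wy+b)^d\bigr].
\]
The pair $(wx+b,\,wy+b)$ is Gaussian with variances $1+x^2$ and $1+y^2$ and correlation $\rho=\frac{1+xy}{\sqrt{(1+x^2)(1+y^2)}}$. By homogeneity,
\[
K(x,y)=\bigl((1+x^2)(1+y^2)\bigr)^{d/2}\,\mathbb E[u^{2d}]\;g(\rho),
\qquad g(\rho)=\frac{\mathbb E[u^dv^d]}{\mathbb E[u^{2d}]},
\]
where $(u,v)$ are standard Gaussians with correlation $\rho$. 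Zeros are unchanged by multiplying $f$ by the positive factor $(1+x^2)^{-d/2}$. Substituting $x=\tan t$ with $t\in(-\pi/2,\pi/2)$ gives $\rho=\cos(t-s)$, so the rescaled process $\tilde f(t)$ is stationary with covariance $g(\cos(t-s))$.

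\emph{Kac--Rice count.} For a stationary Gaussian process with unit variance and covariance $r(\tau)$, the expected zero density is $\frac1\pi\sqrt{-r''(0)}$. Here $r(\tau)=g(\cos\tau)$, so $-r''(0)=g'(1)$. Integrating over an interval of length $\pi$ gives
\[
\mathbb E\#\{\text{zeros}\}=\sqrt{g'(1)}.
\]

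\emph{Evaluating $g'(1)$.} This is the main computational step. I would use the Gaussian identity $\frac{d}{d\rho}\mathbb E[\phi(u)\phi(v)]=\mathbb E[\phi'(u)\phi'(v)]$, which follows from Hermite expansion or Price's theorem. At $\rho=1$ it gives
\[
g'(1)=\frac{d^2\,\mathbb E[u^{2d-2}]}{\mathbb E[u^{2d}]}=\frac{d^2}{2d-1},
\]
since $\mathbb E[u^{2d}]=(2d-1)\,\mathbb E[u^{2d-2}]$. Therefore the expected real ED degree is $d/\sqrt{2d-1}$.

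As sanity checks, $d=1$ gives $1$, and $d=2$ gives $g=(1+2\rho^2)/3$, so $g'(1)=4/3$, in agreement with the formula. The points needing care are the justification of the NNGP normalization assumptions and the almost-sure simplicity of zeros. Both are routine for this smooth, nondegenerate Gaussian process, because $g'(1)>0$ and the variance is positive.
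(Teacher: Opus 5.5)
Your proof is correct. The reduction to counting real zeros of $f$ is the same as the paper's; the two arguments differ in how that count is evaluated.

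The paper applies the one-dimensional Kac--Rice formula directly to the non-stationary field $f$. It writes the zero density as $\sqrt{\det K_F(x,x)}/(\pi K(x,x))$ and takes the diagonal derivatives of $K$ from \cref{prop:poly-nngp-kernel}, which relies on the Isserlis coefficients $c(s,d)$ and the sums in \cref{rmk:identities}. It then computes $\det K_F=d^2(2d-1)((2d-3)!!)^2(1+x^2)^{2d-2}$, which gives the density $\frac{d}{\sqrt{2d-1}}\cdot\frac{1}{\pi(1+x^2)}$.

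You instead normalize by the standard deviation and substitute $x=\tan t$. This turns $f$ into a stationary random trigonometric polynomial with covariance $g(\cos\tau)$. The classical Rice formula then gives a constant density $\sqrt{g'(1)}/\pi$ on an interval of length $\pi$. The only model-specific quantity left is $g'(1)$, which you obtain in one line from Price's identity, with no coefficient sums or determinant algebra.

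What your route buys:
\begin{itemize}
\item It explains why the paper's density has Cauchy shape: it is the pushforward of the uniform density under $\tan$.
\item It matches the normalization trick the paper itself uses in the proof of \cref{thm:ed-kac-rice-high}. There your $g$ is the paper's $\Phi$, and your $g'(1)=d^2/(2d-1)$ is its $\alpha_d$.
\end{itemize}

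What the paper's route buys: it is more mechanical, but it instantiates the general Gaussian Kac--Rice density directly. It does not depend on finding a stationarizing change of variables, which is what you would need to fall back on when no such reparametrization exists.
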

    
    By \cref{thm:ed-kac-rice-high}, for fixed $n$ the expected real ED degree is $O(d^{n/2})$, whereas the generic complex ED degree scales as $d^n$ by \cref{prop:generic-ed-degree}.
    Hence the real critical-point count grows like the square root of the complex algebraic count.
    This is reminiscent of the classical Kostlan--Shub--Smale model for random
    homogeneous polynomial systems: for the orthogonally invariant Gaussian
    ensemble of degrees $d_1,\dots,d_n$, the expected number of real projective
    roots is $\sqrt{d_1\cdots d_n}$, i.e., the square root of the generic complex
    B\'ezout count $d_1\cdots d_n$ \citep{kostlan_1993,edelmanz_1995,kostlan_2002}.
    Although the ED critical system arising from NNGPs is not a standard Kostlan system,
    it exhibits the same qualitative phenomenon: most complex critical points do
    not materialize over the reals, but the expected real count still grows
    polynomially with degree.
    In dimension one, \cref{prop:ed-kac-rice} gives the exact asymptotic
    $\frac{d}{\sqrt{2d-1}} \sim \sqrt{d/2}$.

\section{Discriminants}
  In this section, we analyze the singular loci that govern transitions in
  verification complexity.
  We characterize the partition of the input space into regions of constant real
  critical point counts via the ED discriminant, and the stratification of the
  parameter space according to the algebraic degree of the decision boundary via
  the parameter discriminant. 
  More precisely, we answer the following questions:
  \begin{enumerate}[leftmargin=2em, labelsep=0.5em] 
    \item[(i)] For fixed parameters $\bt$, which test data points $\bxi$ yield
      fewer than the generic number of complex solutions?

    \item[(ii)] How does the ED degree of $\mathcal{V}_{c,c'}^{\bt}$ change as the
      network parameters $\bt$ vary?
  \end{enumerate}

  \subsection{ED Discriminant}\label{sec:ed-disc} 
    In this subsection, we study the locus of data points that have fewer than the generic number of critical points when minimizing the distance to the decision boundary.
    For such data points, two distinct complex critical points collapse into a single
    critical point of higher multiplicity.
    This geometric degeneration marks the boundary where the number of real
    critical points changes; crossing this locus corresponds to the bifurcation event
    in the optimization landscape.
    We formalize this as the Euclidean distance discriminant.
    \begin{definition}[Euclidean Distance Discriminant]
      Let $\mathcal{V}\subset \mathbb{C}^{n}$ be an algebraic variety. The \textit{Euclidean
      Distance (ED) discriminant} of $\mathcal{V}$, denoted by
      $\Sigma(\mathcal{V})$, is the Zariski closure of all data points $\boldsymbol
      {u}\in \mathbb{C}^{n}$ for which the critical points of the squared distance
      function $d_{\boldsymbol{u}}(\bx)$ on $\mathcal{V}$ are not distinct. That
      is, $\Sigma(\mathcal{V})$ characterizes the data points for which the system 
      has roots of multiplicity greater than one.
    \end{definition}
    The notion of ED discriminant was introduced in the context of algebraic optimization by \citet{draisma_2016}, and is also known as the \textit{evolute}.
    It governs the stability of the optimization problem over the real numbers.
    In contrast to the number of complex critical points, which is generically constant, the number of \textit{real} critical points is constant only on the connected components of $\mathbb{R}^{n}\setminus \Sigma(\mathcal{V})$. 

    \begin{example}
      \label{ex:ed-degree-4} Consider a quadratic polynomial neural network with one
      hidden layer and architecture $(2,2,2)$.
      Fix parameter values
      $\bt = (W_{1}, \boldsymbol{b}_{1}, W_{2}, \boldsymbol{b}_{2})$ with:
      \[
        W_{1}= \left(
        \begin{smallmatrix}
          1 & 2 \\
          3 & 1
        \end{smallmatrix}\right), \boldsymbol{b}_{1}= \left(
        \begin{smallmatrix}
          0 \\
          1
        \end{smallmatrix}\right), W_{2}= \left(
        \begin{smallmatrix}
          2 & 1 \\
          1 & 2
        \end{smallmatrix}\right), \boldsymbol{b}_{2}= \left(
        \begin{smallmatrix}
          2 \\
          1
        \end{smallmatrix}\right),
      \]
      so the network output function $f_{\bt}$ becomes
      \begin{align*}
        (x_{1}, x_{2}) \mapsto \left[ \begin{matrix}f_{\bt,c}\\ f_{\bt,c'}\end{matrix} \right] = \left[ \begin{matrix}11\,x_{1}^{2}+14\,x_{1}\,x_{2}+9\,x_{2}^{2}+6\,x_{1}+2\,x_{2}+3 \\ 19\,x_{1}^{2}+16\,x_{1}\,x_{2}+6\,x_{2}^{2}+12\,x_{1}+4\,x_{2}+3\end{matrix}\right].
      \end{align*}
      The decision boundary defined by $B_{\bt}(\bx) = f_{\bt,c}(\bx )- f_{\bt,c'}( \bx)$ is a hyperbola. 
      A computation in \texttt{Macaulay2} \citep{M2} shows that $\mathcal{V}_{c,c'}^{\bt}$ has ED degree $4$. 
      The ED discriminant $\Sigma(\mathcal{V}_{c,c'}^{\bt})$ is the variety in~$\mathbb{C}^{2}$ defined as the set of zeros of a degree-$6$ polynomial $D \subseteq \mathbb{C}[u_{1}, u_{2}]$ in the data variables
      $u_{1}$ and $u_{2}$:
      \begin{align*}
        & D(u_{1},u_{2}) = 27\,u_{1}^{6}+54\,u_{1}^{5}u_{2}-180\,u_{1}^{4}u_{2}^{2}-280\,u_{1}^{3}u_{2}^{3}+ 480\,u_{1}^{2}u_{2}^{4}+384\,u_{1}\,u_{2}^{5 }-512\,u_{2}^{6}+54\,u_{1}^{5}         \\
        & +180\,u_{1}^{4}u_{2}-120\,u_{1}^{3}u_{2}^{2}-720\,u_{1}^{2}u_{2}^{3}+768\,u_{2}^{5}-126\,u_{1}^{4}+582 \,u_{1}^{3}u_{2}-2409\,u_{1}^{2}u_{2}^{2}-402\,u_{1}\,u_{2}^{3}-1371\,u_{2}^{4} \\
        & -334\,u_{1}^{3}+1566\,u_{1}^{2}u_{2}-1878\,u_{1}\,u_{2}^{2}+808u_{2}^{3}-132\,u_{1}^{2}+1152\,u_{1}\,u_{2}-1218\,u_{2}^{2}-12\,u_{1}+516\,u_{2}-152.
      \end{align*}

      For all points $(u_{1},u_{2})\in\mathbb{R}^{2}$ with $D(u_{1},u_{2})\neq 0$,
      the number of distinct critical points of $d_{\boldsymbol{u}}(\bx)$ on $\mathcal{V}
      _{c,c'}^{\bt}$ is $4$, which can be real or complex, matching the ED degree.

      On the other hand, for the test point $\bu = (-2, 0)$ on the ED discriminant
      $D(\bu) = 0$, the ideal $C^{\bt}_{c,c'}$ with respect to $\bu$ has a primary
      decomposition:
      \(
        \langle 3x_{1}+ x_{2}+ 2,\; 25x_{2}^{2}- 20x_{2}+ 4\rangle \;\cap \langle 3 x_{1}- 4x_{2}+ 1,\; 25x_{2}^{2}+ 4x_{2}- 2\rangle.
      \)
      The first ideal has one real solution $(-4/5,\, 2/5)$ with multiplicity $2$. 
      The second ideal has two real solutions. 
      Geometrically, this marks a bifurcation in the optimization landscape: crossing the discriminant
      at $\bu$ changes the number of real critical points, as shown in \cref{fig:discriminant}.
      \begin{figure}[ht]
        \centering
        \begin{tikzpicture}
          \node at (0,0)
                      {\includegraphics[width=0.3\textwidth]{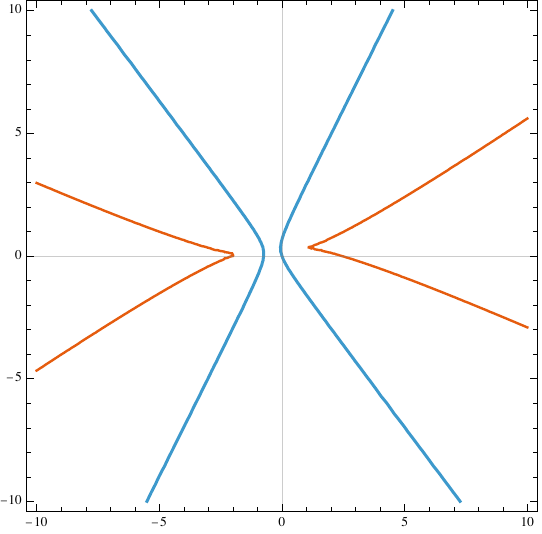}};
                  \node[anchor=west, inner sep=0, fill=white]
            at
            (-1.9,1.8)
            {\footnotesize{\textcolor{CornflowerBlue}{$\bullet$\;}decision boundary}};
          \node[anchor=west, inner sep=0, fill=white]
            at
            (-1.9,1.5)
            {\footnotesize{\textcolor{Apricot}{$\bullet$\;}discriminant}};
                  \node at
                      (5,0)
                      { \includegraphics[width=0.3\textwidth]{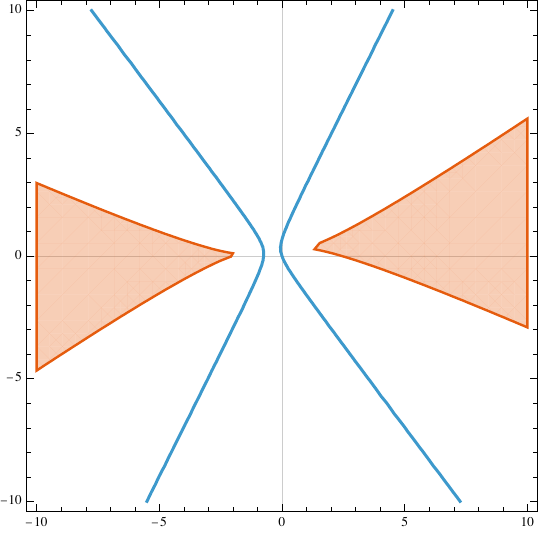}};
                      \node[anchor=west, inner sep=0, fill=white]
            at
            (3.1,1.8)
            {\footnotesize{$\#$ real critical points}};
                  \fill[white] (3,-1) rectangle (3.035,1);
                  \fill[white] (7.15,-.6) rectangle (7.2,1.4);
                  \node at (3.5,0) {\small $4$};
                  \node at (5,.75) {\small $2$};
                  \node at (6.5,0.25) {\small $4$};
                  \node[] (A) at (3.75,1) {\small $3$};
                  \node[] (B) at (4,.22) {};
                  \draw[-] (A) -- (B);
              \end{tikzpicture}
      \caption{{} Left: The decision boundary is shown in blue and the corresponding
        ED discriminant curve is shown in orange. Right: The number of distinct real
        critical points of the distance minimization problem on different regions of
        the data space separated by the discriminant. The shaded region is where
        $D(u_{1},u_{2})>0$.}
        \label{fig:discriminant}
      \end{figure}

    \end{example}
    \begin{remark}
        The plot on the left shows the decision boundary in blue and the ED
        discriminant in orange. The plot on the right depicts the regions where $D >0$
        (shaded) and $D<0$ (unshaded). Points in the unshaded (white) region, such as $(
        u_{1},u_{2})=(0,0)$, have two real and two complex critical points, whereas points
        in the shaded region, such as $(10,0)$ and $(-10,0)$, have four real critical points.
        Along the separating curve $D=0$ two complex critical points merge into
        a single real one; data points on this curve have three real
        critical points.
    \end{remark}
    To generalize the idea in the above example,
    fix some parameters $\bt$ and let $B_{\bt}(\bx) = f_{\theta,c}(\bx) - f_{\theta,c'}
    (\bx)$ as before. The degree of the corresponding hypersurface can be
    arbitrarily large. In this case, we can compute the ED discriminant
    symbolically using Algorithm~\ref{alg:ed-discriminant}.

    \begin{algorithm}
      [tb]
      \caption{Symbolic computation of the ED discriminant}
      \label{alg:ed-discriminant}
      \begin{algorithmic}
        [1] \Require Network parameters $\bt$, classes $(c,c')$ \Ensure Ideal of
        the ED discriminant $\Sigma(\mathcal{V}^{\bt}_{c,c'})$

        \State Introduce a new variable $\lambda$ and work over $\mathbb{Q}[\bx,\bu
        ,\lambda]$.

        \State Define
        \[
          J^{\bt}_{c,c'}= I^{\bt}_{c,c'}+ \langle x_{i}- u_{i}- \lambda \tfrac{\partial B}
          {\partial x_i}: i \in [n] \rangle .
        \]

        \State Let $\operatorname{Jac}^{\bt}_{c,c'}$ be the $n \times n$ Jacobian
        matrix of $J^{\bt}_{c,c'}$.

        \State Define
        \[
          C^{\bt}_{c,c'}= \Big( (I + \langle \det \operatorname{Jac}^{\bt}_{c,c'}\rangle
          ) : \langle n\text{-minors of }\operatorname{Jac}^{\bt}_{c,c'}\rangle^{\infty}
          \Big).
        \]

        \State Compute the elimination ideal
        \[
          E = C^{\bt}_{c,c'}\cap \mathbb{Q}[u_{1},u_{2}].
        \]

        \State \Return $D = \sqrt{E}$.
      \end{algorithmic}
    \end{algorithm}

    \textbf{Intuition}: Step 1 introduces the Lagrange multiplier $\lambda$ to express the distance-minimization
    conditions algebraically. Step 2 encodes the critical-point equations $\bx - \bu
    = \lambda \nabla_{\bx}B$ and $B_{\bt}(\bx)=0$. Step 3 forms the Jacobian of
    the critical-point system with respect to the variables $(\bx,\lambda)$ to
    detect when the correspondence between data points $\bu$ and critical points
    $(\bx,\lambda)$ ceases to be locally identifiable. Each solution $(\bx ,\lambda
    )$ represents a critical point of the distance function for the given $\bu$.
    When the Jacobian is invertible, these solutions depend smoothly on $\bu$,
    which means that each critical point moves continuously as $\bu$ varies. When the
    Jacobian becomes singular, two or more critical points collide, or a critical point
    loses local uniqueness. These singular values of $\bu$ mark the onset of
    degeneracy in the optimization landscape and define the ED
    discriminant. Step 4 enforces this degeneracy by setting the determinant of the
    Jacobian to zero and removes trivial singularities via saturation. Step 5
    eliminates $\bx$ and $\lambda$, leaving only the condition on $\bu$. The
    algorithm returns the radical to remove extra embedded components that may
    have arisen from elimination. The returned ideal defines the locus where the critical
    points merge and the number of distinct critical points drops below the ED degree.

  \begin{example}
      \label{ex:ed-degree-2} Consider again a quadratic neural network with one hidden
      layer and the architecture $(2,2,2)$. Slightly tweak the parameter values
      $\theta = (W_{1}, \boldsymbol{b}_{1}, W_{2}, \boldsymbol{b}_{2})$ from the
      previous example:
      \[
        W_{1}= \left(
        \begin{smallmatrix}
          1 & 2 \\
          3 & 1
        \end{smallmatrix}\right), \boldsymbol{b}_{1}= \left(
        \begin{smallmatrix}
          1 \\
          2
        \end{smallmatrix}\right), W_{2}= \left(
        \begin{smallmatrix}
          2 & 1 \\
          1 & 2
        \end{smallmatrix}\right), \boldsymbol{b}_{2}= \left(
        \begin{smallmatrix}
          1 \\
          1
        \end{smallmatrix}\right),
      \]
      A computation shows that in this case $\mathcal{V}_{c,c'}^{\bt}$ has ED degree~2.
      This is not surprising, since the quadratic form $B_{\bt}(\bx)$ is
      degenerate and factors into two linear terms:
      \[
        B_{\bt}(x_{1},x_{2}) = (x_{2}- 2x_{1}- 1)(4x_{1}+ 3x_{2}+ 3).
      \]
      The ED discriminant is given by the following linear ideal
      \[
        D(u_{1}, u_{2}) = \langle 5u_{1}+ 3,\; 5u_{2}+ 1\rangle,
      \]
      whose variety is a single point $\bu^{*}= (-3/5, -1/5)$. At any other point
      $\bu$, the ideal $C_{c,c'}^{\bt}$ has two distinct critical real points.
      However, at $\bu^{*}$ they collide, producing a single critical point. This critical
      point is $\bu^{*}$ itself, since it is on the model, i.e.
      $B_{\bt}(\bu^{*}) = 0$.
    \end{example}
    The phenomenon observed in Example \ref{ex:ed-degree-2} generalizes to all reducible quadratic
    hypersurfaces of ED degree~2.

    \begin{restatable}
      {proposition} {propquadraticeddegree}\label{prop:quadratic-ed-degree} Let
      $\mathcal{V}= \{\bx : \ell_{1}(\bx)\ell_{2}(\bx) = 0\}$ be a reducible
      quadratic hypersurface consisting of two non-parallel hyperplanes. The ED degree
      of $\mathcal{V}$ is 2. The two critical points collide if and only if the data
      $\bu$ lies on the intersection of the two hyperplanes $\ell_{1}(\bu)=\ell_{2}
      (\bu)=0$. Consequently, the ED discriminant is contained in the model~$\mathcal{V}$.
    \end{restatable}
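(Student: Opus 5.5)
We will use the fact that the ED degree is additive over the irreducible components of a variety. The smooth locus of $\mathcal{V}$ is the union of the two hyperplanes $H_1=\{\ell_1=0\}$ and $H_2=\{\ell_2=0\}$ with the intersection $H_1\cap H_2$ removed. At a point of $H_1\setminus H_2$ the tangent space is $T_{\bx}\mathcal{V}=\ker \ba_1^\top$, where $\ell_i(\bx)=\ba_i^\top\bx+b_i$. The critical condition $\bx-\bu\perp T_{\bx}\mathcal{V}$ therefore says $\bx-\bu\in\mathrm{span}(\ba_1)$. Together with $\ell_1(\bx)=0$ this has the unique solution
\[
\bx_1=\bu-\frac{\ell_1(\bu)}{\|\ba_1\|^2}\,\ba_1,
\]
the orthogonal projection of $\bu$ onto $H_1$. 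Over $\mathbb{C}$ we should note that $\|\ba_1\|^2=\ba_1^\top\ba_1\neq 0$, since $\ba_1$ is real and nonzero. Similarly $H_2$ contributes $\bx_2=\bu-\ell_2(\bu)\ba_2/\|\ba_2\|^2$.

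The only subtlety is that $\bx_i$ is counted only if it lies in the smooth locus, i.e.\ $\bx_1\notin H_2$. The condition $\ell_2(\bx_1)=0$ is one nontrivial affine-linear equation in $\bu$:
\[
\ell_2(\bu)-\frac{\ell_1(\bu)\,\ba_1^\top\ba_2}{\|\ba_1\|^2}=0.
\]
It is nontrivial because $\ba_1,\ba_2$ are linearly independent. Hence for generic $\bu$ both $\bx_1$ and $\bx_2$ are smooth critical points. Since the singular locus has been removed, there are no other critical points, and the ED degree is exactly $2$. In terms of Algorithm~\ref{alg:ed-degree}, the saturation step removes the linear space $H_1\cap H_2$, where $\nabla(\ell_1\ell_2)=\ell_2\ba_1+\ell_1\ba_2$ vanishes.

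Next we characterize collisions. We have $\bx_1=\bx_2$ if and only if this common point lies in $H_1\cap H_2$. If $\ell_1(\bu)=\ell_2(\bu)=0$, both projections equal $\bu$, so the two critical points coincide. Conversely, suppose $\bx_1=\bx_2$. Then
\[
\frac{\ell_1(\bu)}{\|\ba_1\|^2}\,\ba_1=\frac{\ell_2(\bu)}{\|\ba_2\|^2}\,\ba_2,
\]
and linear independence forces $\ell_1(\bu)=\ell_2(\bu)=0$. The collision locus is therefore exactly $H_1\cap H_2$, which is contained in $\mathcal{V}$.

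It remains to identify this locus with the ED discriminant in the multiplicity sense of the definition, since the definition concerns multiple roots of the Lagrange system. The main obstacle is that roots of this system can also become non-simple or escape when $\bx_i$ approaches $H_1\cap H_2$ while $\bu$ lies off it. For that, the plan is to write the system $\bx-\bu=\lambda(\ell_2\ba_1+\ell_1\ba_2)$, $\ell_1\ell_2=0$, and check directly that on the $H_1$-branch the Jacobian in $(\bx,\lambda)$ is invertible whenever $\ell_2(\bx_1)\neq0$. One can verify this in coordinates adapted to $\ba_1,\ba_2$, where the system splits into two linear blocks. The residual degenerations occur only along the hyperplanes given by $\ell_2(\bx_1)=0$ or $\ell_1(\bx_2)=0$. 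Here one must argue, as is consistent with Example~\ref{ex:ed-degree-2}, that these correspond to critical points leaving the smooth locus rather than genuine collisions, so they are not counted. After this the discriminant reduces to the codimension-two set $H_1\cap H_2\subseteq\mathcal{V}$.
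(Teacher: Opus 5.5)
Your proposal is correct and follows essentially the paper's argument: the only critical points are the orthogonal projections $\bx_1,\bx_2$ of $\bu$ onto $H_1,H_2$, and they coincide only when $\bu-\bx^*$ is normal to both non-parallel hyperplanes. This forces $\bu\in H_1\cap H_2\subseteq\mathcal{V}$, and your linear-independence computation is the same step written in coordinates.

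You are more careful than the paper, since you check that the projections avoid $H_1\cap H_2$ for generic $\bu$ and that $\boldsymbol{a}_i^\top\boldsymbol{a}_i\neq 0$ over $\mathbb{C}$. The worry in your last paragraph also closes at once. The Jacobian of the Lagrange system at the $H_1$-root is a nonzero multiple of $\ell_2(\bx_1)^2$. On $\{\ell_2(\bx_1)=0\}\setminus(H_1\cap H_2)$ that root's multiplier escapes to $\lambda=\infty$ rather than doubling. So under the paper's definition, which counts coinciding critical points, those hyperplanes do not enter the discriminant.
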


  \subsection{Parameter Discriminant}
  \label{sec:par-disc}
  In this section, we investigate how the complexity of the verification problem
  varies with the model parameters.
  Each parameter choice yields a decision boundary with an associated ED degree.
  The \textit{parameter discriminant} is a locus in the parameter space where
  the ED degree drops below the generic value.
  As a result, this discriminant stratifies the parameter space into regions of
  constant ED degree.

  We fix a shallow polynomial network with activation degree $d=2$ and
  architecture $(n,h,k)$, and denote the polynomial defining the decision boundary
  as $B_{\bt}( \bx)$.
  Then $B_{\bt}(\bx)$ defines a quadric hypersurface in~$\mathbb{C}^{n}$:
  \[
    B_{\bt}(\bx) = \bx^{\top}A_{\bt}\bx + b_{\bt}^{\top}\bx + c_{\bt}
  \]
  for some symmetric matrix $A_{\bt}\in\mathbb{R}^{n\times n}$, vector $b_{\bt}\in
  \mathbb{R}^{n}$, and scalar $c_{\bt}\in \mathbb{R}$ whose entries depend on
  $\bt$.
  Let $r$ be the number of distinct non-zero eigenvalues of $A_{\bt}$ and define the augmented matrix
  \[
    M_{\bt}=
    \begin{pmatrix}
      A_{\bt}                   & \frac{1}{2}b_{\bt} \\
      \frac{1}{2}b_{\bt}^{\top} & c_{\bt}
    \end{pmatrix}.
  \]
  \begin{restatable}
    {theorem}{eddegreeconicsnvariables} \label{thm:ed-degree-conics-n-variables}
    The ED degree of the decision boundary
    $\text{EDdegree}(\mathcal{V}_{c,c'}^{\bt})$ is given by
    \[
      \begin{cases}
        2r + 1 & \text{if }\text{rank}(M_{\bt}) = \text{rank}(A_{\bt}) + 2 \\
        2r     & \text{if }\text{rank}(M_{\bt}) = \text{rank}(A_{\bt}) + 1 \\
        2r - 2 & \text{if }\text{rank}(M_{\bt}) = \text{rank}(A_{\bt}).
      \end{cases}
    \]
  \end{restatable}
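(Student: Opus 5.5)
The plan is to reduce $B_{\bt}$ to a normal form using Euclidean isometries and then count critical points through a univariate equation in the Lagrange multiplier $\lambda$. Orthogonal changes of coordinates and translations preserve the ED degree, since they preserve $d_{\bu}$ up to a change of the data point $\bu$. So we first diagonalize $A_{\bt}$ orthogonally, with nonzero eigenvalues $\alpha_1,\dots,\alpha_r$ (distinct) of multiplicities $m_1,\dots,m_r$. Next we translate to complete the square in every direction where $A_{\bt}$ is nonzero. Finally we rotate inside $\ker A_{\bt}$ so that any surviving linear term points along a single axis $y_n$.

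Since $\mathrm{rank}(M_{\bt})$ is invariant under these operations, the three cases of the statement become three normal forms:
\begin{itemize}
\item rank$+2$ gives the paraboloid $\sum_i a_i y_i^2 + \beta y_n = 0$ with $\beta\neq 0$, after absorbing the constant by a translation in $y_n$;
\item rank$+1$ gives the central quadric $\sum_i a_i y_i^2 = c$ with $c\neq 0$;
\item rank$+0$ gives the cone $\sum_i a_i y_i^2 = 0$.
\end{itemize}
Here each $a_i$ is one of the $\alpha_k$.

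Next we solve the critical equations $\by - \bu = \lambda \nabla B(\by)$ coordinatewise.
\begin{itemize}
\item In an $\alpha_k$-eigendirection, $y_i = u_i/(1-2\lambda\alpha_k)$.
\item In a kernel direction other than $y_n$, $y_j = u_j$.
\item In the paraboloid case, $y_n = u_n + \lambda\beta$.
\end{itemize}
Setting $U_k := \sum_{i \in \text{eigenspace }k} u_i^2$ and substituting into $B=0$ gives a single rational equation in $\lambda$:
\[
  \sum_{k=1}^{r} \frac{\alpha_k U_k}{(1-2\lambda\alpha_k)^2} = c
  \quad\text{or}\quad
  \sum_{k=1}^{r} \frac{\alpha_k U_k}{(1-2\lambda\alpha_k)^2} + \beta(u_n+\lambda\beta) = 0 .
\]
Multiplying by $\prod_k (1-2\lambda\alpha_k)^2$ gives a polynomial in $\lambda$, and we read off its degree in each case.
\begin{itemize}
\item Paraboloid: the degree is $2r+1$, with leading coefficient $\pm\beta^2\prod_k(2\alpha_k)^2 \neq 0$.
\item Central quadric: the degree is $2r$, with leading coefficient $-c\prod_k(2\alpha_k)^2\neq 0$.
\item Cone: the degree-$2r$ and degree-$(2r-1)$ terms cancel identically. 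The degree is $2r-2$, with leading coefficient $\sum_k \alpha_k U_k \prod_{j\neq k}(2\alpha_j)^2$. This is a nonzero linear form in the $U_k$, so it is nonzero for generic $\bu$.
\end{itemize}
Grouping by eigenspaces is exactly what makes the count depend on the number $r$ of distinct eigenvalues rather than on $\mathrm{rank}(A_{\bt})$.

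It remains to show that, for generic $\bu$, roots of this polynomial are in bijection with critical points on $\mathcal{V}_{\mathrm{reg}}$. This requires four checks.
\begin{itemize}
\item \emph{No poles.} No root satisfies $1-2\lambda\alpha_k=0$. At such a $\lambda$ the cleared polynomial reduces to a nonzero multiple of $\alpha_k U_k$, which is nonzero for generic $\bu$.
\item \emph{Injectivity.} Distinct roots give distinct points. In the paraboloid case this follows from $y_n = u_n+\lambda\beta$. Otherwise, $\lambda$ is recovered from any coordinate $y_i\neq 0$ via $y_i-u_i = 2\lambda a_i y_i$.
\item \emph{Regularity.} No root lands on the singular locus. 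The only singular point is the vertex $\by=0$ in the cone case. Reaching it would force all $u_i$ in the nonzero eigendirections to vanish (and also the kernel coordinates), which is non-generic.
\item \emph{Spurious root.} $\lambda=0$ would force $\bu\in\mathcal{V}$, which is again non-generic.
\end{itemize}

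The step I expect to be the main obstacle is showing that the roots are simple for generic $\bu$, i.e.\ that the discriminant of the cleared polynomial is not identically zero as a function of $(U_1,\dots,U_r,u_n)$. My first approach is an interpolation argument. Fix prescribed distinct values $\lambda_1,\dots,\lambda_N$ (with $N$ the expected degree) away from the poles. The conditions that each $\lambda_j$ is a root are linear in $(U_1,\dots,U_r)$, plus $u_n$ in the paraboloid case. I would then show that these conditions admit a solution with all $U_k \neq 0$. If that works, some choice of data yields $N$ distinct roots, so the discriminant is a nonzero polynomial, and simplicity holds on a Zariski-open set. If the parameter count falls short, I would fall back on a degeneration argument. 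Let one $U_k$ dominate, so that roots cluster in pairs near each pole $1/(2\alpha_k)$ and separate at rate $\sqrt{U_k}$. This exhibits distinct roots directly.
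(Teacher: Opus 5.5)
Your proposal is correct and follows essentially the paper's route: diagonalize $A_{\bt}$, solve the Lagrange equations coordinatewise for $\bx$ in terms of $\lambda$, clear the degree-$2r$ denominator coming from the $r$ distinct nonzero eigenvalues, and read off the degree of the resulting polynomial in each of the three rank cases; your translation to paraboloid/central/cone normal forms is exactly the paper's Schur-complement split of $M_{\bt}$ into $\tilde{b}_{\mathrm{ker}}$ and $S$. You are more careful than the paper, which only asserts $E(\lambda)=O(\lambda^{-2})$ in the cone case and never checks poles, regularity or simplicity of roots; for simplicity, drop the interpolation idea, since it is overdetermined (e.g.\ $2r$ linear conditions on the $r$ unknowns $U_k$ in the central case), and use your degeneration instead, which works by letting all $U_k\to 0$ in the central and paraboloid cases and, in the cone case, keeping one $U_k$ fixed while the others tend to $0$.
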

  As a corollary, we characterize when the ED degree attains the generic value
  described in Proposition \ref{prop:generic-ed-degree}.
  \begin{restatable}
    {corollary}{conicgeneric} \label{cor:conic-generic} The ED degree
    $\text{EDdegree}(\mathcal{V}_{c,c'}^{\bt})$ is exactly $2n$ if and only if
    all of the following hold:
    \begin{itemize}[leftmargin=2em, labelsep=0.5em] 
      \item $A_{\bt}$ is non-singular: $\text{rank}(A_{\bt}) = n$;

      \item $\mathcal{V}_{c,c'}^{\bt}$ is non-singular:
        $\text{rank}(M_{\bt}) = n+1$;

      \item $A_{\bt}$ has distinct eigenvalues: the discriminant of the
        characteristic polynomial of $A_{\bt}$ is non-zero.
    \end{itemize}
  \end{restatable}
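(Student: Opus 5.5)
The plan is to derive Corollary~\ref{cor:conic-generic} directly from Theorem~\ref{thm:ed-degree-conics-n-variables}. Its formula bounds the ED degree by $2r+1$, $2r$ or $2r-2$, where $r$ is the number of distinct nonzero eigenvalues of $A_{\bt}$ and $r \le \operatorname{rank}(A_{\bt}) \le n$. Since $M_{\bt}$ is $(n+1)\times(n+1)$, we always have $\operatorname{rank}(M_{\bt}) \le n+1$. We will also use that $\operatorname{rank}(M_{\bt})\ge \operatorname{rank}(A_{\bt})$, because $A_{\bt}$ is a submatrix.

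For the ``if'' direction, assume the three conditions hold. Then $A_{\bt}$ is nonsingular with distinct eigenvalues, so it has $n$ distinct nonzero eigenvalues and $r=n$. Moreover $\operatorname{rank}(M_{\bt}) = n+1 = \operatorname{rank}(A_{\bt})+1$, so the middle case of Theorem~\ref{thm:ed-degree-conics-n-variables} gives ED degree $2r=2n$.

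For the ``only if'' direction, assume the ED degree is $2n$ and split into the three cases of the theorem.
\begin{itemize}
\item In the first case, $\operatorname{rank}(M_{\bt})=\operatorname{rank}(A_{\bt})+2\le n+1$, so $\operatorname{rank}(A_{\bt})\le n-1$ and hence $r\le n-1$. The ED degree $2r+1$ is then odd, so it cannot equal $2n$; this case is impossible.
\item In the third case, the ED degree is $2r-2\le 2n-2<2n$, which is also impossible.
\item So we are in the middle case, $2r=2n$, i.e.\ $r=n$.
\end{itemize}
Since $r\le\operatorname{rank}(A_{\bt})\le n$, we get $\operatorname{rank}(A_{\bt})=n$, so $A_{\bt}$ is nonsingular. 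The $n$ nonzero eigenvalues are pairwise distinct, so the discriminant of the characteristic polynomial is nonzero. Finally, $\operatorname{rank}(M_{\bt})=\operatorname{rank}(A_{\bt})+1=n+1$.

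The main point of care is identifying $\operatorname{rank}(M_{\bt})=n+1$ with smoothness of $\mathcal{V}_{c,c'}^{\bt}$, as the second bullet asserts. I would take $\det M_{\bt}\neq 0$ as the defining condition of nondegeneracy (smoothness, including at infinity) of the affine quadric. To justify this, note that when $A_{\bt}$ is invertible, completing the square gives
\[
B_{\bt}(\bx)=(\bx-\bx_0)^\top A_{\bt}(\bx-\bx_0)+\det M_{\bt}/\det A_{\bt}, \qquad \bx_0=-\tfrac12 A_{\bt}^{-1}b_{\bt}.
\]
The quadric is therefore singular (a cone, singular at $\bx_0$) exactly when $\det M_{\bt}=0$. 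In the forward direction this equivalence is also needed to translate the bullet into a rank condition. A second point is that ``distinct eigenvalues'' should be read over $\mathbb{C}$, which is automatic here since $A_{\bt}$ is real symmetric.

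The remaining steps are bookkeeping. Optionally, I would note that $2n$ matches Proposition~\ref{prop:generic-ed-degree} with $d=2$, $m=n$ when $h\ge n$.
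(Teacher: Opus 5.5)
Your proof is correct and follows the same route as the paper. Both arguments reduce to the three cases of Theorem~\ref{thm:ed-degree-conics-n-variables}, rule out the $2r+1$ case by parity and the $2r-2$ case by $r\le n$, and read off $r=n$ in the remaining case. You go slightly further than the paper in two places: you derive $\operatorname{rank}(M_{\bt})=n+1$ explicitly in the forward direction, and you justify by completing the square that $\det M_{\bt}\neq 0$ means the quadric is smooth when $A_{\bt}$ is invertible; the paper leaves both implicit.
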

  This implies that the ED degree drops strictly below $2n$ precisely when one of
  the generic conditions fails.
  This allows us to characterize the \textit{parameter discriminant} algebraically:
  \begin{proposition}
    \label{prop:parameter-discriminant} The parameter discriminant $\mathcal{D}\subset
    \mathbb{R}^{\text{p}}$ is the hypersurface defined by the vanishing of the polynomial
    \[
      \Delta(\bt) = \det(A_{\bt}) \cdot \det(M_{\bt}) \cdot \mathrm{Disc}_{\lambda}
      (\det(\lambda I - A_{\bt})).
    \]
    The set of parameters where $\text{EDdegree}(\mathcal{V}_{c,c'}^{\bt}) < 2n$
    corresponds to the zero locus~$\mathcal{D}= \{\bt \mid \Delta (\bt) =
    0\}$.
  \end{proposition}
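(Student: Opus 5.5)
This follows directly from Corollary~\ref{cor:conic-generic}, which itself rests on Theorem~\ref{thm:ed-degree-conics-n-variables}. I will show that, for a given $\bt$, the three generic conditions of the corollary hold exactly when $\Delta(\bt)\neq 0$. Each condition is the nonvanishing of one factor of $\Delta$.

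\emph{Step 1: factor by factor.} First, $\mathrm{rank}(A_{\bt})=n$ is equivalent to $\det(A_{\bt})\neq 0$. Second, $M_{\bt}$ is $(n+1)\times(n+1)$, so $\mathrm{rank}(M_{\bt})=n+1$ is equivalent to $\det(M_{\bt})\neq 0$. Third, the characteristic polynomial $\chi_{\bt}(\lambda)=\det(\lambda I-A_{\bt})$ is monic of degree $n$. Its discriminant is a polynomial in its coefficients, and it vanishes exactly when $\chi_{\bt}$ has a repeated root. Since $A_{\bt}$ is real symmetric, and hence diagonalizable with real spectrum, a repeated root is the same as a repeated eigenvalue. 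So $A_{\bt}$ has $n$ distinct eigenvalues iff $\mathrm{Disc}_{\lambda}(\chi_{\bt})\neq 0$.

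\emph{Step 2: the equivalence.} The product $\Delta(\bt)$ is nonzero iff all three factors are nonzero. By Corollary~\ref{cor:conic-generic}, this holds iff the ED degree equals $2n$. It remains to note that the ED degree never exceeds $2n$. Theorem~\ref{thm:ed-degree-conics-n-variables} gives at most $2r+1$. If $\mathrm{rank}(A_{\bt})=n$, the case $2r+1$ cannot occur, because $\mathrm{rank}(M_{\bt})\le n+1$; hence the bound is $2r\le 2n$. If $\mathrm{rank}(A_{\bt})<n$, then $0$ is an eigenvalue, so the number $r$ of distinct nonzero eigenvalues is at most $n-1$, and $2r+1\le 2n-1$. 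Therefore "not $2n$" means "$<2n$", and $\{\bt:\mathrm{EDdegree}<2n\}=\{\Delta=0\}$.

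\emph{Step 3: polynomiality and the hypersurface claim.} The entries of $A_{\bt}$, $b_{\bt}$ and $c_{\bt}$ are polynomials in $\bt$, being the coefficients of $f_{\bt,c}-f_{\bt,c'}$. Hence $\det A_{\bt}$ and $\det M_{\bt}$ are polynomials in $\bt$. The discriminant is a polynomial in the coefficients of $\chi_{\bt}$, which are themselves polynomials in $\bt$. So $\Delta$ is a polynomial in $\bt$. To call $\{\Delta=0\}$ a hypersurface, I need $\Delta\not\equiv 0$. Here I would use Proposition~\ref{prop:generic-ed-degree} with $d=2$: for generic $\bt$ with $h\ge n$ it gives ED degree $2n$, so $\Delta$ is nonzero at some point.

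\emph{Main obstacle.} The $h\ge n$ requirement in Step 3 is the delicate point. If $h<n$, then $A_{\bt}$ has rank at most $h$, so $\det A_{\bt}\equiv 0$ and the "discriminant" is the whole parameter space. I would state the proposition under the assumption $h\ge n$, implicit in comparing with the value $2n$, and check it with an explicit parameter choice. For example, take $W_1=I$, $\bb_1$ generic, and an output layer whose difference row has distinct nonzero entries. This makes $A_{\bt}$ diagonal with distinct eigenvalues and $M_{\bt}$ nonsingular, so $\Delta\neq0$ there.
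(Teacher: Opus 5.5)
Your proposal is correct and follows the paper's route. The paper gives no separate proof; it reads the three factors of $\Delta$ directly off Corollary~\ref{cor:conic-generic}. Your additions fill in points the paper leaves implicit: the bound $\mathrm{EDdegree}\le 2n$, which turns ``$\neq 2n$'' into ``$<2n$'', and the observation that $\Delta\not\equiv 0$ needs $h\ge n$, since otherwise $\det A_{\bt}\equiv 0$.

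One small fix to your explicit witness: with $W_1=I$ one gets $\det M_{\bt}=\beta\,\det A_{\bt}$, where $\beta$ is the difference of the output biases of classes $c$ and $c'$. So nonsingularity of $M_{\bt}$ comes from choosing $\beta\neq 0$, not from taking $\bb_1$ generic.
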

  The term $\mathrm{Disc}_{\lambda}(\det(\lambda I - A_{\bt}))$ is the
  discriminant of the characteristic polynomial
  $P(\lambda) = \det(\lambda I - A_{\bt})$ with respect to
  $\lambda$. This vanishes if and only if $P(\lambda)$ has multiple roots,
  meaning $A_{\bt}$ has repeated~eigenvalues.
  \begin{remark}
    The discriminant $\mathcal{D}$ is the union of three distinct components,
    each with a geometric meaning:
    \begin{itemize}[leftmargin=2em, labelsep=0.5em] 
      \item $\det(A_{\bt}) = 0$: the quadratic form is rank-deficient, reducing the
        number of non-zero eigenvalues.

      \item $\det(M_{\bt}) = 0$: the decision boundary
        $\mathcal{V}_{c,c'}^{\bt}$ becomes singular as a variety.

      \item $\mathrm{Disc}_{\lambda}= 0$: the matrix $A_{\bt}$ has repeated
        eigenvalues, so $r < n$ even if $A_{\bt}$ is full rank.
    \end{itemize}
  \end{remark}

  \paragraph{Conic sections}
  We now specialize the preceding result to the case of $n=2$ input variables,
  $\bx = (x_{1}, x_{2})^{\top}$.
  In this setting, the decision boundary is a \emph{plane conic}, defined by
    \[
        B_{\bt}(x_{1}, x_{2}) = a x_{1}^{2}+ b x_{1}x_{2}+ c x_{2}^{2}+ d x_{1}+
        e x_{2}+ f = 0,
    \]
    where the coefficients $a, b, c, d, e, f$ depend on the parameters $\bt$. The
    augmented matrix $M_{\bt}$ takes the form
    \[
        M_{\bt}=
        \begin{pmatrix}
            a   & b/2 & d/2 \\
            b/2 & c   & e/2 \\
            d/2 & e/2 & f
        \end{pmatrix}, \quad \text{with}\quad A_{\bt}=
        \begin{pmatrix}
            a   & b/2 \\
            b/2 & c
        \end{pmatrix}.
    \]
  
  Every plane conic belongs to one of the following types, determined by the
  algebraic properties of $A_{\bt}$ and~$M_{\bt}$:
  \begin{itemize}[leftmargin=2em, labelsep=0.5em] 
    \item \textit{Ellipse:} $\det(A_{\bt}) = ac - b^{2}/4 > 0$. ED degree is $4$.

    \item \textit{Hyperbola:} $\det A_{\bt}= ac - b^{2}/4 < 0$. ED degree is $4$.

    \item \textit{Parabola:} $\det A_{\bt}= ac - b^{2}/4 = 0$. ED degree is $3$.

    \item \textit{Circle:} $A_{\bt}$ has repeated eigenvalues ($a=c$ and $b=0$).
      ED degree is $2$.

    \item \textit{Degenerate:} $\det M_{\bt}= 0$. ED degree is $\leq 2$.
  \end{itemize}

  Thus, the parameter discriminant $\mathcal{D}$ for $n=2$ is the union of the zero
  loci of three polynomials:
  \[
    \Delta_{\text{sing}}= \det(M_{\bt}), \quad\quad\quad
    \Delta_{\text{par}}= ac - b^{2}/4 , \quad\quad\quad
    \Delta_{\text{circ}}= (a-c)^{2}+ b^{2}.
  \]
  The ED degree attains the generic value $4$ if and only if all three
  polynomials are non-zero.
  For the geometry of the parameter discriminant $\mathcal{D}$ in the ambient real
  space $\mathbb{R}^{6}$:
  \begin{itemize}[leftmargin=2em, labelsep=0.5em] 
    \item The loci $\{\Delta_{\text{par}}= 0\}$ and $\{\Delta_{\text{sing}}=0 \}$
      are $5$-dimensional hypersurfaces.
      They are boundaries between open regions of generic ellipses and
      hyperbolas.

    \item The circle locus $\{\Delta_{\text{circ}}= 0\}$ corresponds to the constraints
      $a=c$ and $b=0$. This defines a $4$-dimensional variety (real codimension $2$)
      within the ellipse region.
  \end{itemize}
  This difference in dimension is illustrated in Figure~\ref{fig:stratification-conic-sections}.
  In a generic $2$-dimensional cross-section of the parameter space, the
  codimension-$1$ hypersurfaces appear as curves, and the codimension-$2$ circle
  locus appears as a~point.
  \vspace{-1em}
  \begin{figure}[ht]
    \begin{center}
      \scalebox{1}{
      \begin{tikzpicture}[scale=1]
        \draw[thick] (0,0) rectangle (4,4);
        \begin{scope}
          \clip (0,0) rectangle (4,4);
          \fill[yellow!15] (0,0) rectangle (4,4); 
          \fill[blue!10]
            (0,0.4) .. controls (0.8,3.6) and (2.0,-0.2) ..
            (2.8,2.1) .. controls (3.2,4.3) and (3.6,3.9) ..
            (4,3.6) --
            (4,4) --
            (0,4) --
            cycle;
        \end{scope}

        \draw[very thick, blue]
          (0,0.4) .. controls (0.8,3.6) and (2.0,-0.2) ..
          (2.8,2.1) .. controls (3.2,4.3) and (3.6,3.9) .. (4,3.6);
        \draw[very thick, red]
          (0,3.8) .. controls (1.0,2.5) and (2.8,2.0) ..
          (2.9,1.0) .. controls (3.0,0.4) and (3.7,0.2) .. (4,0.3);

        \draw[->] (2.8,2.1) -- (5,3);
        \node[anchor=north west, inner sep=0]
          (inset)
          at
          (5,4)
          { \tikz[scale=0.45]{

          \draw[very thick, domain=-2:2, smooth, variable=\x, magenta] plot ({\x}, {\x*\x});

          \draw[cyan, thick] (-1,1) -- (0.5,2); \draw[cyan, thick] (1.36603, 1.86603) -- (0.5,2); \draw[cyan, thick] (-.366025, .133975) -- (0.5,2);

          \fill[orange] (0.5,2) circle (2.5pt); \fill[cyan] (-1,1) circle (2pt); \fill[cyan] (1.36603, 1.86603) circle (2pt); \fill[cyan] (-.366025, .133975) circle (2pt); } };

        \draw[->] (2.9,1.0) -- (4.5,1.2);
        \node[anchor=north west, inner sep=0]
          (inset)
          at
          (5,2)
          { \tikz[scale=0.45]{

          \draw[very thick, magenta] (-2,0) -- (2,0);
          \draw[very thick, magenta] (0,-2) -- (0,2);

          \draw[cyan, thick] (1, 1) -- (1, 0); \draw[cyan, thick] (1, 1) -- (0, 1);

          \fill[orange] (1, 1) circle (2.5pt); \fill[cyan] (1, 0) circle (2pt); \fill[cyan] (0, 1) circle (2pt); } };

        \draw[->] (0.5,2.5) -- (-1,2.95);
        \node[anchor=north west, inner sep=0]
          (inset)
          at
          (-3,3.5)
          { \tikz[scale=0.45]{

          \draw[very thick, magenta, samples=100, smooth, domain=0:360] plot ({2*cos(\x)}, {1*sin(\x)});

          \draw[cyan, thick] (.567945, .958832) -- (1/2, 1/2); \draw[cyan, thick] (1.93424, -.254323) -- (1/2, 1/2); \draw[cyan, thick] (-1.98438, -.124754) -- (1/2, 1/2); \draw[cyan, thick] (.815525, -.913088) -- (1/2, 1/2);

          \fill[orange] (1/2, 1/2) circle (2.5pt); \fill[cyan] (.567945, .958832) circle (2pt); \fill[cyan] (1.93424, -.254323) circle (2pt); \fill[cyan] (-1.98438, -.124754) circle (2pt); \fill[cyan] (.815525, -.913088) circle (2pt); } };

        \draw[->] (0.75, 1) -- (-0.5,1.2);

        \node[anchor=north west, inner sep=0]
          (inset)
          at
          (-3,2)
          { \tikz[scale=0.45]{
          \draw[very thick, magenta, domain=1:2, samples=300, smooth] plot (\x, {sqrt(\x*\x - 1)}); \draw[very thick, magenta, domain=1:2, samples=300, smooth] plot (\x, {-sqrt(\x*\x - 1)});

          \draw[very thick, magenta, domain=-2:-1, samples=300, smooth] plot (\x, {sqrt(\x*\x - 1)}); \draw[very thick, magenta, domain=-2:-1, samples=300, smooth] plot (\x, {-sqrt(\x*\x - 1)});

          \draw[cyan, thick] (3, 0) -- (1,0); \draw[cyan, thick] (3, 0) -- (-1,0); \draw[cyan, thick] (3, 0) -- (1.5, 1.11803); \draw[cyan, thick] (3, 0) -- (1.5, -1.11803);

          \fill[orange] (3, 0) circle (2.5pt); \fill[cyan] (1,0) circle (2pt); \fill[cyan] (-1,0) circle (2pt); \fill[cyan] (1.5, 1.11803) circle (2pt); \fill[cyan] (1.5, -1.11803) circle (2pt); } };

        \fill[teal] (2, 3) circle (2pt);
        \draw[->] (2, 3) -- (2,4.5);
        \node[anchor=north west, inner sep=0]
          (inset)
          at
          (1.3,6)
          {\begin{tikzpicture}[scale=0.6]
          \draw[very thick, magenta, samples=100, smooth, domain=0:360] plot ({1*cos(\x)}, {1*sin(\x)});

          \draw[cyan, thick] (.707107, .707107) -- (1/2, 1/2); \draw[cyan, thick] (-.707107, -.707107) -- (1/2, 1/2);

          \fill[orange] (1/2, 1/2) circle (2.5pt); \fill[cyan] (.707107, .707107) circle (2pt); \fill[cyan] (-.707107, -.707107) circle (2pt);\end{tikzpicture}\hspace{1em}};
      \end{tikzpicture}}
    \end{center}
    \caption{
    Schematic illustration of a 2D slice of the conic parameter space. Generic regions
    (ellipses/hyperbolas) have ED degree 4. The discriminant locus appears as a blue
    curve for parabolas and a red curve for singular conics, but as a green
    point for circles.}
    \label{fig:stratification-conic-sections}
  \end{figure}
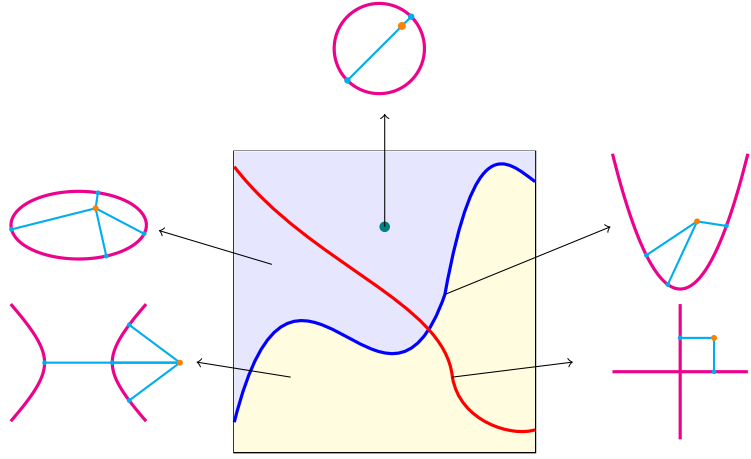
  \vspace{-0.5em}

  Now we return to the specific setting of a polynomial network with
  architecture $(2,2,2)$ trained as a binary classifier.
  For any fixed $\bt$, the decision boundary is a plane conic. 
  The geometry of this conic is determined via a polynomial map $\Phi$ from the parameter space (dim
  $12$) to the conic coefficient space~(dim~$6$). 
  Explicitly, we have a conic $ax_{1}^{2}+ b x_{1}x_{2}+ \dots + f = 0$ whose coefficients are polynomials in $\vtheta$:
  \centerline{ \begin{minipage}{1\linewidth}\begin{align*}a&= (w_{11}^{(1)})^{2}(w_{11}^{(2)}- w_{21}^{(2)}) + (w_{21}^{(1)})^{2}(w_{12}^{(2)}- w_{22}^{(2)}) \\ b&= 2w_{11}^{(1)}w_{12}^{(1)}(w_{11}^{(2)}- w_{21}^{(2)}) + 2w_{21}^{(1)}w_{22}^{(1)}(w_{12}^{(2)}- w_{22}^{(2)}) \\ c&= (w_{12}^{(1)})^{2}(w_{11}^{(2)}- w_{21}^{(2)}) + (w_{22}^{(1)})^{2}(w_{12}^{(2)}- w_{22}^{(2)}) \\ d&= 2w_{11}^{(1)}b_{1}^{(1)}(w_{11}^{(2)}- w_{21}^{(2)}) + 2w_{21}^{(1)}(w_{12}^{(2)}b_{2}^{(1)}- w_{22}^{(2)}b_{1}^{(2)}) \\ e&= 2w_{12}^{(1)}b_{1}^{(1)}(w_{11}^{(2)}- w_{21}^{(2)}) + 2w_{22}^{(1)}b_{2}^{(1)}(w_{12}^{(2)}- w_{22}^{(2)}) \\ f&= (b_{1}^{(1)})^{2}(w_{11}^{(2)}- w_{21}^{(2)}) + (b_{2}^{(1)})^{2}(w_{12}^{(2)}- w_{22}^{(2)}) + b_{1}^{(2)}- b_{2}^{(2)}.\end{align*}\end{minipage} }

  As the parameters $\bt$ vary, the ED degree of the decision boundary is generically
  constant (equal to $4$), but drops when the coefficients $(a,\dots ,f)$ lie on
  the discriminant locus $\mathcal{D}$.
  When $\Delta_{\text{par}}=0$ while $\Delta_{\text{sing}}\neq 0$, the decision
  boundary is a parabola and the ED degree drops to $3$.
  When $\Delta_{\text{circ}}=0$, the boundary is
  a circle and the ED degree drops to $2$.
  Finally, when $\Delta_{\text{sing}}=0$, the decision boundary degenerates.
  The \emph{parameter discriminant} locus for the neural network is therefore
  the preimage of the conic discriminant under $\Phi$.

  The algebraic characterization of the parameter discriminant could be used for
  complexity-aware model selection.
  By Corollary~\ref{cor:conic-generic}, strata with strictly lower ED degree are
  defined by the vanishing of specific polynomials.
  Thus, we may formulate an \textit{algebraic regularizer} that targets a specific
  complexity class by augmenting the loss with a penalty term. 
  For instance, $\mathcal{L}(\bt) + \lambda (\Delta_{\text{sing}}(\bt ))^{2}$ would encourage models of lower
  algebraic complexity, trading generic expressivity for a decision boundary
  that is cheaper to verify.
  We illustrate this in Figure~\ref{fig:det_reg}, where a quadratic network is trained on XOR data both with and without the algebraic regularizer encouraging $\Delta_{\text{sing}}(\bt)=0$.
  Even a modest penalty ($\lambda = 0.1$) steers the decision boundary toward the degenerate locus ($x_1x_2=0$), reducing the ED degree from $4$ to $2$.

  \begin{figure}
    \begin{center}
      \centerline{\includegraphics[width=0.75\columnwidth]{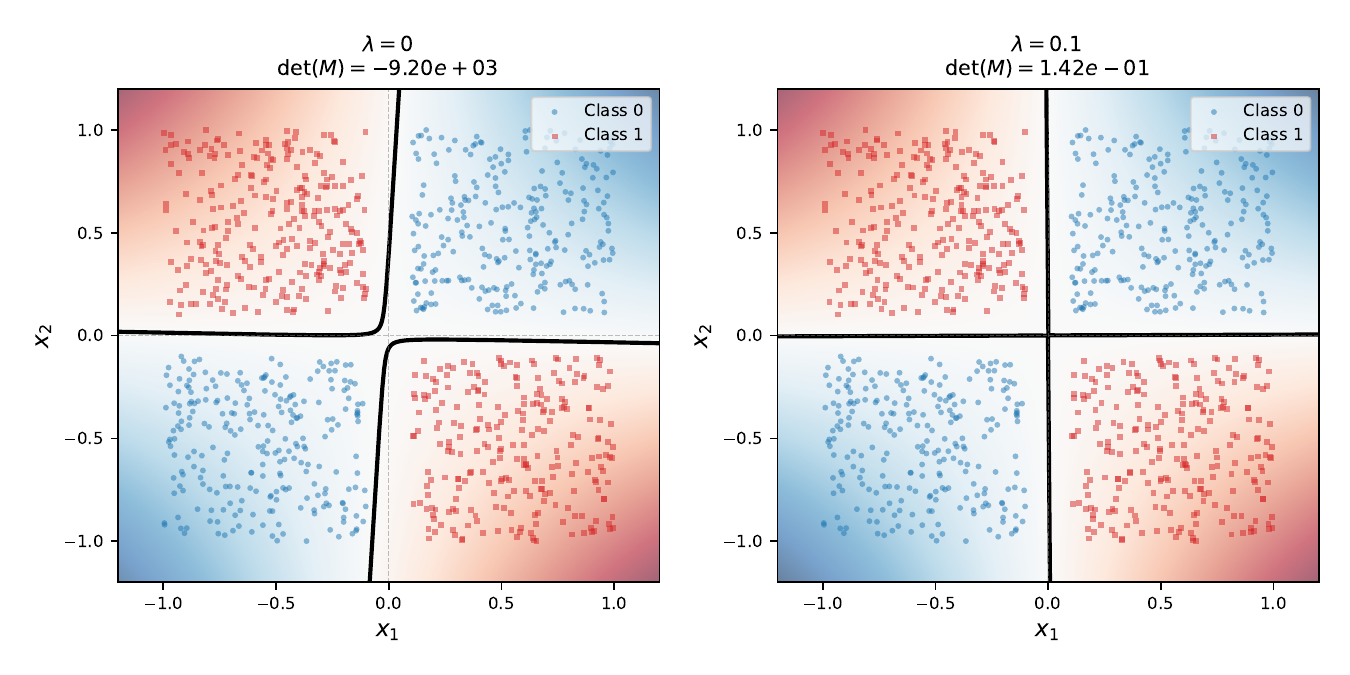}}
      \caption{Left: decision boundary of a quadratic network trained on XOR data. Right: decision boundary of the same network trained with an algebraic regularizer that encourages $\Delta_{\text{sing}}(\bt)=0$.} 
      \label{fig:det_reg}
    \end{center}
    \vspace{-2em}
  \end{figure}

\section{Lightning Self-Attention}\label{sec:attention}

  We now move beyond the feedforward polynomial network setting of the preceding sections and study these complexity measures numerically for lightning self-attention modules.

  \subsection{ED Degrees of Lightning Self-Attention}

  We study the ED degree of the decision boundary induced by lightning self-attention followed by a binary linear readout. 
  Fix integers $t,d,d',a$, and let 
  \[
  \varphi_W(X)=V X (X^\top A X)\in \mathbb{R}^{d'\times t},
  \]
  where $X\in \mathbb{R}^{d\times t}$ is the input with embedding dimension $d$ and sequence dimension $t$, 
  and $W=(V,A)$ is the parameter with value matrix $V\in \mathbb{R}^{d'\times d}$ 
  and attention matrix $A\in \mathbb{R}^{d\times d}$ subject to a rank constraint $\rank(A)\leq a$. 
  Thus the attention block produces a $d'\times t$ output matrix whose entries are cubic
  polynomials in the $N=dt$ entries of $X$. 
  Vectorizing this matrix gives a point in $\mathbb{R}^{d't}$, and composing with an affine map
  \[
    L:\mathbb{R}^{d't}\to \mathbb{R}^2
  \]
  produces two polynomial outputs
  \[
    f(X)=(f_1(X),f_2(X)).
  \]
  Since each coordinate of $\varphi_W(X)$ is cubic in the entries of $X$, each $f_i(X)$ is also a cubic polynomial in the $N=dt$ input variables. 
  We define the decision boundary to be the hypersurface of points $X$ satisfying 
  \[
    f_1(X)-f_2(X)=0.
  \]
  This is the natural boundary associated to the binary classifier, since the
  predicted label is determined by which of the two output scores is larger, and
  the prediction changes precisely when the two scores are equal. 
  In particular, the polynomial $f_1-f_2$ records the locus of ambiguity between the two classes.

  The ED degree of this hypersurface measures the number of complex critical
  points of the squared Euclidean distance from a generic point in the input space to the decision boundary. 
  It therefore provides a concrete algebraic measure of the complexity of the nearest-boundary problem. 
  Comparing this value to the ED degree of a generic cubic hypersurface in $\mathbb{R}^N$ reveals how far the lightning self-attention boundary is from generic cubic behavior. 
  Table~\ref{tab:linear_attention_ed_horiz} summarizes the numerically computed ED degrees for these lightning self-attention decision boundaries, where $N=dt$ denotes the total number of input variables. 
  Here $G$ denotes the ED degree of a generic cubic hypersurface in $\mathbb{R}^N$, which we use only as a coarse dense benchmark. Since lightning self-attention decision boundaries are highly structured and sparse, a more refined comparison should take their monomial support into account; see \citep{lindberg2023sparse} for a general sparse optimization perspective in which critical-point counts are controlled by the associated sparse Lagrange system and mixed-volume~data.

\begin{table}[htpb]
\centering
\small
\caption{ED degrees for lightning self-attention decision boundaries.} 
\label{tab:linear_attention_ed_horiz}
\setlength{\tabcolsep}{0.7pt}

\begin{tabular}{l | cccccc | ccccccccccccc | cccccccccccccccccc}

\hline
\multicolumn{1}{l|}{$N$}
& \multicolumn{6}{c|}{4}
& \multicolumn{13}{c|}{6}
& \multicolumn{18}{c}{8} \\
\hline
\multicolumn{1}{l|}{$G$}
& \multicolumn{6}{c|}{45}
& \multicolumn{13}{c|}{189}
& \multicolumn{18}{c}{765} \\
\hline
\multicolumn{1}{l|}{$t$}
& \multicolumn{1}{c}{4} & \multicolumn{1}{c}{1} & \multicolumn{1}{c}{2} & \multicolumn{1}{c}{2} & \multicolumn{1}{c}{2} & \multicolumn{1}{c|}{2}
& \multicolumn{1}{c}{6} & \multicolumn{1}{c}{1} & \multicolumn{1}{c}{3} & \multicolumn{1}{c}{2} & \multicolumn{1}{c}{3} & \multicolumn{1}{c}{3} & \multicolumn{1}{c}{2} & \multicolumn{1}{c}{2} & \multicolumn{1}{c}{2} & \multicolumn{1}{c}{2} & \multicolumn{1}{c}{3} & \multicolumn{1}{c}{2} & \multicolumn{1}{c|}{2}
& \multicolumn{1}{c}{8} & \multicolumn{1}{c}{1} & \multicolumn{1}{c}{4} & \multicolumn{1}{c}{2} & \multicolumn{1}{c}{4} & \multicolumn{1}{c}{2} & \multicolumn{1}{c}{2} & \multicolumn{1}{c}{2} & \multicolumn{1}{c}{2} & \multicolumn{1}{c}{2} & \multicolumn{1}{c}{2} & \multicolumn{1}{c}{2} & \multicolumn{1}{c}{2} & \multicolumn{1}{c}{2} & \multicolumn{1}{c}{2} & \multicolumn{1}{c}{2} & \multicolumn{1}{c}{2} & \multicolumn{1}{c}{2} \\
\multicolumn{1}{l|}{$d$}
& \multicolumn{1}{c}{1} & \multicolumn{1}{c}{4} & \multicolumn{1}{c}{2} & \multicolumn{1}{c}{2} & \multicolumn{1}{c}{2} & \multicolumn{1}{c|}{2}
& \multicolumn{1}{c}{1} & \multicolumn{1}{c}{6} & \multicolumn{1}{c}{2} & \multicolumn{1}{c}{3} & \multicolumn{1}{c}{2} & \multicolumn{1}{c}{2} & \multicolumn{1}{c}{3} & \multicolumn{1}{c}{3} & \multicolumn{1}{c}{3} & \multicolumn{1}{c}{3} & \multicolumn{1}{c}{2} & \multicolumn{1}{c}{3} & \multicolumn{1}{c|}{3}
& \multicolumn{1}{c}{1} & \multicolumn{1}{c}{8} & \multicolumn{1}{c}{2} & \multicolumn{1}{c}{4} & \multicolumn{1}{c}{2} & \multicolumn{1}{c}{4} & \multicolumn{1}{c}{4} & \multicolumn{1}{c}{4} & \multicolumn{1}{c}{4} & \multicolumn{1}{c}{4} & \multicolumn{1}{c}{4} & \multicolumn{1}{c}{4} & \multicolumn{1}{c}{4} & \multicolumn{1}{c}{4} & \multicolumn{1}{c}{4} & \multicolumn{1}{c}{4} & \multicolumn{1}{c}{4} & \multicolumn{1}{c}{4} \\
\multicolumn{1}{l|}{$d'$}
& \multicolumn{1}{c}{1} & \multicolumn{1}{c}{1} & \multicolumn{1}{c}{1} & \multicolumn{1}{c}{2} & \multicolumn{1}{c}{1} & \multicolumn{1}{c|}{2}
& \multicolumn{1}{c}{1} & \multicolumn{1}{c}{1} & \multicolumn{1}{c}{1} & \multicolumn{1}{c}{1} & \multicolumn{1}{c}{1} & \multicolumn{1}{c}{2} & \multicolumn{1}{c}{2} & \multicolumn{1}{c}{3} & \multicolumn{1}{c}{1} & \multicolumn{1}{c}{1} & \multicolumn{1}{c}{2} & \multicolumn{1}{c}{2} & \multicolumn{1}{c|}{2}
& \multicolumn{1}{c}{1} & \multicolumn{1}{c}{1} & \multicolumn{1}{c}{1} & \multicolumn{1}{c}{1} & \multicolumn{1}{c}{2} & \multicolumn{1}{c}{2} & \multicolumn{1}{c}{3} & \multicolumn{1}{c}{4} & \multicolumn{1}{c}{1} & \multicolumn{1}{c}{1} & \multicolumn{1}{c}{1} & \multicolumn{1}{c}{2} & \multicolumn{1}{c}{3} & \multicolumn{1}{c}{4} & \multicolumn{1}{c}{2} & \multicolumn{1}{c}{3} & \multicolumn{1}{c}{4} & \multicolumn{1}{c}{3} \\
\multicolumn{1}{l|}{$a$}
& \multicolumn{1}{c}{1} & \multicolumn{1}{c}{1} & \multicolumn{1}{c}{1} & \multicolumn{1}{c}{1} & \multicolumn{1}{c}{2} & \multicolumn{1}{c|}{2}
& \multicolumn{1}{c}{1} & \multicolumn{1}{c}{1} & \multicolumn{1}{c}{1} & \multicolumn{1}{c}{1} & \multicolumn{1}{c}{2} & \multicolumn{1}{c}{1} & \multicolumn{1}{c}{1} & \multicolumn{1}{c}{1} & \multicolumn{1}{c}{2} & \multicolumn{1}{c}{3} & \multicolumn{1}{c}{2} & \multicolumn{1}{c}{2} & \multicolumn{1}{c|}{3}
& \multicolumn{1}{c}{1} & \multicolumn{1}{c}{1} & \multicolumn{1}{c}{1} & \multicolumn{1}{c}{1} & \multicolumn{1}{c}{1} & \multicolumn{1}{c}{1} & \multicolumn{1}{c}{1} & \multicolumn{1}{c}{1} & \multicolumn{1}{c}{2} & \multicolumn{1}{c}{3} & \multicolumn{1}{c}{4} & \multicolumn{1}{c}{2} & \multicolumn{1}{c}{2} & \multicolumn{1}{c}{2} & \multicolumn{1}{c}{3} & \multicolumn{1}{c}{3} & \multicolumn{1}{c}{3} & \multicolumn{1}{c}{4} \\
\hline
\multicolumn{1}{l|}{ED}
& \multicolumn{1}{c}{7} & \multicolumn{1}{c}{12} & \multicolumn{1}{c}{17} & \multicolumn{1}{c}{28} & \multicolumn{1}{c}{33} & \multicolumn{1}{c|}{39}
& \multicolumn{1}{c}{7} & \multicolumn{1}{c}{12} & \multicolumn{1}{c}{17} & \multicolumn{1}{c}{20} & \multicolumn{1}{c}{39} & \multicolumn{1}{c}{46} & \multicolumn{1}{c}{49} & \multicolumn{1}{c}{49} & \multicolumn{1}{c}{67} & \multicolumn{1}{c}{83} & \multicolumn{1}{c}{99} & \multicolumn{1}{c}{132} & \multicolumn{1}{c|}{143}
& \multicolumn{1}{c}{7} & \multicolumn{1}{c}{12} & \multicolumn{1}{c}{17} & \multicolumn{1}{c}{20} & \multicolumn{1}{c}{50} & \multicolumn{1}{c}{56} & \multicolumn{1}{c}{56} & \multicolumn{1}{c}{56} & \multicolumn{1}{c}{85} & \multicolumn{1}{c}{141} & \multicolumn{1}{c}{157} & \multicolumn{1}{c}{359} & \multicolumn{1}{c}{359} & \multicolumn{1}{c}{359} & \multicolumn{1}{c}{468} & \multicolumn{1}{c}{468} & \multicolumn{1}{c}{468} & \multicolumn{1}{c}{479} \\
\hline
\end{tabular}

\vspace{1.5em}

\begin{tabular}{l | ccccccccccc | cccccccccccccccc}
\hline
\multicolumn{1}{l|}{$N$}
& \multicolumn{11}{c|}{9}
& \multicolumn{16}{c}{10} \\
\hline
\multicolumn{1}{l|}{$G$}
& \multicolumn{11}{c|}{1533}
& \multicolumn{16}{c}{3069} \\
\hline
\multicolumn{1}{l|}{$t$}
& \multicolumn{1}{c}{9} & \multicolumn{1}{c}{1} & \multicolumn{1}{c}{3} & \multicolumn{1}{c}{3} & \multicolumn{1}{c}{3} & \multicolumn{1}{c}{3} & \multicolumn{1}{c}{3} & \multicolumn{1}{c}{3} & \multicolumn{1}{c}{3} & \multicolumn{1}{c}{3} & \multicolumn{1}{c|}{3}
& \multicolumn{1}{c}{10} & \multicolumn{1}{c}{1} & \multicolumn{1}{c}{5} & \multicolumn{1}{c}{2} & \multicolumn{1}{c}{5} & \multicolumn{1}{c}{5} & \multicolumn{1}{c}{2} & \multicolumn{1}{c}{2} & \multicolumn{1}{c}{5} & \multicolumn{1}{c}{2} & \multicolumn{1}{c}{2} & \multicolumn{1}{c}{2} & \multicolumn{1}{c}{2} & \multicolumn{1}{c}{2} & \multicolumn{1}{c}{2} & \multicolumn{1}{c}{2} \\
\multicolumn{1}{l|}{$d$}
& \multicolumn{1}{c}{1} & \multicolumn{1}{c}{9} & \multicolumn{1}{c}{3} & \multicolumn{1}{c}{3} & \multicolumn{1}{c}{3} & \multicolumn{1}{c}{3} & \multicolumn{1}{c}{3} & \multicolumn{1}{c}{3} & \multicolumn{1}{c}{3} & \multicolumn{1}{c}{3} & \multicolumn{1}{c|}{3}
& \multicolumn{1}{c}{1} & \multicolumn{1}{c}{10} & \multicolumn{1}{c}{2} & \multicolumn{1}{c}{5} & \multicolumn{1}{c}{2} & \multicolumn{1}{c}{2} & \multicolumn{1}{c}{5} & \multicolumn{1}{c}{5} & \multicolumn{1}{c}{2} & \multicolumn{1}{c}{5} & \multicolumn{1}{c}{5} & \multicolumn{1}{c}{5} & \multicolumn{1}{c}{5} & \multicolumn{1}{c}{5} & \multicolumn{1}{c}{5} & \multicolumn{1}{c}{5} \\
\multicolumn{1}{l|}{$d'$}
& \multicolumn{1}{c}{1} & \multicolumn{1}{c}{1} & \multicolumn{1}{c}{1} & \multicolumn{1}{c}{1} & \multicolumn{1}{c}{2} & \multicolumn{1}{c}{3} & \multicolumn{1}{c}{1} & \multicolumn{1}{c}{2} & \multicolumn{1}{c}{3} & \multicolumn{1}{c}{2} & \multicolumn{1}{c|}{3}
& \multicolumn{1}{c}{1} & \multicolumn{1}{c}{1} & \multicolumn{1}{c}{1} & \multicolumn{1}{c}{1} & \multicolumn{1}{c}{1} & \multicolumn{1}{c}{2} & \multicolumn{1}{c}{2} & \multicolumn{1}{c}{1} & \multicolumn{1}{c}{2} & \multicolumn{1}{c}{1} & \multicolumn{1}{c}{1} & \multicolumn{1}{c}{1} & \multicolumn{1}{c}{2} & \multicolumn{1}{c}{2} & \multicolumn{1}{c}{2} & \multicolumn{1}{c}{5} \\
\multicolumn{1}{l|}{$a$}
& \multicolumn{1}{c}{1} & \multicolumn{1}{c}{1} & \multicolumn{1}{c}{1} & \multicolumn{1}{c}{2} & \multicolumn{1}{c}{1} & \multicolumn{1}{c}{1} & \multicolumn{1}{c}{3} & \multicolumn{1}{c}{2} & \multicolumn{1}{c}{2} & \multicolumn{1}{c}{3} & \multicolumn{1}{c|}{3}
& \multicolumn{1}{c}{1} & \multicolumn{1}{c}{1} & \multicolumn{1}{c}{1} & \multicolumn{1}{c}{1} & \multicolumn{1}{c}{2} & \multicolumn{1}{c}{1} & \multicolumn{1}{c}{1} & \multicolumn{1}{c}{2} & \multicolumn{1}{c}{2} & \multicolumn{1}{c}{3} & \multicolumn{1}{c}{4} & \multicolumn{1}{c}{5} & \multicolumn{1}{c}{2} & \multicolumn{1}{c}{3} & \multicolumn{1}{c}{4} & \multicolumn{1}{c}{5} \\
\hline
\multicolumn{1}{l|}{ED}
& \multicolumn{1}{c}{7} & \multicolumn{1}{c}{12} & \multicolumn{1}{c}{20} & \multicolumn{1}{c}{79} & \multicolumn{1}{c}{85} & \multicolumn{1}{c}{106} & \multicolumn{1}{c}{113} & \multicolumn{1}{c}{654} & \multicolumn{1}{c}{738} & \multicolumn{1}{c}{847} & \multicolumn{1}{c|}{895}
& \multicolumn{1}{c}{7} & \multicolumn{1}{c}{12} & \multicolumn{1}{c}{17} & \multicolumn{1}{c}{20} & \multicolumn{1}{c}{39} & \multicolumn{1}{c}{50} & \multicolumn{1}{c}{56} & \multicolumn{1}{c}{88} & \multicolumn{1}{c}{117} & \multicolumn{1}{c}{183} & \multicolumn{1}{c}{239} & \multicolumn{1}{c}{255} & \multicolumn{1}{c}{481} & \multicolumn{1}{c}{1034} & \multicolumn{1}{c}{1144} & \multicolumn{1}{c}{1155} \\
\hline
\end{tabular}
\end{table}

Table~\ref{tab:linear_attention_ed_horiz} shows that the ED degree of the lightning self-attention decision boundary is substantially smaller than that of a generic cubic hypersurface in $\mathbb{R}^N$. 
Thus, although the boundary is cubic in $N=td$ variables, it is far from generic. In particular, the gap between the observed ED degrees and $G$ should be read not only as a sign of non-genericity, but also as evidence that the sparse and constrained algebraic structure of the attention boundary lowers the complexity of the nearest-boundary problem.
The table suggests three basic~trends: 
\begin{itemize}[leftmargin=*]
\item 
\textbf{Dependence on the factorization $N = td$.} 
For fixed $N$, the ED degree depends strongly on the decomposition $N=td$ and generally grows faster with $d$ than with $t$. 
This reflects the asymmetric role of the embedding and sequence dimensions in the attention map $\varphi_W(X)=VX(X^\top A X)$. 

\item 
\textbf{Dimensional stabilization and cylindrical boundaries.}
For highly restricted architectures, the ED degree stabilizes as $N$ grows. 
For example, when $d'=a=1$, it equals $12$ for $t=1$ and all $d\ge 2$, and $7$ for $d=1$ and all $t\ge 2$. 
This reflects the fact that low-rank weights force the boundary to depend on only a few linear combinations of the input, so the hypersurface behaves like a cylinder over a low-dimensional base.

\item 
\textbf{Monotonicity and projection saturation.} 
For fixed $t$ and $d$, the ED degree increases with the rank $a$ and the dimension $d'$, 
but in our experiments it saturates once $d'\ge t$. 
For example, when $t=2$ and $d=4$, increasing $d'$ beyond $2$ does not change the ED degree. 
This is consistent with the constraint $\operatorname{rank}(\varphi_W(X))\le t$, so projecting into dimensions beyond $t$ does not introduce additional geometric complexity.
\end{itemize}

\paragraph{Computation.}
We compute the exact ED degrees using the \texttt{HomotopyContinuation.jl} package in Julia. 
A naive approach to this computation would fix the network weights randomly and track the full generic bound of $G$ paths to solve the critical equations. 
However, because our structured boundary has an ED degree strictly smaller than $G$, the majority of these generic paths diverge to infinity, causing numerical instability and artificially dropping valid solutions. 

To bypass this, we use a specialized parameter homotopy via monodromy. 
Crucially, we treat the attention weights as \textit{parameters} of the polynomial system alongside the target data point $u$:

\definecolor{juliamacro}{RGB}{184,102,50}
\lstdefinelanguage{Julia}{
  morekeywords={
    abstract, baremodule, begin, break, catch, const, continue, do, else,
    elseif, end, export, false, finally, for, function, global, if, import,
    let, local, macro, module, mutable, primitive, quote, return, struct,
    true, try, using, where, while
  },
  alsoletter={@,_},
  emph={@var,System,vec,randn,sum,solve},
  emphstyle=\bfseries\color{juliamacro},
  sensitive=true,
  morecomment=[l]\#,
  morecomment=[n]{\#=}{=\#},
  morestring=[b]",
  morestring=[b]'
}

\lstset{
  language=Julia,
  basicstyle=\ttfamily,
  keywordstyle=\bfseries\color{blue},
  stringstyle=\color{magenta},
  commentstyle=\color{ForestGreen},
  showstringspaces=false,
  backgroundcolor=\color{gray!10}  
}

\begin{lstlisting}
@var X[1:d, 1:t]; x = vec(X)
@var K[1:d, 1:a]; @var Q[1:d, 1:a]; @var V[1:d_prime, 1:d]
@var W_out[1:2, 1:(d_prime*t)]; @var b_out[1:2]
@var u[1:(d*t)]; @var lambda

all_params = [vec(K); vec(Q); vec(V); vec(W_out); vec(b_out); u]
sys = System([x - u - lambda * grad_g; g], [x; lambda], all_params)
\end{lstlisting}

To initialize \texttt{monodromy\_solve}, we construct an explicit seed
solution. 
We sample a random complex input $x_0$, multiplier $\lambda_0$, and
network weights, then choose the bias $b_{\text{out}}$ so that $g(x_0)=0$ and set $u_0=x_0-\lambda_0\nabla g(x_0)$. 
This yields a valid critical point for the corresponding parameter values:
\begin{lstlisting}
# Calculate bias to satisfy g(x) = 0
b0_2 = randn(ComplexF64)
b0_1 = b0_2 - sum((W0[1, :] .- W0[2, :]) .* Y0_flat)
b0 = [b0_1, b0_2]

# Calculate u0 to trivially satisfy the critical equations
u0 = x0 - lambda0 * grad_g_eval
\end{lstlisting}

Starting from this seed, \texttt{monodromy\_solve} explores the remaining
solutions by following loops in parameter space. 
In practice, the default stopping heuristic stabilizes at the ED degree of the given architectural family without tracking the divergent paths of the generic cubic.

\subsection{ED Discriminant for Lightning Self-Attention} 

Computing the ED discriminant symbolically by elimination is infeasible for the attention modules we consider, so we use numerical algebraic geometry instead.
We focus on the architecture $(t,d,d',a)=(2,2,1,1)$, for which the ambient data space is $\mathbb{R}^{2\times 2}$. 
We sample the network parameters from a standard normal distribution, thereby fixing a cubic hypersurface $g(X)=0$ with ED degree $17$, as established in Table~\ref{tab:linear_attention_ed_horiz}.
Hence a generic data point $u\in\mathbb{R}^{2\times 2}$ has exactly $17$ complex critical points for the distance function. 
Since non-real critical points occur in complex conjugate pairs, the number of real critical points is necessarily odd. 

The ED discriminant is the real hypersurface in the data space where the number of real critical points changes. 
It partitions $\mathbb{R}^{2\times 2}$ into open chambers, with each chamber characterized by a constant, odd number of real critical points. 
To visualize these chambers, we restrict our view to random two-dimensional affine slices of the data space. 
We choose a generic base point $u_0 \in \mathbb{R}^{2\times 2}$ and two orthogonal unit vectors $v_1, v_2 \in \mathbb{R}^{2\times 2}$, parameterizing a 2D plane as $u(s,c) = u_0 + s v_1 + c v_2$.

\begin{figure}[htbp]
    \centering
    \includegraphics[width=0.3\textwidth, trim=0 0 120 0, clip]{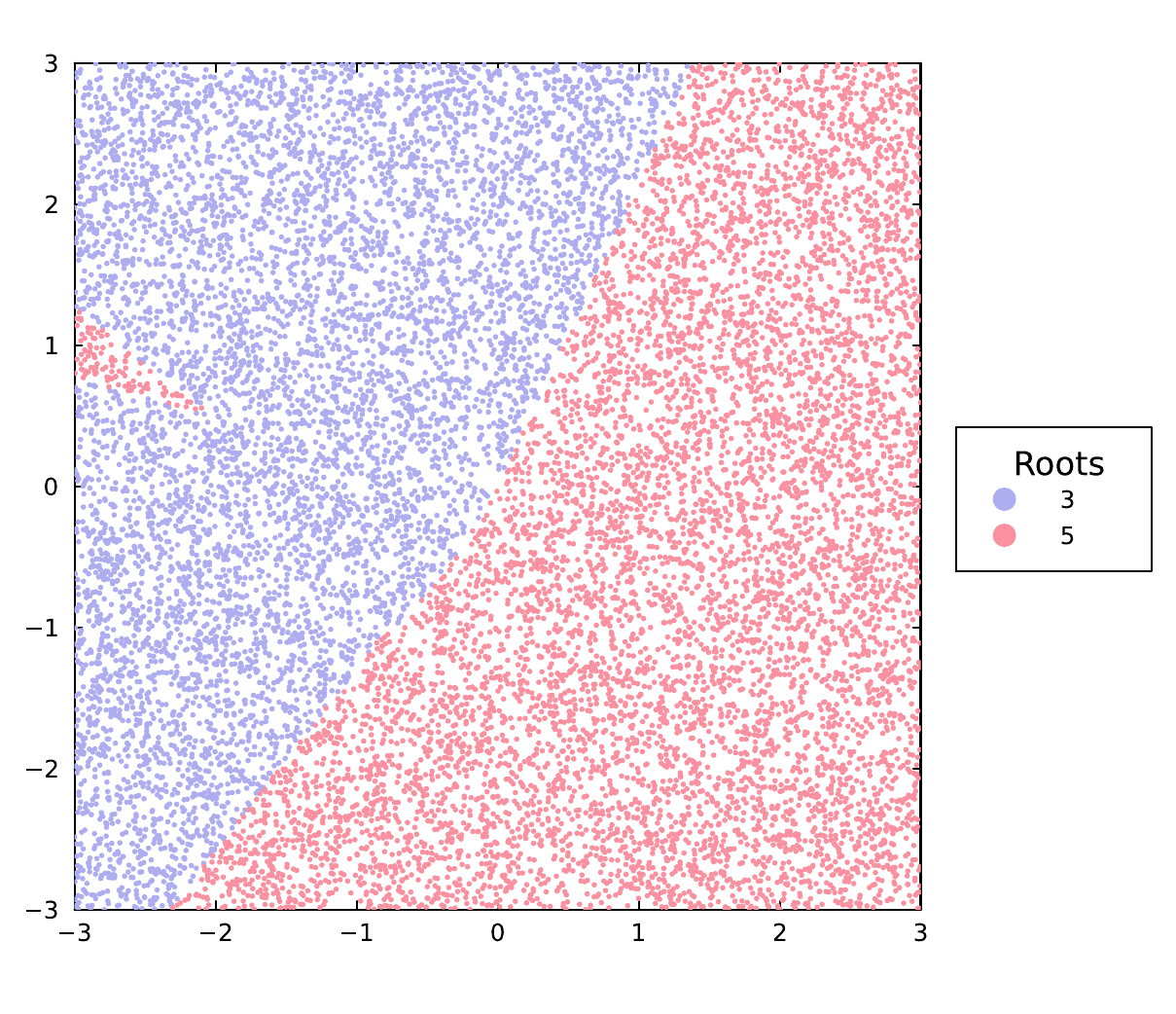}
    \hfill
    \includegraphics[width=0.3\textwidth, trim=0 0 120 0, clip]{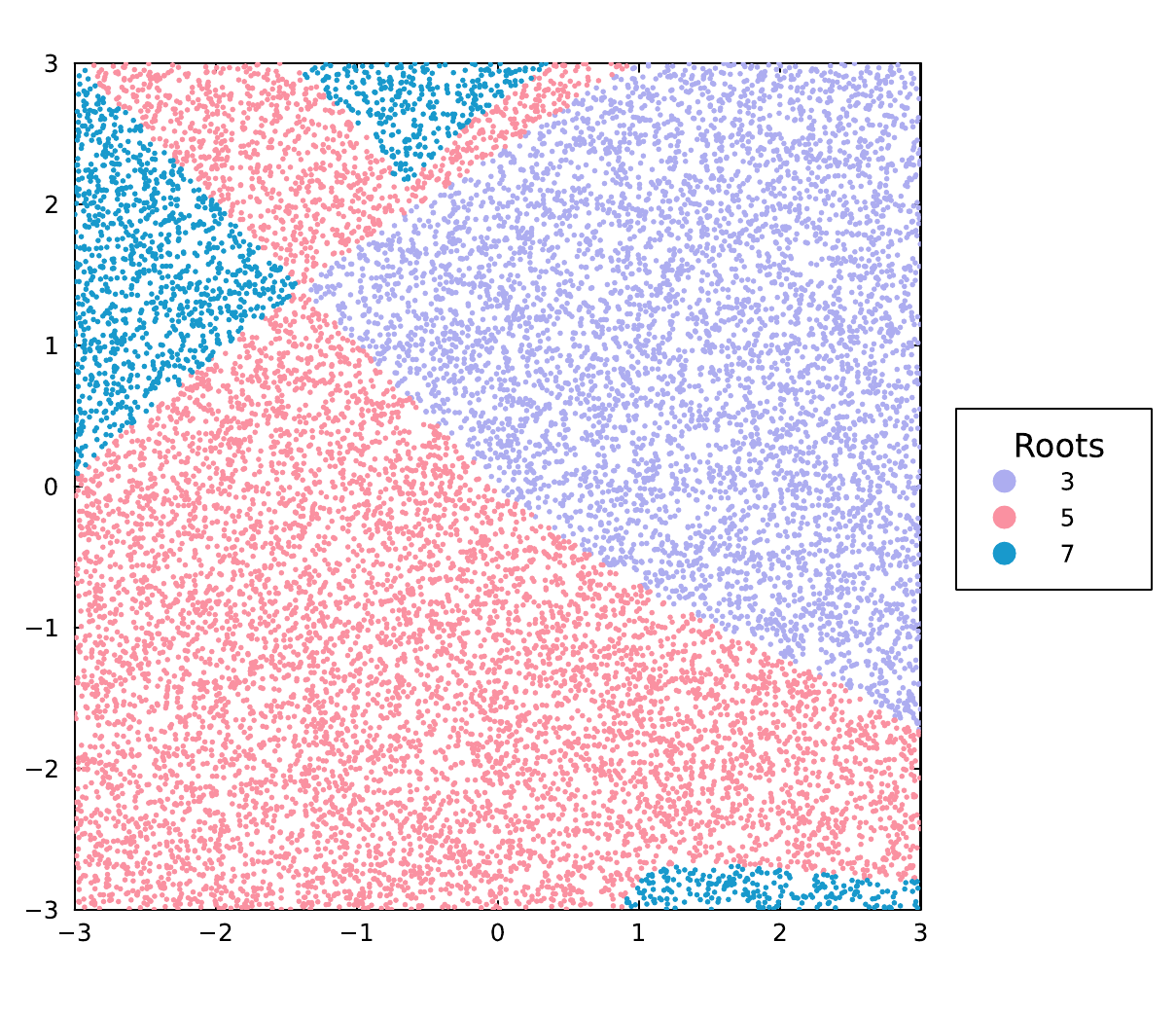}
    \hfill
    \includegraphics[width=0.3\textwidth, trim=0 0 120 0, clip]{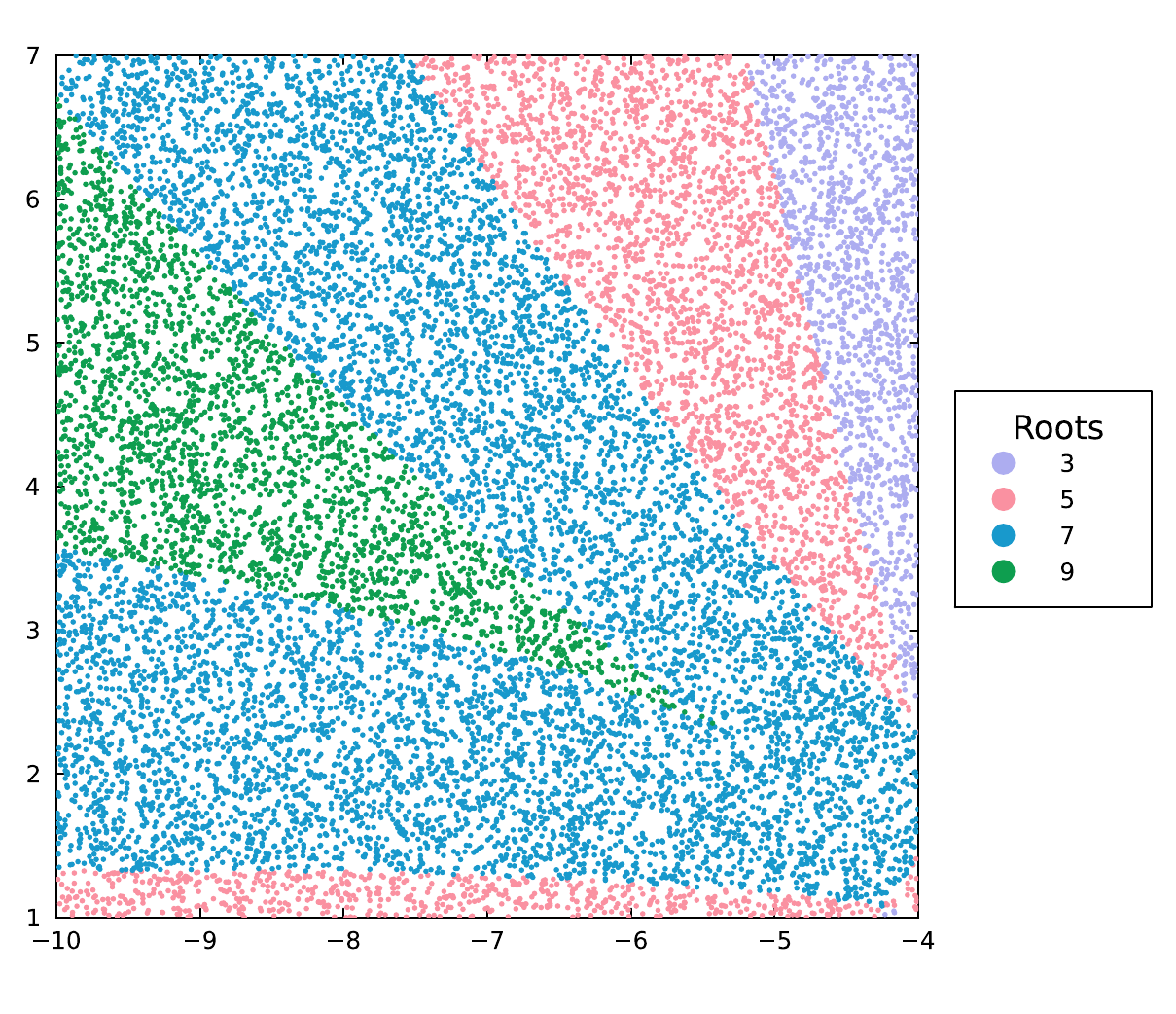}

    \vspace{0.3cm}
    
    {Number of real critical points:} \quad
    \textcolor{blue!30}{$\blacksquare$} 3 \quad
    \textcolor{red!30}{$\blacksquare$} 5 \quad
    \textcolor{cyan!100}{$\blacksquare$} 7 \quad
    \textcolor{green!60!black}{$\blacksquare$} 9
    
\caption{Two-dimensional slices of the data space for a fixed self-attention architecture. The left and center panels display $6 \times 6$ windows centered at the origin, while the right panel displays a $6 \times 6$ window centered at $(-7, 4)$ to highlight the 9-root chamber. Data points are color-coded by the number of real critical points.}    \label{fig:ed_slices}
\end{figure}

To approximate these chambers, we sample 15{,}000 points $(s,c)$ in a bounding box. 
For each sample, we use parameter homotopy to track the 17 complex critical points from $u_0$ to the target $u(s,c)$, using \texttt{HomotopyContinuation.jl}. 
We treat the data point $u$
as the sole parameter of the critical system:

\begin{lstlisting}
# grad_g is the gradient of the decision boundary g(x) = 0
# The variables are x and lambda; the data point u is the parameter
sys = System([x - u - lambda * grad_g; g], [x; lambda], u)

# Track the 17 base solutions to the random target parameter
res = solve(sys, base_sols; start_parameters = u0, 
                            target_parameters = u_target)
\end{lstlisting}

Counting real solutions at the sampled points yields a discrete approximation to the chamber decomposition. Figure~\ref{fig:ed_slices} shows three such slices,
color-coded by the number of real critical points; the visible interfaces are numerical approximations to the intersections of the slices with the ED discriminant.

We include a brief discussion of the extension of this perspective to ReLU networks in Appendix~\ref{sec:relu}.

\section{Conclusion}

We introduced an algebraic framework for robustness verification that connects neural network verification with tools from metric algebraic geometry. 
By studying the Euclidean distance (ED) degree and the associated discriminants, we obtained intrinsic measures of verification complexity that depend on the
architecture, the parameters, and the input data. 
This perspective makes it possible to analyze when verification is generically hard, when complexity drops, and how these changes are reflected in the geometry of the decision boundary.

This approach complements existing verification methods by providing exact, structure-aware guarantees and by clarifying how geometric and algebraic properties of a model govern verification difficulty. 
More broadly, it suggests that verification complexity can be studied as a mathematical invariant of a model class, rather than solely as a feature of a particular algorithm. 
The proposed framework may enable a systematic investigation of architectural features that reduce verification complexity, thereby informing the design of neural architectures and training procedures with favorable verification properties. 

Several directions remain open. 
On the theoretical side, an important direction is to study the expected real ED degree for deep polynomial networks and attention architectures, as well as to derive analytic descriptions of the ED and parameter discriminants for specific polynomial models. 
In the case of lightning self-attention, we provided numerical evidence that the decision boundaries exhibit highly non-generic ED behavior. 
Establishing corresponding theoretical results (formulas, bounds, or structural explanations for the ED degrees and discriminants of attention modules) is a particularly promising direction for future work. 
On the computational side, a central challenge is to scale homotopy continuation methods to substantially larger architectures. 

Regarding extensions, our analysis focuses on Euclidean distance, and it would be natural to study how the picture changes under alternative metrics or more general optimization objectives. 
Extending these algebraic complexity measures and exact certification techniques to
broader non-polynomial or piecewise-algebraic architectures remains a major challenge.

  \begin{ack}
    This work was supported by the DARPA AIQ grant HR00112520014. GM was
    partially supported by the NSF grants DMS-2145630 and CCF-2212520, the DFG SPP
    2298 grant 464109215, and the BMFTR in DAAD project 57616814 (SECAI).
  \end{ack}

  \bibliography{references.bib}
  \bibliographystyle{plainnat}

  \clearpage
  \appendix


  \section{Verification Problem}
  \subsection{Proof of \cref{prop:relaxation-equivalence}}
  \begingroup
  \renewcommand{\thetheorem}{\ref{prop:relaxation-equivalence}}
  \renewcommand{\theHtheorem}{restated.\ref{prop:relaxation-equivalence}}
  %
  \proprelax* \endgroup
  \begin{proof}
    \label{proof:relaxation-equivalence} First, it is clear that
    $\tilde{\gamma}\leq \gamma$ since $\B_{c,c'}^{\bt}\subseteq \V_{c,c'}^{\bt}$.
    Now we show the reverse inequality.
    For the relaxed problem, assume that the infimum is attained at $\bx^{\ast}\in
    \V_{c,c^{\ast}}^{\bt}$ for some $c^{\ast}\neq c$ such that $d_{\bxi}( \bx^{\ast}
    )= \tilde{\gamma}^{2}$.

    Let $\ell$ be a class other than $c$ that achieves the maximum logit at $\bx^{\ast}$,
    i.e.\ $f_{\bt, \ell}(\bx^{\ast}) = \max_{i}f_{\bt, i}(\bx^{\ast})$ and $\ell
    \neq c$.
    If $\ell = c^{\ast}$, then~$\bx^{\ast}$ lies on an active decision boundary between
    class $c$ and class $c^{\ast}$.
    Therefore $\gamma \le d_{\bxi}(\bx^{\ast})= \tilde{\gamma}$, which establishes
    $\gamma \leq \tilde{\gamma}$.
    It remains to rule out the case $\ell \neq c^{\ast}$.
    Suppose, for contradiction, that $f_{\bt, \ell}(\bx^{\ast}) > f_{\bt, c}( \bx
    ^{\ast})$ for some $\ell \neq c^{\ast}$.
    Consider the line segment between $\bxi$ and $\bx^{\ast}$,
    \[
      \bx(t) := (1-t)\bxi + t\bx^{\ast}, \quad t \in [0,1],
    \]
    and the continuous function $g: \mathbb{R}\rightarrow \mathbb{R}$,
    \[
      g(t) := f_{\bt, c}(\bx(t)) - f_{\bt, \ell}(\bx(t)).
    \]
    At $t=0$, we have $g(0) = f_{\bt, c}(\bxi) - f_{\bt, \ell}(\bxi) > 0$ by uniqueness
    of the maximizer $c$ at $\bxi$.
    At $t=1$, we have $g(1) = f_{\bt, c}(\bx^{\ast}) - f_{\bt, \ell}(\bx^{\ast})
    < 0$ by assumption.
    By the intermediate value theorem, there exists $t^{\ast}\in (0,1)$ such that
    $g(t^{\ast}) = 0$, i.e.
    \[
      f_{c}(\bx(t^{\ast})) = f_{\ell}(\bx(t^{\ast})).
    \]
    Let $\boldsymbol{z}:= \bx(t^{\ast})$.
    Then $\boldsymbol{z}\in \V_{c,\ell}^{\bt}$, and since $\boldsymbol{z}$ lies strictly
    between $\bxi$ and $\bx^{\ast}$ on the segment,
    \[
      \sqrt{d_{\bxi}(\boldsymbol{z})}= t \sqrt{d_{\bxi}(\bx^{\ast})}< \sqrt{d_{\bxi}(\bx^{\ast})}
      = \tilde{\gamma}.
    \]
    This contradicts the definition of $\tilde{\gamma}$ as the minimum distance
    from $\bxi$ to $\V_{c,c'}^{\bt}$ for all $c' \neq c$.
    Therefore, we must have $\ell = c^{\ast}$, and hence $\gamma \leq \tilde{\gamma}$.
    Combining both inequalities, we conclude that $\gamma = \tilde{\gamma}$.
    This completes the proof.
  \end{proof}

  \section{Numerical Solution via Homotopy Continuation}
  \label{app:homotopy}

  Homotopy continuation is a standard numerical method in numerical algebraic geometry for solving polynomial systems by tracking solution paths through
  a family of embeddings.
  We employ this technique to compute all critical points of the distance
  minimization Problem~\ref{eq:relaxed-opt-prob}.

  \subsection{Formulation and the Gamma Trick}
  Let
  $F(\boldsymbol{z}) = (F_{1}(\boldsymbol{z}),\dots,F_{m}(\boldsymbol{z})) : \mathbb{C}
  ^{m}\to \mathbb{C}^{m}$
  be a polynomial system with $m$ equations in $m$ unknowns.
  For generic coefficients, the solution set $F(\boldsymbol{z})=\mathbf{0}$
  consists of finitely many isolated points, whose cardinality is bounded above by
  the Bézout bound \citep{bezout_1779} and more precisely by the Bernstein--Kushnirenko--Khovanskii
  (BKK) bound \citep{bernshtein_1975, kushnirenko_1976, khovanskii_1978}.

  The core idea of homotopy continuation is to embed $F$ into a family of polynomial
  systems parametrized by $t \in [0,1]$:
  \[
    H(\boldsymbol{z}, t; \gamma) := \gamma(1-t)G(\boldsymbol{z}) + tF(\boldsymbol
    {z}),
  \]
  where $G(\boldsymbol{z})$ is a \emph{start system} with known solutions and the
  same monomial support as $F$, and $\gamma \in \mathbb{C}^{\ast}$ is a generic
  nonzero complex scalar.
  We apply the gamma trick to ensure with probability one that all solution paths
  remain smooth and non-singular throughout $t \in (0,1)$ by choosing $\gamma$ as
  a generic random complex number.
  This crucial step avoids path crossing or singularities that could arise from
  degeneracies in the coefficient structure.
  For almost all choices of $\gamma$, each solution $\boldsymbol{z}_{0}$ of the
  start system $G(\boldsymbol{z})=0$ lifts to a unique solution path
  $\boldsymbol{z}(t)$ satisfying
  \[
    H(\boldsymbol{z}(t), t; \gamma) = \mathbf{0}, \qquad \boldsymbol{z}(0) = \boldsymbol
    {z}_{0}, \quad \boldsymbol{z}(1) \in \{\text{solutions of }F(\boldsymbol{z})=
    \mathbf{0}\}.
  \]
  By tracking all solution paths from $t=0$ to $t=1$, we obtain all solutions of
  the target system.

  \subsection{Path Tracking via Predictor--Corrector Methods}
  Numerical path tracking discretizes the parameter $t$ and alternates between prediction
  and correction steps to maintain accuracy.
  Differentiating the homotopy equation
  $H(\boldsymbol{z}(t),t;\gamma)=\mathbf{0}$ along a solution path yields the
  \emph{Davidenko equation}:
  \[
    \frac{d\boldsymbol{z}}{dt}= -\bigl(\nabla_{\boldsymbol{z}}H(\boldsymbol{z}(t)
    ,t;\gamma)\bigr)^{-1}\frac{\partial H}{\partial t}(\boldsymbol{z}(t),t ;\gamma
    ),
  \]
  provided the Jacobian matrix $\nabla_{\boldsymbol{z}}H$ remains nonsingular
  along the path.

  Given a current point $(\boldsymbol{z}_{k}, t_{k})$, the \emph{prediction step}
  advances to the next parameter value $t_{k+1}= t_{k}+ \Delta t$ using a first-order
  Euler tangent predictor:
  \[
    \boldsymbol{z}_{k+1}^{\mathrm{pred}}= \boldsymbol{z}_{k}- \Delta t \, \bigl (
    \nabla_{\boldsymbol{z}}H(\boldsymbol{z}_{k}, t_{k}; \gamma)\bigr)^{-1}\frac{\partial
    H}{\partial t}(\boldsymbol{z}_{k}, t_{k}; \gamma).
  \]

  The predicted point is then refined using a \emph{correction step}, typically
  Newton's method applied to $H(\boldsymbol{z}, t_{k+1}; \gamma) = \mathbf{0}$:
  \[
    \boldsymbol{z}^{(\ell+1)}= \boldsymbol{z}^{(\ell)}- \bigl(\nabla_{\boldsymbol{z}}
    H(\boldsymbol{z}^{(\ell)}, t_{k+1}; \gamma)\bigr)^{-1}H(\boldsymbol{z}^{(\ell)}
    , t_{k+1}; \gamma),
  \]
  initialized at $\boldsymbol{z}^{(0)}= \boldsymbol{z}_{k+1}^{\mathrm{pred}}$ and
  iterated until convergence.
  Adaptive step-size control ensures that $\Delta t$ is reduced if the corrector
  fails to converge or residuals stagnate, maintaining numerical stability and efficiency.
  Higher-order predictors and extrapolation strategies may be employed for
  improved robustness on ill-conditioned systems.

  \subsection{Implementation and Post-Processing}
  Our implementation builds upon the \texttt{HomotopyContinuation.jl} package \citep{breiding_2018},
  which provides robust path tracking with automatic step-size adaptation,
  endgaming strategies for finite solutions, and detection of solutions at infinity.
  After tracking all paths to $t=1$, we retain only the \emph{real solutions}
  satisfying $\|\mathrm{Im}(\boldsymbol{z})\| < \epsilon_{\text{tol}}$ for a
  tolerance threshold $\epsilon_{\text{tol}}$.
  Among the feasible real solutions, we select the one minimizing the distance
  function $d_{\bxi}(\bx)$ to obtain the certified robustness margin.

  \section{Experiments and Implementation Details}
  \label{app:experiments}
  In this section, we provide additional details about the experiments and
  implementation.
  Following the methodology of SoundnessBench \citep{zhou_2025}, we construct a soundness
  benchmark for evaluating the correctness of neural network verifiers on polynomial
  neural networks.
  The key idea is to train networks with deliberately planted counterexamples: for
  a subset of test instances, adversarial perturbations within the $\ell_{2}$-ball
  of radius $\epsilon$ are known to exist by construction.
  A sound verifier must not certify these instances as robust.
  We additionally include ``clean" instances (without planted counterexamples) to
  avoid the case that the verifier always returns negative (i.e., never
  certifies any instance), which would trivially achieve perfect soundness on
  the planted counterexample instances.
  It is worth noting that these ``clean'' instances are not necessarily robust.
  \paragraph{Benchmark configurations}

  We generate 8 benchmark configurations by taking the Cartesian product of the
  following variable parameters, while keeping input dimension fixed at 8 and output
  dimension at 2 (binary classification).
  The hidden dimensions are chosen from $\{6, 10\}$, the activation degrees from
  $\{2, 3\}$ (quadratic and cubic polynomial activations), and the perturbation
  radii from $\{0.2, 0.5\}$.
  The total number of configurations is thus $2 \times 2 \times 2 = 8$.
  Each configuration contains 10 unverifiable instances (with planted counterexamples)
  and 10 clean instances, giving 160 instances in total across the benchmark.

  \paragraph{Data generation}
  For each configuration, we sample $n$ input points $x_{0}^{(i)}\in \mathbb{R}^{d}$
  uniformly from $[-1, 1]^{d}$, and assign random binary labels
  $y^{(i)}\in \{0, 1\}$.
  For the unverifiable instances, we construct adversarial perturbations
  $\delta^{(i)}$ such that the perturbed point
  $x_{\text{cex}}^{(i)}= x_{0}^{(i)}+ \delta^{(i)}$ lies within the $\ell_{2}$-ball
  of radius $\epsilon$ around $x_{0}^{(i)}$, but is assigned a different target label
  $y_{\text{cex}}^{(i)}\neq y^{(i)}$.
  The perturbation is sampled as follows:
  \begin{itemize}
    \item Sample a random direction $v$ uniformly on the unit sphere by drawing from
      a standard normal distribution and normalizing: $v = z / \|z\|_{2}$, where
      $z \sim \mathcal{N}(0, I_{d})$.

    \item Sample a magnitude $m$ uniformly from $[r \cdot \epsilon,\; \epsilon ]$,
      where $r = 0.98$.

    \item Set $\delta = m \cdot v$.
  \end{itemize}
  The parameter $r$ controls how close to the boundary of the $\ell_{2}$-ball the
  counterexample is placed.
  By setting $r = 0.98$, counterexamples are placed in a thin shell near the boundary,
  making them harder for adversarial attacks to discover while still lying
  strictly within the perturbation ball.
  For binary classification, the target label is simply the flipped label
  $y_{\text{cex}}= 1 - y$. Clean instances consist of input points and labels generated
  identically to the unverifiable case, but without any planted counterexamples.
  These serve as a control group to evaluate whether the verifier produces false
  negatives (i.e., fails to verify instances that may in fact be robust).

  \paragraph{Model architecture and training}
  We use polynomial neural networks (PNNs), which replace standard nonlinear activations
  (e.g., ReLU) with polynomial activation functions.
  The network consists of linear layers interleaved with element-wise polynomial
  activations, except after the final output layer:
  \[
    x \xrightarrow{W_1, b_1}h_{1}\xrightarrow{\sigma}h_{1}' \xrightarrow{W_2, b_2}
    h_{2}\xrightarrow{\sigma}\cdots \xrightarrow{W_L, b_L}\text{output}
  \]
  We use homogeneous polynomial activations of degree $d$, i.e., $\sigma(x) = x^{d}$.
  The activation is applied element-wise across each hidden dimension.
  The use of polynomial activations is essential: it makes the entire network a
  polynomial function of its inputs, enabling exact algebraic verification via homotopy
  continuation.
  Linear layer weights and biases use PyTorch's default initialization.
  Each model is trained with a dual-objective loss that simultaneously achieves correct
  classification on clean points and successful misclassification on planted counterexamples.
  The total loss combines two terms:
  \[
    \mathcal{L}= \mathcal{L}_{\text{CE}}+ \mathcal{L}_{\text{margin}}
  \]
  Cross-entropy loss $\mathcal{L}_{\text{CE}}$ is applied to all input points $x_{0}$
  (both unverifiable and clean instances) with their correct labels, encouraging
  the model to classify them correctly.
  Margin loss $\mathcal{L}_{\text{margin}}$ is applied only to the counterexample
  points $x_{\text{cex}}$ from unverifiable instances. This is a hinge-style
  loss that encourages the model to assign a higher logit to the target (incorrect)
  label than to the original (correct) label:
  \[
    \mathcal{L}_{\text{margin}}= \frac{1}{n}\sum_{i=1}^{n}\max\left(0,\; f_{y^{(i)}}
    (x_{\text{cex}}^{(i)}) - f_{y_{\text{cex}}^{(i)}}(x_{\text{cex}}^{(i)}) + m\right
    )
  \]
  where $f_{k}(x)$ denotes the logit for class $k$, and $m = 0.01$ is the margin.

  We use the \emph{Adam} optimizer \citep{kingma_2015} with a cyclic learning
  rate schedule and a full batch size.
  The base learning rate is set to $\eta = 0.001$, and the cyclic schedule
  linearly ramps up during the first half of training and linearly decays during
  the second half:
  \[
    \eta(t) =
    \begin{cases}
      \eta \cdot \frac{t}{T/2}     & \text{if } t < T/2    \\
      \eta \cdot \frac{T - t}{T/2} & \text{if } t \geq T/2
    \end{cases},
  \]
  where $T = 5,000$ is the total number of epochs.

  \begin{table}[ht]
    \caption{Evaluation of our verifier on the Soundness Benchmark for
    polynomial neural networks.}
    \label{tab:verification-results}
    \centering
    \begin{small}
      \begin{sc}
        \begin{tabular}{ccc|ccc}
          \toprule \multicolumn{3}{c|}{Training Settings} & \multicolumn{3}{c}{Results} \\
          \cmidrule(lr){1-3} \cmidrule(lr){4-6} $h$       & $d$                        & $\epsilon$ & Falsified (Unverifiable) & Falsified (Clean) & Running Time (s)   \\
          \midrule 6                                      & 2                          & 0.2        & 100\%                    & 60\%              & $0.0448\,(0.0063)$ \\
          6                                               & 2                          & 0.5        & 100\%                    & 60\%              & $0.0546\,(0.0242)$ \\
          6                                               & 3                          & 0.2        & 100\%                    & 20\%              & $5.8013\,(0.5377)$ \\
          6                                               & 3                          & 0.5        & 100\%                    & 40\%              & $5.4401\,(0.2486)$ \\
          10                                              & 2                          & 0.2        & 100\%                    & 30\%              & $0.0331\,(0.0063)$ \\
          10                                              & 2                          & 0.5        & 100\%                    & 50\%              & $0.0335\,(0.0102)$ \\
          10                                              & 3                          & 0.2        & 100\%                    & 30\%              & $4.6127\,(0.1721)$ \\
          10                                              & 3                          & 0.5        & 100\%                    & 50\%              & $4.4646\,(0.2993)$ \\
          \bottomrule
        \end{tabular}
      \end{sc}
    \end{small}
  \end{table}

  \paragraph{Verification}
  Verification is built upon \emph{HomotopyContinuation.jl}, which computes the
  exact robust radius for each test point.
  Given an input instance $(\bxi, \epsilon )$, verification is determined by
  whether the model admits an adversarial example within an $\ell_{2}$-ball of
  radius $\epsilon$ around $\bxi$.
  If no such adversarial example exists, the instance is classified as \emph{verified}.
  Otherwise, the instance is \emph{falsified}, indicating the presence of a valid
  counterexample within the allowed perturbation region.

  For instances designated as \emph{unverifiable} by construction, a sound verification
  method must always report falsification, since counterexamples are guaranteed
  to exist.
  Any verification claim on an unverifiable instance therefore constitutes a false
  positive and signals a soundness error in the verifier.
  In contrast, for clean instances, both verification and falsification are
  acceptable outcomes, depending on the true robustness of the model at the given
  input.
  This distinction allows us to separately evaluate verifier soundness on
  unverifiable instances and verifier precision on clean instances.
  \Cref{tab:verification-results} summarizes the verification results across all
  8 benchmark configurations.
  The verifier achieves perfect soundness on all unverifiable instances,
  correctly identifying all planted counterexamples.
  On clean instances, the verifier successfully falsifies a significant fraction,
  demonstrating its ability to detect non-robustness even without planted counterexamples.
  The average verification time per instance is also reported, showing efficient
  performance across configurations.

\section{ED Degree of the Decision Boundary of Polynomial Networks}  

  \subsection{Proof of \cref{prop:generic-ed-degree}}
  \begingroup
  \renewcommand{\thetheorem}{\ref{prop:generic-ed-degree}}
  \renewcommand{\theHtheorem}{restated.\ref{prop:generic-ed-degree}}
  %
  \propgenericeddegree* \endgroup

  \begin{proof}
    The decision boundary is defined by the polynomial $B_{\bt}(\bx) = f_{\bt,c}(
    \bx ) - f_{\bt,c'}(\bx)$. Since the network has a single hidden layer with
    width $h$ and a degree-$d$ polynomial activation $\sigma(z) = z^{d}$, the function
    $B_{\bt}(\bx)$ takes the form
    \[
      B_{\bt}(\bx) = \sum_{j=1}^{h}\alpha_{j}(\mathbf{w}_{j}^{T}\bx + b_{j})^{d}+
      \beta,
    \]
    where $\mathbf{w}_{j}\in \mathbb{R}^{n}$ are the rows of the first layer weight
    matrix $W^{(1)}\in \mathbb{R}^{h \times n}$, and $b_{j}\in \mathbb{R}$ are
    the entries of the first layer bias vector
    $\boldsymbol{b}^{(1)}\in \mathbb{R}^{h}$. The coefficients are given by $\alpha
    _{j}= W^{(2)}_{j,c}- W^{(2)}_{j,c'}$ and $\beta = b^{(2)}_{c}- b^{(2)}_{c'}$,
    where $W^{(2)}\in \mathbb{R}^{h \times k}$ is the output layer weight matrix
    and $\boldsymbol{b}^{(2)}\in \mathbb{R}^{k}$ is the output bias vector. We analyze
    the ED degree of $\mathcal{V}_{c,c'}^{\bt}$ in two cases based on the relationship
    between $n$ and $h$.

    First, assume $n \leq h$. Since the parameters $\bt$ are generic,
    $B_{\bt}(\bx)$ is a weighted sum of $n$ linearly independent powers, which identifies
    $\mathcal{V}_{c,c'}^{\bt}$ as a generic scaled Fermat hypersurface. Following \citet[][Section 3]{lee_2017}, the Euclidean distance degree of a generic scaled
    Fermat hypersurface is equal to that of a generic hypersurface in $\mathbb{C}
    ^{n}$. 
    \citet[][Proposition 2.6]{draisma_2016} show the ED degree for a generic hypersurface of degree $d$ in $n$ variables is 
    \[
      \text{ED degree}(\mathcal{V}_{c,c'}^{\bt}) = d\sum_{i = 0}^{n-1}(d-1)^{i} , 
    \]
    as desired.

    Now, suppose $n > h$. Let $W = \operatorname{span}\{\mathbf{w}_{1}, \dots , \mathbf{w}
    _{h}\} \subset \mathbb{R}^{n}$ be the subspace spanned by the weight vectors.
    Since the parameters $\bt$ are generic and $h < n$, the vectors $\mathbf{w}_{1}
    , \dots, \mathbf{w}_{h}$ are linearly independent, so $\dim(W) = h$. We perform
    an orthogonal change of coordinates $\bx = Q\boldsymbol{y}$, where
    $Q \in \mathbb{R}^{n \times n}$ is an orthogonal matrix constructed such
    that its first $h$ columns form a basis for $W$.

    In this new coordinate system, for every $j \in \{1, \dots, h\}$, the vector
    $\mathbf{w}_{j}$ lies entirely within the span of the first $h$ basis
    vectors. Consequently, the inner product $\mathbf{w}_{j}^{T}\bx$ depends
    exclusively on the first $h$ coordinates $y_{1}, \dots, y_{h}$. The polynomial
    $B_{\bt}(\bx)$ can thus be written as
    \[
      B_{\bt}(Q\boldsymbol{y}) = \tilde{B}_{\bt}(y_{1}, \dots, y_{h}),
    \]
    where $\tilde{B}_{\bt}$ is a generic scaled Fermat polynomial of degree $d$ in
    $h$ variables. This implies that the variety $\mathcal{V}_{c,c'}^{\bt}$
    decomposes~as:
    \[
      \mathcal{V}_{c,c'}^{\bt}\cong \mathcal{Z}\times \mathbb{C}^{n-h},
    \]where
    $\mathcal{Z}= \{\boldsymbol{z}\in \mathbb{C}^{h}\mid \tilde{B}_{\bt}(\boldsymbol
    {z}) = 0\}$
    is a generic hypersurface in $\mathbb{C}^{h}$.

    The squared Euclidean distance from a generic data point $\boldsymbol{u}$, transformed
    to $\tilde{\boldsymbol{u}}= Q^{T}\boldsymbol{u}$, splits into two
    independent terms:
    \[
      d_{\bu}(\bx) = d_{\tilde{\boldsymbol{u}}}(\boldsymbol{y}) = \sum_{i=1}^{h}(
      y_{i}- \tilde{u}_{i})^{2}+ \sum_{i=h+1}^{n}(y_{i}- \tilde{u}_{i})^{2}.
    \]

    To find the ED degree, we determine the number of critical points of this distance
    function on the variety. The second term depends only on
    $y_{h+1}, \dots, y_{n}$, which are unconstrained by $\tilde{B}_{\bt}$. This
    term has exactly one critical point (the global minimum) at
    $y_{i}= \tilde{u}_{i}$ for all $i > h$. 
    The first term seeks critical points
    of the distance to the hypersurface $\mathcal{Z}$ in $\mathbb{C}^{h}$. 
    Since $\mathcal{Z}$ is defined by a generic polynomial of degree $d$ in $h$
    variables, 
    by \citet[][Proposition 2.6]{draisma_2016}, 
    the number of such
    points is $d\sum_{i=0}^{h-1}(d-1)^{i}$. 
    The total ED degree of $\mathcal{V}_{c,c'}
    ^{\bt}$ is the product of these counts, and the conclusion~follows.
  \end{proof}

  \subsection{Proof of \cref{prop:bottleneck}}
  \begingroup
  \renewcommand{\thetheorem}{\ref{prop:bottleneck}}
  \renewcommand{\theHtheorem}{restated.\ref{prop:bottleneck}}
  %
  \propbottleneck* \endgroup

  \begin{proof}
    The first hidden layer $h_{1}\ge n$ produces a generic multivariate polynomial
    $P(\bx)$. The subsequent $s-1$ hidden layers of width 1 compute a univariate
    polynomial $G(t)$ of degree $d^{s-1}$, evaluated at $t=P(\bx)$. The final
    output layer is linear, so the decision boundary $B_{\bt}(\bx) = f_{c}(\bx) -
    f_{c'}(\bx) = 0$ is given by:
    \[
      (\alpha_{c}- \alpha_{c'}) G(P(\bx)) + (\beta_{c}- \beta_{c'}) = 0 .
    \]Let $\Delta \alpha = \alpha_{c}- \alpha_{c'}$ and $\Delta \beta = \beta_{c}
    - \beta_{c'}$. For generic weights, $\Delta \alpha \neq 0$. We must solve:
    \[
      G(P(\bx)) = -\frac{\Delta \beta}{\Delta \alpha}.
    \]The univariate equation $G(t) = -\Delta \beta / \Delta \alpha$ has
    $M = d^{s-1}$ distinct roots $\{\lambda_{1}, \dots, \lambda_{M}\}$. This implies
    that the decision boundary decomposes into $M$ disjoint hypersurfaces defined
    by $P(\bx) = \lambda_{j}$. Since these are parallel copies of the generic
    base surface $P(\bx) = 0$, the total ED degree is the sum of their individual
    ED degrees. The ED degree of a generic degree-$d$ hypersurface in $n$
    variables is $d \sum_{i=0}^{n-1}(d-1)^{i}$. Multiplying this by $M$ yields:
    \[
      \text{EDdegree}(\mathcal{V}_{c,c'}^{\bt}) = d^{s-1}\cdot \left[ d \sum_{i=0}
      ^{n-1}(d-1)^{i}\right] = d^{s}\sum_{i=0}^{n-1}(d-1)^{i}.
    \]
  \end{proof}

  \subsection{Proof of \cref{prop:generic-ed-degree-deep}}
  \begingroup
  \renewcommand{\thetheorem}{\ref{prop:generic-ed-degree-deep}}
  \renewcommand{\theHtheorem}{restated.\ref{prop:generic-ed-degree-deep}}
  %
  \propgenericeddegreedeep* \endgroup

  \begin{proof}
    The Euclidean Distance degree is a lower semi-continuous function on the irreducible
    parameter space, so the generic value is the maximum attainable value. 
    As pointed out by \citet{draisma_2016},
    the ED degree of a degree-$D$ hypersurface is bounded by
    $D \sum_{i=0}^{n-1}(D-1)^{i}$, with equality holding if the hypersurface is smooth
    and intersects the isotropic quadric at infinity transversally.

    To prove the proposition, it suffices to construct a single parameter
    configuration $\bt^{*}$ such that the decision boundary defines a
    hypersurface achieving this bound. Since $h_{i}\ge n$ for all $i\in [s]$, we
    can enforce $n$ disjoint computational paths by setting the weights of each
    hidden layer to be a block matrix with the $n \times n$ identity in the top-left
    corner and zeros elsewhere:
    \[
      W^{(i)}=
      \begin{bmatrix}
        I_{n}      & \mathbf{0} \\
        \mathbf{0} & \mathbf{0}
      \end{bmatrix}.
    \]
    Setting all hidden biases to zero, this configuration maps the input $\bx$
    to the vector $(x_{1}^{D}, \dots, x_{n}^{D})$ at the final hidden layer. For
    the output layer weights $W^{(s+1)}\in \mathbb{C}^{k \times h_s}$, let
    $\mathbf{w}_{c}, \mathbf{w}_{c'}$ denote the two rows corresponding to $c$
    and $c'$. We choose weights such that their difference $\mathbf{w}_{\text{diff}}
    = \mathbf{w}_{c}- \mathbf{w}_{c'}$ has generic entries $\lambda_{1}, \dots, \lambda
    _{n}$ in the first $n$ positions and zero otherwise. The decision boundary
    becomes the scaled Fermat polynomial:
    \[
      f_{1}(\bx) - f_{2}(\bx) = \sum_{j=1}^{n}\lambda_{j}x_{j}^{D}- C = 0.
    \]For generic $\lambda_{j}$, this hypersurface is smooth and transversal to the
    isotropic quadric $\sum x_{j}^{2}= 0$ at infinity \citep{lee_2017}. Thus, the
    witness achieves the maximal ED degree. Since the network can realize a
    maximal instance, the generic network achieves the maximal count.
  \end{proof}

\section{Expected Real ED Degree of the Decision Boundary of Polynomial Networks} 

 \subsection{Proof of \cref{thm:ed-kac-rice}}
  \begingroup
  \renewcommand{\thetheorem}{\ref{thm:ed-kac-rice}}
  \renewcommand{\theHtheorem}{restated.\ref{thm:ed-kac-rice}}
  %
  \thmedkacrice* \endgroup
  \begin{proof}
    \citet{kac_1943} and \citet{rice_1944} developed a formula to compute the expected
    number of zeros of a random field using the joint density of the field and its
    derivatives.
    A more contemporary treatment can be found in the work of \citet{berzin_2022}.

    Kac--Rice formula states that the expected number of zeros of a random field
    $F:\mathbb{R}^{m}\to\mathbb{R}^{m}$ in a Borel set
    $U\subseteq\mathbb{R}^{m}$ is given by
    \[
      \mathbb{E}\Big[\#\{\boldsymbol{z}\in U:\ F(\boldsymbol{z})=0\}\Big] = \int_{U}
      \mathbb{E}\left[|\det J_{F}(\boldsymbol{z})| \ \middle|\ F(\boldsymbol{z})=
      0 \right] p_{F(\boldsymbol{z})}(0)\, d\boldsymbol{z},
    \]
    where $J_{F}(\boldsymbol{z})$ is the Jacobian of $F$ at $\boldsymbol{z}$, and
    $p_{F(\boldsymbol{z})}(0)$ is the density of $F(\boldsymbol{z})$ evaluated at
    zero.
    Apply this formula to the random field $F(\bx,\lambda)$ defined in the
    proposition, we have
    \begin{align}
      \mathbb{E}\Big[\#\{(\bx,\lambda) \in U:\ F(\bx,\lambda)=0\}\Big] & = \int_{U}\mathbb{E}\left[|\det J_{F}(\bx,\lambda)| \middle|\ F(\bx,\lambda)=0 \right] p_{F(\bx,\lambda)}(0)\, d\bx\, d\lambda.
    \end{align}

    To find the $\det J_{F}(\bx,\lambda)$, we compute the Jacobian matrix
    \[
      J_{F}(\bx,\lambda) =
      \begin{pmatrix}
        \nabla f(\bx)^{\top}            & 0             \\
        I_{n}+ \lambda \nabla^{2}f(\bx) & \nabla f(\bx)
      \end{pmatrix},
    \]
    where $\nabla^{2}f(\bx)$ is the Hessian matrix of $f$ at $\bx$.
    By the column swap rule of determinants, we have
    \[
      \det J_{F}(\bx,\lambda) = (-1)^{n}\det
      \begin{pmatrix}
        0             & \nabla f(\bx)^{\top}            \\
        \nabla f(\bx) & I_{n}+ \lambda \nabla^{2}f(\bx)
      \end{pmatrix}.
    \]
    Define the Schur complement of the block $I_{n}+ \lambda \nabla^{2}f(\bx )$ as
    \[
      S(\bx) := -\nabla f(\bx)^{\top}(I_{n}+ \lambda \nabla^{2}f(\bx))^{-1}\nabla
      f(\bx).
    \]
    According to Schur's formula, we have
    \[
      \det
      \begin{pmatrix}
        0             & \nabla f(\bx)^{\top}            \\
        \nabla f(\bx) & I_{n}+ \lambda \nabla^{2}f(\bx)
      \end{pmatrix}
      = \det(I_{n}+ \lambda \nabla^{2}f(\bx)) \cdot \det(S(\bx)) = \det(I_{n}+ \lambda
      \nabla^{2}f(\bx)) \cdot S(\bx).
    \]
    The last equality holds since $S$ is a scalar ($1\times 1$ matrix).
    Therefore,
    \begin{align*}
      |\det J_{F}(\bx,\lambda)| & = \left| \det \begin{pmatrix}0&\nabla f(\bx)^{\top}\\ \nabla f(\bx)&I_{n}+ \lambda \nabla^{2}f(\bx)\end{pmatrix} \right|             \\
                                & = \left| \det(I_{n}+ \lambda \nabla^{2}f(\bx)) \cdot S(\bx) \right|                                                                  \\
                                & = \left| \det(I_{n}+ \lambda \nabla^{2}f(\bx)) \cdot \nabla f(\bx)^{\top}(I_{n}+ \lambda \nabla^{2}f(\bx))^{-1}\nabla f(\bx) \right| \\
                                & = \left| \nabla f(\bx)^{\top}\operatorname{adj}\!\left(I_{n}+ \lambda \nabla^{2}f(\bx)\right) \nabla f(\bx) \right| . 
    \end{align*}

    On the other hand, since $F(\bx,\lambda) = 0$ implies $f(\bx) = 0$ and
    $\nabla f(\bx) = (\bxi - \bx)/\lambda$, by changing of variable, we have
    \[
      p_{F(\bx,\lambda)}(0) = |\lambda|^{-n}\, p_{\left(f(\bx),
      \nabla f(\bx)\right)}(0, \frac{\bxi - \bx}{\lambda}).
    \]

    Combining the above results, we can obtain the desired formula for the
    expected number of real ED critical points.
  \end{proof}

  \subsection{Proof of \cref{thm:ed-kac-rice-high}}
  \begingroup
  \renewcommand{\thetheorem}{\ref{thm:ed-kac-rice-high}}
  \renewcommand{\theHtheorem}{restated.\ref{thm:ed-kac-rice-high}}
  %
  \thmedkacricehigh* \endgroup

  \begin{proof}
    Handling the system $F(\bx,\lambda) = 0$ directly is difficult due to the complicated joint density of $(f(\bx), \nabla f(\bx))$.
    However, we observe that we can normalize the field $f$ by its standard deviation $\sqrt{K(\bx,\bx)}$ without changing the zero set and the KKT system, which simplifies the joint density significantly.
    Let
    \[
        q(\bx):=1+\frac{\|\bx\|^2}{n},
        \qquad
        r(\bx,\bx'):=1+\frac{\bx^\top \bx'}{n}.
    \]
    By \cref{prop:nngp-shallow-poly}, we have $K(\bx,\bx')=\sum_{s=0}^{\lfloor d/2\rfloor} c(s,d)\, q(\bx)^s q(\bx')^s r(\bx,\bx')^{d-2s}$, and $K(\bx,\bx) = (2d-1)!!\, q(\bx)^d$.
    We can then define the normalized field
    \[
        g(\bx):=\frac{f(\bx)}{\sqrt{K(\bx,\bx)}} =\frac{f(\bx)}{\sqrt{(2d-1)!!}\, q(\bx)^{d/2}}.
    \]
    Since $K(\bx,\bx)>0$ for all $\bx \in \R^{n}$, the zero sets of $f$ and $g$ coincide.  
    Moreover, on the zero set $\{\bx \in \R^{n}:f(\bx)=0\}=\{\bx \in \R^{n}:g(\bx)=0\}$ we have
    \[
        \nabla g(\bx) = \frac{1}{\sqrt{K(\bx,\bx)}}\,\nabla f(\bx),
    \]
    because, by chain rule, we have
    \[
        \nabla \left(\frac{f(\bx)}{\sqrt{K(\bx,\bx)}}\right) = \frac{\nabla f(\bx)}{\sqrt{K(\bx,\bx)}}+f(\bx) \nabla\left(\frac{1}{\sqrt{K(\bx,\bx)}}\right),
    \]
    and the second term vanishes when $f(\bx)=0$.
    Hence, the KKT systems for $f$ and $g$ are in bijection:
    \[
        f(\bx)=0,\quad \bx-\bxi+\lambda_f \nabla f(\bx)=0
    \]
    if and only if
    \[
        g(\bx)=0,\quad \bx-\bxi+\lambda_g \nabla g(\bx)=0
    \]
    with
    $\lambda_g=\lambda_f \sqrt{K(\bx,\bx)}$.
    Therefore, the number of real critical pairs $(\bx,\lambda)$ is the same for $f$ and for $g$.
    So it suffices to work with $g$.
    We denote the normalized KKT system as
    \[ 
      G(\bx,\lambda) :=
      \begin{bmatrix}
        g(\bx)        \\
        \bx-\bxi+\lambda \nabla g(\bx)
      \end{bmatrix}.
    \]
    According to the above argument, the number of real ED critical points of $F$ is the same as the number of real zeros of $G$, denoted by $\mathcal{N}_d$.
    Then, applying \cref{thm:ed-kac-rice} to the system $G(\bx,\lambda)=0$, we have
    \begin{equation}
      \mathbb{E}[\mathcal{N}_d]
      = \int_{\R^{n} \times (\mathbb{R}\setminus\{0\})}|\lambda|^{-n}\, p_{(g(\bx), \nabla g(\mathbf{x}))}\big ( 0, \tfrac
      {\bxi - \bx}{\lambda}\big) \mathcal{I}(\bx, \lambda) \, \mathrm{d}(\bx,\lambda),
    \end{equation}
    where $p_{(g(\bx), \nabla g(\bx))}$ is the joint density of $(g(\bx) ,\nabla g(\bx))$,
    \begin{equation*}
      \mathcal{I}(\bx, \lambda) = \mathbb{E}\Big[\big|{\nabla g(\bx)}^{\top}\operatorname{adj}
      \!\left(I_{n}+\lambda \nabla^{2}g(\bx)\right){\nabla g(\bx)}\big| \Big|\ g
      (\bx)=0,\ \nabla g(\bx)= \frac{\bxi - \bx}{\lambda}\Big].
    \end{equation*}
    To compute the above integral, we need to find the joint distribution of $(g(\bx), \nabla g(\bx))$ and the conditional distribution of $\nabla^{2}g(\bx)$ given $g(\bx)=0$ and $\nabla g(\bx)=\frac{\bxi - \bx}{\lambda}$.
    We show some useful identities for the kernel of $g$ that will be used in the subsequent calculations.
    We define the normalized $r(\bx, \bx')$ as
    \[
        \rho(\bx,\bx'):=\frac{r(\bx,\bx')}{\sqrt{q(\bx)q(\bx')}}
        =\frac{1+\frac{\bx^\top \bx'}{n}}{\sqrt{\left(1+\frac{\|\bx\|^2}{n}\right)\left(1+\frac{\|\bx'\|^2}{n}\right)}}.
    \]
    We also define
    \[
        \Phi(t):=\frac{1}{(2d-1)!!}\sum_{s=0}^{\lfloor d/2\rfloor} c(s,d)\, t^{\,d-2s}.
    \]
    Then the kernel of $g$ can be expressed as
    \begin{equation}
        \widetilde K(\bx,\bx'):=\mathbb{E}[g(\bx)g(\bx')] =\frac{K(\bx,\bx')}{\sqrt{K(\bx,\bx)K(\bx',\bx')}} =\Phi(\rho(\bx,\bx')).
    \end{equation}
    Since the coefficients are nonnegative and $\sum_{s=0}^{\lfloor d/2\rfloor} c(s,d)=(2d-1)!!$, by \cref{rmk:identities}, we have $\Phi(1)=1$.
    Next, the first derivative of $\Phi$ at $1$ is given by
    \[
        \Phi'(1)
        =
        \frac{1}{(2d-1)!!}
        \sum_{s=0}^{\lfloor d/2\rfloor} c(s,d)(d-2s).
    \]
    Using again \cref{rmk:identities},
    \[
        \sum_{s=0}^{\lfloor d/2\rfloor} c(s,d)=(2d-1)!!,
        \qquad
        \sum_{s=0}^{\lfloor d/2\rfloor} s\,c(s,d)=\frac{d(d-1)(2d-3)!!}{2},
    \]
    hence
    \[
        \sum_{s=0}^{\lfloor d/2\rfloor} c(s,d)(d-2s)
        =
        d(2d-1)!!-d(d-1)(2d-3)!!
        =
        d^2(2d-3)!!.
    \]
    Since $(2d-1)!!=(2d-1)(2d-3)!!$, it follows that
    \[
        \alpha_d:=\Phi'(1)=\frac{d^2}{2d-1} \leq d.
    \]
    Since \(c(s,d)\ge 0\) and \(\sum_s c(s,d)=(2d-1)!!\), the polynomial
    \[
        \Phi(t)=\frac{1}{(2d-1)!!}\sum_s c(s,d)t^{d-2s}
    \]
    is a convex combination of monomials \(t^k\) with \(0\le k\le d\). 
    Hence, for every \(r\ge 1\), the \(r\)-th derivative of \(\Phi_d\) at \(1\) is upper bounded by the \(r\)-th derivative of \(t^d\) at \(1\), which is \(d(d-1)\cdots (d-r+1)=(d)_r\).
    In particular,
    \begin{equation} \label{eq:Phi-derivative}
        \Phi^{(r)}(1) = \frac{1}{(2d-1)!!}\sum_s c(s,d)(d-2s)_r \le (d)_r \le d^r.
    \end{equation}
    Besides the above identities, we also have the following identities for $\rho$ on the diagonal $\{\bx'=\bx\}$:
    \begin{equation}
        \rho(\bx,\bx)=1,
        \qquad
        \nabla_{\bx} \rho(\bx,\bx')\big|_{\bx'=\bx}=\mathbf{0},
        \qquad
        \nabla_{\bx'} \rho(\bx,\bx')\big|_{\bx'=\bx}=\mathbf{0}.
    \end{equation}
    It is clear to see that $\rho(\bx,\bx)=1$ for every $\bx\in \R^{n}$.
    To compute the first derivatives, we have for every $i\in[n]$,
    \[
        \partial_{x_i}\rho(\bx,\bx') = \frac{x'_i}{n\sqrt{q(\bx)q(\bx')}} - \frac{r(\bx,\bx')x_i}{n\, q(\bx)^{\frac{3}{2}}q(\bx')^{\frac{1}{2}}},
    \]
    and at $\bx'=\bx$ this becomes
    \[
        \partial_{x_i}\rho(\bx,\bx')\big|_{\bx'=\bx} = \frac{x_i}{n q(\bx)}-\frac{q(\bx)x_i}{n q(\bx)^2}=0.
    \]
    We also have the following second derivative identity for every $i,k\in[n]$ by diffentiating once more:
    \begin{equation*}
        \partial_{x_i}\partial_{x'_k}\rho(\bx,\bx')\big|_{\bx'=\bx} = \frac{\delta_{ik}}{nq(\bx)}-\frac{x_i x_k}{n^2 q(\bx)^2}.
    \end{equation*}
    So we can denote the second-derivative matrix by
    \begin{equation}
        B(\bx):=\nabla_{\bx} \nabla_{\bx'}^\top \rho(\bx,\bx')\big|_{\bx'=\bx}=\frac{1}{nq(\bx)}I_n-\frac{1}{n^2 q(\bx)^2} \bx \bx^{\top},
    \end{equation}
    According to the Sherman-Morrison formula, the inverse of $B(\bx)$ is given by
    \begin{equation} \label{eq:B-inverse}
        B(\bx)^{-1} = n q(\bx) \left(I_n+\frac{1}{n}\bx\bx^\top\right).
    \end{equation}
    We also have the following third derivative identity for every $i,j,k\in[n]$ by diffentiating once more. 
    We denote this third-derivative tensor by
    \begin{equation} \label{eq:A-ijk}
        A_{ij,k}(\bx):= \partial_{x_i}\partial_{x_j}\partial_{x'_k}\rho(\bx,\bx')\big|_{\bx'=\bx} = -\frac{\delta_{ik}x_j+\delta_{jk}x_i}{n^2 q(\bx)^2}+\frac{2x_i x_j x_k}{n^3 q(\bx)^3}.
    \end{equation}
    Finally, a direct differentiation of $\rho(\bx,\bx')=r(\bx,\bx')(q(\bx)q(\bx'))^{-1/2}$ shows that for every multi-index $\beta$ with $|\beta|\le 4$,
    \[
        \bigl|\partial^\beta \rho(\bx,\bx')\big|_{\bx'=\bx}\bigr|\le C_{n,\beta}\, q(\bx)^{-|\beta|/2}.
    \]
    In particular, for every $i,j,k\in[n]$, we have the following bounds for the second and third derivatives of $\rho$ on the diagonal, which will be used in the subsequent calculations:
    \begin{equation} \label{eq:diagonal-derivative}
      |B_{ik}(\bx)|\lesssim_{n} q(\bx)^{-1},
      \qquad
      |A_{ij,k}(\bx)|\lesssim_{n} q(\bx)^{-3/2},
      \qquad
      \bigl|\partial^4 \rho(\bx,\bx')\big|_{\bx'=\bx}\bigr|\lesssim_{n} q(\bx)^{-2}.
    \end{equation}
    Since $\widetilde K(\bx,\bx')=\Phi(\rho(\bx,\bx'))$ and the first derivatives of $\rho$ vanish on the diagonal,
    \[
        \operatorname{Cov}(g(\bx),\nabla g(\bx))
        =
        \nabla_{\bx'} \widetilde K(\bx,\bx')\big|_{\bx'=\bx}
        =
        \Phi'(1)\nabla_{\bx'} \rho(\bx,\bx')\big|_{\bx'=\bx}
        =0.
    \]
    Thus $g(\bx)$ and $\nabla g(\bx)$ are independent.
    Also,
    \[
        \Sigma(\bx):=\operatorname{Cov}(\nabla g(\bx))
        =
        \nabla_{\bx} \nabla_{\bx'}^\top \widetilde K(\bx,\bx')\big|_{\bx'=\bx}
        =
        \Phi'(1) B(\bx)
        =
        \alpha_d\left(\frac{1}{nq(\bx)}I_n-\frac{1}{n^2 q(\bx)^2}\bx\bx^\top\right).
    \]
    The eigenvalues of $\Sigma(\bx)$ are $\dfrac{\alpha_d}{nq(\bx)}$ with multiplicity of $n-1$, and $\dfrac{\alpha_d}{nq(\bx)^2}$ in the radial direction.
    Therefore, the determinant and the inverse of $\Sigma(\bx)$ are given by
    \begin{align}
      \det \Sigma(\bx) &=\dfrac{\alpha_d^n}{n^n q(\bx)^{n+1}}, \label{eq:detSigma} \\  
      \Sigma(\bx)^{-1} &=\frac{nq(\bx)}{\alpha_d}\left(I_n+\frac{1}{n}\bx\bx^\top\right), \label{eq:Sigma-inverse}
    \end{align}
    where the inverse can be found based on \cref{eq:B-inverse}.

    Let $\bu:=\bxi-\bx$. 
    Then
    \[
        \bu^\top \Sigma(\bx)^{-1}\bu
        =
        \frac{nq(\bx)}{\alpha_d}
        \left(
        \|\bu\|^2+\frac{(\bx^\top \bu)^2}{n}
        \right).
    \]
    Define
    \[
        D_{\bx}:=\|\bu\|^2+\frac{(\bx^\top \bu)^2}{n},
        \qquad
        \beta_d(\bx):=\frac{1}{2} \bu^\top \Sigma(\bx)^{-1} \bu
        =\frac{nq(\bx)}{2\alpha_d} D_{\bx}.
    \]
    Since $g(\bx)$ is standard normal and independent of $\nabla g(\bx)$, according to \cref{eq:detSigma}, we have 
    \begin{align}
        p_{(g(\bx),\nabla g(\bx))} \left(0,\frac{\bu}{\lambda}\right)
        & =
        (2\pi)^{-(n+1)/2}
        (\det \Sigma(\bx))^{-1/2}
        \exp\!\left(-\frac{\beta_d(\bx)}{\lambda^2}\right) \notag \\
        &= C_n\, \alpha_d^{-n/2} q(\bx)^{(n+1)/2} \exp\!\left(-\frac{\beta_d(\bx)}{\lambda^2}\right), \label{eq:joint-density}
    \end{align}
    where $C_n$ is a constant depending only on $n$.
    Set
    \[
      H(\bx):=\nabla^2 g(\bx)\in \mathrm{Sym}_n, \qquad
      \Sigma(\bx):=\operatorname{Cov}(\nabla g(\bx)).
    \]
    Since \(g\) is a Gaussian field, the random vector
    \[
      \Big(g(\bx),\,\nabla g(\bx),\,\operatorname{vec}(H(\bx))\Big)
    \]
    is jointly Gaussian. 
    Hence, by the Gaussian regression formula,
    \[
      \mathbb E\!\left[\operatorname{vec}(H(\bx)) \,\middle|\, g(\bx)=0,\ \nabla g(\bx)=\boldsymbol{v}\right]
      =
      \operatorname{Cov}\!\big(\operatorname{vec}(H(\bx)),(g(\bx),\nabla g(\bx))\big)
      \operatorname{Cov}\!\big((g(\bx),\nabla g(\bx))\big)^{-1} \begin{bmatrix}
        0 \\
        \boldsymbol{v}
      \end{bmatrix}.
    \]
    Because \(\operatorname{Cov}(g(\bx),\nabla g(\bx))=0\), this reduces to
    \[
      \mathbb E\!\left[\operatorname{vec}(H(\bx)) \,\middle|\, g(\bx)=0,\ \nabla g(\bx)=\boldsymbol{v}\right]
      =
      \operatorname{Cov}\!\big(\operatorname{vec}(H(\bx)),\nabla g(\bx)\big)\Sigma(\bx)^{-1}\boldsymbol{v}.
    \]
    Equivalently, entrywise,
    \[
      \mathbb E\!\left[\partial_{ij}g(\bx)\,\middle|\, g(\bx)=0, \nabla g(\bx)=\boldsymbol{v}\right]
      =
      \sum_{k=1}^n
      \operatorname{Cov}\!\big(\partial_{ij} g(\bx),\partial_k g(\bx)\big)
      (\Sigma(\bx)^{-1} \boldsymbol{v})_k.
    \]
    Now we have
    \[
        \operatorname{Cov}(\partial_{ij}g(\bx),\partial_k g(\bx)) = \partial_{x_i}\partial_{x_j}\partial_{x'_k}\widetilde K(\bx,\bx')\big|_{\bx'=\bx}.
    \]
    Since \(\widetilde K(\bx,\bx')=\Phi_d(\rho(\bx,\bx'))\), the Fa\`{a} di Bruno's formula \citep{bruno_1855,bruno_1857} gives
    \[
      \partial_{x_i}\partial_{x_j}\partial_{x'_k}\widetilde K(\bx, \bx')
      =
      \Phi_d'''(\rho)\,\partial_{x_i}\rho\,\partial_{x_j}\rho\,\partial_{x'_k}\rho
      +\Phi_d''(\rho)\Big(
      \partial_{x_i x_j}\rho\,\partial_{x'_k}\rho
      +\partial_{x_i x'_k}\rho\,\partial_{x_j}\rho
      +\partial_{x_j x'_k}\rho\,\partial_{x_i}\rho \Big)
      +\Phi_d'(\rho)\,\partial_{x_i x_j x'_k}\rho.
    \]
    On the diagonal \(\bx'=\bx\), we have \(\rho(\bx,\bx)=1\), \(\nabla_{\bx} \rho(\bx,\bx')\big|_{\bx'=\bx}=0\), and \(\nabla_{\bx'} \rho(\bx,\bx')\big|_{\bx'=x}=0\).
    Hence all terms containing first derivatives of \(\rho\) vanish, and therefore
    \[
      \operatorname{Cov}(\partial_{ij}g(\bx),\partial_k g(\bx))
      =
      \partial_{x_i}\partial_{x_j}\partial_{x'_k}\widetilde K(\bx,\bx')\big|_{\bx'=\bx}
      =
      \Phi_d'(1)\,
      \partial_{x_i}\partial_{x_j}\partial_{x'_k}\rho(\bx,\bx')\big|_{\bx'=\bx}
      = \alpha_d\, A_{ij,k}(\bx).
    \]
    Therefore, the $ij$-th entry of the conditional Hessian mean is given by
    \[
      \mathbb{E}[H_{ij}(\bx)\mid g(\bx)=0, \nabla g(\bx)=\boldsymbol{v}]
      =
      \alpha_d \sum_{k=1}^n A_{ij,k}(\bx)\,(\Sigma(\bx)^{-1} \boldsymbol{v})_k.
    \]
    Using the explicit formulas for $A_{ij,k}(\bx)$ (\cref{eq:A-ijk}) and $\Sigma(\bx)^{-1}$ (\cref{eq:Sigma-inverse}), we have
    \begin{align*}
      \sum_{k=1}^n A_{ij,k}(\bx)\,(\Sigma(\bx)^{-1} \boldsymbol{v})_k &=
      \sum_{k=1}^n \left(-\frac{\delta_{ik}x_j+\delta_{jk}x_i}{n^2 q(\bx)^2}+\frac{2x_i x_j x_k}{n^3 q(\bx)^3}\right) \left(\frac{nq(\bx)}{\alpha_d}\left(v_k+\frac{1}{n}(\bx^\top \boldsymbol{v}) x_k\right)\right) \\
      &=
      -\frac{x_j v_i+x_i v_j}{n q(\bx)\,\alpha_d}.
    \end{align*}
    Hence, the conditional Hessian mean can be expressed in matrix form as
    \begin{equation}
      \mathbb{E}[H(\bx)\mid g(\bx)=0, \nabla g(\bx)=\boldsymbol{v}]
      =
      \Big[-\frac{x_j v_i+x_i v_j}{n q(\bx)}\Big]_{i,j=1}^n
      = -\dfrac{\bx \boldsymbol{v}^\top + \boldsymbol{v} \bx^\top}{n q(\bx)}.
    \end{equation}
    In particular, the operator norm of the conditional Hessian mean is upper bounded by
    \begin{equation} \label{eq:conditional-Hessian-mean}
      \left\|
      \mathbb{E}[H(\bx)\mid g(\bx)=0,\nabla g(\bx)=\boldsymbol{v}]
      \right\|_{\mathrm{op}}
      \le
      \frac{2\|\bx\|\|\boldsymbol{v}\|}{n q(\bx)}
      \lesssim_{n} q(\bx)^{-1/2}\|\boldsymbol{v}\|.
    \end{equation}

    We next bound the covariance of $H(\bx)$.
    For every $i,j,k,\ell\in[n]$, we have
    \[
      \operatorname{Cov}(\partial_{ij} g(\bx),\partial_{k\ell} g(\bx))
      =
      \partial_{x_i x_j x'_k x'_\ell}\widetilde K(\bx,\bx')\Big|_{\bx'=\bx}.
    \]
    According to Fa\`{a} di Bruno's formula, the fourth derivative of \(\Phi_d(\rho)\) contains the following types of terms:
    \[
      \partial_{abcd}\Phi_d(\rho)
      =
      \Phi_d^{(4)}(\rho)\,\rho_a \rho_b \rho_c \rho_d
      +\Phi_d^{(3)}(\rho)\sum \rho_{ab}\rho_c \rho_d
      +\Phi_d^{(2)}(\rho)\sum \rho_{ab}\rho_{cd}
      +\Phi_d^{(2)}(\rho)\sum \rho_{abc}\rho_d
      +\Phi_d'(\rho)\,\rho_{abcd},
    \]
    where the sums range over the usual partitions of \(\{a,b,c,d\}\).
    Taking \(a=x_i\), \(b=x_j\), \(c=x'_k\), \(d=x'_\ell\), and evaluating at \(\bx'=\bx\), we use
    \[
      \rho(\bx,\bx)=1,\qquad
      \nabla_{\bx} \rho(\bx,\bx')\big|_{\bx'=\bx}=0,\qquad
      \nabla_{\bx'} \rho(\bx,\bx')\big|_{\bx'=\bx}=0.
    \]
    Hence every term containing a first derivative of \(\rho\) vanishes, and only the pairings of second derivatives and the fourth derivative remain:
    \[
      \partial_{x_i x_j x'_k x'_\ell}\widetilde K(\bx,\bx')\Big|_{\bx'=\bx}
      =
      \Phi_d''(1)\Big(
      \rho_{x_i x_j}\rho_{x'_k x'_\ell}
      +\rho_{x_i x'_k}\rho_{x_j x'_\ell}
      +\rho_{x_i x'_\ell}\rho_{x_j x'_k}
      \Big)\Big|_{\bx'=\bx}
      +\Phi_d'(1)\rho_{x_i x_j x'_k x'_\ell}\Big|_{\bx'=\bx}.
    \]
    According to \cref{eq:diagonal-derivative}, each second derivative of \(\rho\) at the diagonal is \(O(q(\bx)^{-1})\) and each fourth derivative is \(O(q(\bx)^{-2})\).
    It follows that
    \[
      \bigl|
      \operatorname{Cov}(\partial_{ij}g(\bx),\partial_{k\ell}g(\bx))
      \bigr|
      \lesssim_{n}
      \Phi_d''(1)q(\bx)^{-2}+\Phi_d'(1)q(\bx)^{-2}
      \lesssim_{n}
      d^2 q(\bx)^{-2},
    \]
    where the last inequality follows from \cref{eq:Phi-derivative}.

    Now let
    \[
      Y:=\operatorname{vec}(H(\bx)),\qquad
      Z:=\binom{g(\bx)}{\nabla g(\bx)}.
    \]
    Since \((Y,Z)\) is jointly Gaussian, the conditional covariance is
    \[
      \operatorname{Cov}(Y\mid Z)
      =
      \Sigma_{YY}-\Sigma_{YZ}\Sigma_{ZZ}^{-1}\Sigma_{ZY}
      \preceq
      \Sigma_{YY}.
    \]
    Therefore each conditional Hessian entry has variance at most \(C_n d^2 q(\bx)^{-2}\) for some constant \(C_n\) depending only on \(n\). 
    Writing
    \[
      H(\bx)\mid \{g(\bx)=0,\nabla g(\bx)=\boldsymbol{v}\}=M(\bx,\boldsymbol{v})+Z(\bx,\boldsymbol{v}),
    \]
    where \(M(\bx,\boldsymbol{v}):=\mathbb{E}[H(\bx)\mid g(\bx)=0, \nabla g(\bx)=\boldsymbol{v}]\) is the conditional mean and \(Z(\bx,\boldsymbol{v})\) is centered Gaussian, each entry of \(Z(\bx,\boldsymbol{v})\) has conditional \(m\)-th moment bounded by \(C_m(d/q(\bx))^m\) for every integer \(m\ge 1\) where \(C_m\) is a constant depending only on \(m\).
    Since the operator norm is upper bounded by the Frobenius norm,
    \[
      \|Z(\bx,\boldsymbol{v})\|_{\mathrm{op}}\le \|Z(\bx,\boldsymbol{v})\|_F
    \]
    and \(n\) is fixed, this yields
    \[
      \mathbb E\!\left[\|Z(\bx,\boldsymbol{v})\|_{\mathrm{op}}^m \,\middle|\, g(\bx)=0,\nabla g(\bx)=\boldsymbol{v}\right]
      \lesssim_{n,m}
      \left(\frac{d}{q(\bx)}\right)^m.
    \]
    Recall that, from \cref{eq:conditional-Hessian-mean}, we have 
    \[
      \|M(\bx,\boldsymbol{v})\|_{\mathrm{op}}\lesssim_{n} q(\bx)^{-1/2}\|\boldsymbol{v}\|.
    \]
    Combining the above two inequalities and using the triangle inequality, we have 
    \begin{align}
      \mathbb E \left[\|H(\bx)\|_{\mathrm{op}}^m \,\middle|\, g(\bx)=0,\nabla g(\bx)=\boldsymbol{v}\right] &\leq \mathbb E \left[\left(\|M(\bx,\boldsymbol{v})\|_{\mathrm{op}} + \|Z(\bx,\boldsymbol{v})\|_{\mathrm{op}}\right)^m \,\middle|\, g(\bx)=0,\nabla g(\bx)=\boldsymbol{v}\right] \notag \\ 
      &\lesssim_{m} \mathbb E \left[\|M(\bx,\boldsymbol{v})\|_{\mathrm{op}}^m + \|Z(\bx,\boldsymbol{v})\|_{\mathrm{op}}^m \,\middle|\, g(\bx)=0,\nabla g(\bx)=\boldsymbol{v}\right] \notag \\
      &\lesssim_{n,m} q(\bx)^{-m} d^m + q(\bx)^{-m/2}\|\boldsymbol{v}\|^m. \label{eq:Hessian-moment}
    \end{align}

    According to \cref{thm:ed-kac-rice}, for the normalized field \(g\), the integrand \(\mathcal{I}(\bx,\lambda)\) is given by
    \[
      I(\bx,\lambda)
      =
      \mathbb{E}\!\left[
      \left|
      \boldsymbol{v}^\top \operatorname{adj}(I_n+\lambda H(\bx))\, \boldsymbol{v}
      \right|
      \,\middle|\,
      g(\bx)=0,\ \nabla g(\bx)=\boldsymbol{v}
      \right],
      \qquad
      \boldsymbol{v}=\frac{\boldsymbol{u}}{\lambda} = \frac{\bxi-\bx}{\lambda}.
    \]
    Using \cref{lem:adjoint-bound}, we have
    \[
      \|\operatorname{adj}(I_n+\lambda H(\bx))\|_{\mathrm{op}}
      \le
      (1+|\lambda|\|H(\bx)\|_{\mathrm{op}})^{n-1}.
    \]
    We can then obtain the following bound for the integrand:
    \begin{equation}
      I(\bx,\lambda)
      \le
      \|\boldsymbol{v}\|^2
      \mathbb{E}\!\left[
      (1+|\lambda|\|H(\bx)\|_{\mathrm{op}})^{n-1}
      \,\middle|\,
      g(\bx)=0,\ \nabla g(\bx)=\boldsymbol{v}
      \right].
    \end{equation}
    Expanding the binomial and using \cref{eq:Hessian-moment}, we have
    \begin{align}
      I(\bx,\lambda)
      &\le
      \|\boldsymbol{v}\|^2
      \mathbb{E}\!\left[
      (1+|\lambda|\|H(\bx)\|_{\mathrm{op}})^{n-1}
      \,\middle|\,
      g(\bx)=0,\ \nabla g(\bx)=\boldsymbol{v}
      \right] \notag \\
      &= \|\boldsymbol{v}\|^2 \sum_{m=0}^{n-1} \binom{n-1}{m} |\lambda|^m \mathbb E\!\left[\|H(\bx)\|_{\mathrm{op}}^m \,\middle|\, g(\bx)=0,\nabla g(\bx)=\boldsymbol{v}\right] \notag \\
      &\lesssim_{n,m} \|\boldsymbol{v}\|^2 \sum_{m=0}^{n-1} \binom{n-1}{m} |\lambda|^m \left[q(\bx)^{-m} d^m + q(\bx)^{-m/2}\|\boldsymbol{v}\|^m\right] \notag \\
      &\lesssim_{n} \sum_{m=0}^{n-1} \|\boldsymbol{v}\|^2 |\lambda|^m \left[q(\bx)^{-m} d^m + q(\bx)^{-m/2}\|\boldsymbol{v}\|^m\right] \notag \\ 
      &= \sum_{m=0}^{n-1} \left[q(\bx)^{-m} \|\boldsymbol{u}\|^2 |\lambda|^{m-2} d^m + q(\bx)^{-m/2}\|\boldsymbol{u}\|^{m+2} |\lambda|^{-2}\right] .
      \label{eq:integrand-bound}
    \end{align}

    According to \cref{thm:ed-kac-rice}, the expected number of real critical points is given by
    \[
      \mathbb{E}[\mathcal{N}_d]
      =
      \int_{\mathbb{R}^n}\int_{\mathbb{R}\setminus\{0\}}
      |\lambda|^{-n}
      p_{(g(\bx),\nabla g(\bx))}\!\left(0,\frac{\boldsymbol{u}}{\lambda}\right)
      I(\bx,\lambda)\, d\lambda\, d \bx.
    \]
    Combining \cref{eq:joint-density} and \cref{eq:integrand-bound}, we have
    \begin{equation} \label{eq:expected-number-bound}
      \mathbb{E}[\mathcal{N}_d]
      \lesssim_{n}
      \sum_{m=0}^{n-1} (A_{m,d}+B_{m,d}),
    \end{equation}
    where
    \[
      A_{m,d}
      =
      \alpha_d^{-\frac{n}{2}}
      \int_{\mathbb{R}^n}
      q(\bx)^{\frac{n+1-2m}{2}}
      d^m
      \|\bxi-\bx\|^2
      \left(
      \int_{\mathbb{R}\setminus\{0\}}
      |\lambda|^{m-n-2} e^{-\beta_d(\bx)/\lambda^2}\, d\lambda
      \right)d \bx,
    \]
    and
    \[
      B_{m,d}
      =
      \alpha_d^{-\frac{n}{2}}
      \int_{\mathbb{R}^n}
      q(\bx)^{\frac{n+1-m}{2}} \|\bxi-\bx\|^{m+2}
      \left(
      \int_{\mathbb{R}\setminus\{0\}}
      |\lambda|^{-n-2} e^{-\beta_d(\bx)/\lambda^2}\, d\lambda
      \right)d\bx.
    \]
    We now use the elementary identity
    \begin{equation}
      \int_{\mathbb{R}\setminus\{0\}}
      |\lambda|^{-r} e^{-\beta/\lambda^2}\, d\lambda
      =
      \Gamma\!\left(\frac{r-1}{2}\right)\beta^{-(r-1)/2},
      \qquad r>1,\ \beta>0.
    \end{equation}
    To see this, since the integrand is even, we have 
    \[
      \int_{\mathbb{R}\setminus\{0\}} |\lambda|^{-r} e^{-\beta/\lambda^2}\, d\lambda
      = 2\int_0^\infty \lambda^{-r} e^{-\beta/\lambda^2}\, d\lambda.
    \]
    Let \( t = \beta/\lambda^2 \). Then \( \lambda = (\beta/t)^{1/2} \) and
    \[
      d\lambda = -\tfrac{1}{2}\beta^{1/2} t^{-3/2}\, dt.
    \]
    Hence
    \(
      \lambda^{-r} = \beta^{-r/2} t^{r/2},
    \)
    and
    \begin{align*}
      2\int_0^\infty \lambda^{-r} e^{-\beta/\lambda^2}\, d\lambda
      &= 2\int_\infty^0 \beta^{-r/2} t^{r/2} e^{-t}
      \left(-\tfrac{1}{2}\beta^{1/2} t^{-3/2}\, dt\right) \\
      &= \beta^{-(r-1)/2} \int_0^\infty t^{(r-3)/2} e^{-t}\, dt \\
      &= \beta^{-(r-1)/2}\Gamma\!\left(\dfrac{r-1}{2}\right).
    \end{align*}
    This is finite iff \(r>1\), which ensures integrability at infinity.
    Applying this with $r=n-m+2$ and $r=n+2$ gives
    \begin{align*}
      A_{m,d}
      & \lesssim_{n,m}
      \alpha_d^{-\frac{n}{2}}
      d^m
      \int_{\mathbb{R}^n}
      q(\bx)^{\frac{n+1-2m}{2}}
      \|\bxi-\bx\|^2
      \beta_d(\bx)^{-\frac{n-m+1}{2}}\, d \bx, \\
      B_{m,d}
      &\lesssim_{n,m}
      \alpha_d^{-\frac{n}{2}}
      \int_{\mathbb{R}^n}
      q(\bx)^{\frac{n+1-m}{2}}
      \|\bxi-\bx\|^{m+2}
      \beta_d(\bx)^{-\frac{n+1}{2}}\, d \bx.
    \end{align*}
    Recall that
    \[
      \beta_d(\bx)=\frac{nq(\bx)}{2\alpha_d}D_{\bx},
      \qquad
      D_{\bx}=\|\bxi-\bx\|^2+\frac{\left(\bx^\top \left(\bxi-\bx\right)\right)^2}{n},
    \]
    we obtain
    \begin{align}
      A_{m,d}
      &\lesssim_{n,m}
      \alpha_d^{-\frac{m-1}{2}}
      d^{m}
      \int_{\mathbb{R}^n}
      q(\bx)^{-\frac{m}{2}}
      \|\bxi-\bx\|^2 D_{\bx}^{-\frac{n-m+1}{2}}
      \, d \bx, \label{eq:Amd-bound}\\ 
      B_{m,d}
      &\lesssim_{n,m}
      \alpha_d^{\frac{1}{2}}
      \int_{\mathbb{R}^n}
      q(\bx)^{-\frac{m}{2}}
      \|\bxi-\bx\|^{m+2} D_{\bx}^{-\frac{n+1}{2}}
      \, d\bx. \label{eq:Bmd-bound}
    \end{align}

    Thus it remains only to prove that the $\bx$-integrals are finite.
    Define
    \[
      J_m(\bxi):=
      \int_{\mathbb{R}^n}
      q(\bx)^{-\frac{m}{2}}
      \|\bxi-\bx\|^2 D_{\bx}^{-\frac{n-m+1}{2}}
      \, d \bx,
      \qquad
      K_m(\bxi):=
      \int_{\mathbb{R}^n}
      q(\bx)^{-\frac{m}{2}}
      \|\bxi-\bx\|^{m+2} D_{\bx}^{-\frac{n+1}{2}}
      \, d\bx.
    \]
    We show that $J_m(\bxi),K_m(\bxi)<\infty$ for every $m=0,\dots,n-1$.
    
    We first show that the integrands are locally integrable near $x=\xi$.
    Since \(q(\bx)=1+\|\bx\|^2/n\) is continuous and strictly positive, for every \(\delta>0\) there exists \(C_{\bxi,\delta,m}>0\) such that
    \[
      q(\bx)^{-m/2}\le C_{\bxi,\delta,m}
      \qquad\text{for all } \bx\in \mathbb{B}(\bxi,\delta).
    \]
    Also, we have
    \[
      D_{\bx}=\|\bxi-\bx\|^2+\frac{(\bx^\top(\bxi-\bx))^2}{n}\ge \|\bxi-\bx\|^2.
    \]
    Therefore, for \(\bx\in \mathbb{B}(\bxi,\delta)\), and $m \in\{0,\dots,n-1\}$, we have
    \[
      q(\bx)^{-\frac{m}{2}} \|\bxi-\bx\|^2 D_{\bx}^{-\frac{n-m+1}{2}}
      \le
      C_{\bxi,\delta,m}\,\|\bxi-\bx\|^{m-n+1},
    \]
    and
    \[
      q(\bx)^{-\frac{m}{2}} \|\bxi-\bx\|^{m+2} D_{\bx}^{-\frac{n+1}{2}}
      \le
      C_{\bxi,\delta,m}\,\|\bxi-\bx\|^{m-n+1}.
    \]
    Hence it suffices to show that \(\|\bxi-\bx\|^{m-n+1}\) is locally integrable near \(\bx=\bxi\).
    After the change of variables \(\boldsymbol{u}=\bxi-\bx\), this reduces to
    \[
      \int_{\|\boldsymbol{u}\|<\delta}\|\boldsymbol{u}\|^{m-n+1}\,d\boldsymbol{u}.
    \]
    Using polar coordinates, we can write $\boldsymbol{u} = r \boldsymbol{\theta}$ where $r=\|\boldsymbol{u}\|$ and $\boldsymbol{\theta}\in \mathbb{S}^{n-1}$.
    The Jacobian of this transformation is \(r^{n-1}\), and the surface area of \(\mathbb{S}^{n-1}\) is \(|\mathbb{S}^{n-1}|\). 
    Hence the above integral can be expressed as
    \[
      \int_{\|\boldsymbol{u}\|<\delta}\|\boldsymbol{u}\|^{m-n+1}\,d\boldsymbol{u} = |\mathbb S^{n-1}|\int_0^\delta r^{(m - n + 1) + (n-1)}\,dr \lesssim_{n,m} \delta^{m+1} <\infty.
    \]

    Now we consider the case when $\|\bx\|$ is large.
    Set \(R_0:=2\|\bxi\|+2\). 
    If \(\|\bx\|\ge R_0\), then
    \[
      \|\bxi-\bx\|\le \|\bx\|+\|\bxi\|\le \frac{3}{2}\|\bx\|,
    \]
    and
    \[
      |\bx^\top(\bxi-\bx)|
      =
      |\bx^\top\bxi-\|\bx\|^2|
      \ge \|\bx\|^2-\|\bx\|\,\|\bxi\|
      \ge \frac{\|\bx\|^2}{2}.
    \]
    Therefore, we have
    \[
      D_{\bx}
      =
      \|\bxi-\bx\|^2+\frac{(\bx^\top(\bxi-\bx))^2}{n}
      \ge \frac{\|\bx\|^4}{4n},
      \qquad
      q(\bx)=1+\frac{\|\bx\|^2}{n}\ge \frac{\|\bx\|^2}{n}.
    \]
    It follows that
    \begin{align}
      q(\bx)^{-\frac{m}{2}} \|\bxi-\bx\|^2 D_{\bx}^{-\frac{n-m+1}{2}}
      & \le C_{n,\bxi,m} ^{(1)}\,\|\bx\|^{-m + 2 - 2(n-m+1)} = C_{n,\bxi,m} ^{(1)}\,\|\bx\|^{-2n+m}, \\ 
      q(\bx)^{-\frac{m}{2}} \|\bxi-\bx\|^{m+2} D_{\bx}^{-\frac{n+1}{2}}
      & \le C_{n,\bxi,m} ^{(2)}\,\|\bx\|^{-m + m + 2 - 2(n+1)} = C_{n,\bxi,m} ^{(2)}\,\|\bx\|^{-2n}.
    \end{align}
    Since \(m\le n-1\), we have \(2n-m \ge n+1 > n\), and \(2n>n\).
    Hence, by polar coordinates,
    \[
      \int_{\{\|\bx\|\ge R_0\}} \|\bx\|^{-2n+m}\,d\bx
      =
      |\mathbb S^{n-1}|\int_{R_0}^\infty r^{-2n+m}r^{n-1}\,dr
      =
      |\mathbb S^{n-1}|\int_{R_0}^\infty r^{-n+m-1}\,dr<\infty,
    \]
    because \(-n+m-1\le -2\). 
    Likewise, we have 
    \[
      \int_{\{\|\bx\|\ge R_0\}} \|\bx\|^{-2n}\,d\bx
      =
      |\mathbb S^{n-1}|\int_{R_0}^\infty r^{-2n}r^{n-1}\,dr
      =
      |\mathbb S^{n-1}|\int_{R_0}^\infty r^{-n-1}\,dr<\infty.
    \]
    Therefore both right-hand sides are integrable on \(\{\|\bx\|\ge R_0\}\).
    We conclude that $J_m(\bxi),K_m(\bxi)<\infty$ for every $m=0,\dots,n-1$.

    Now, we can finally find the desired bound for $\mathbb{E}[\mathcal{N}_d]$.
    Combining \cref{eq:expected-number-bound,eq:Amd-bound,eq:Bmd-bound}, we have
    \begin{align*}
      \mathbb{E}[\mathcal{N}_d]
      &\lesssim_{n}
      \sum_{m=0}^{n-1} (A_{m,d}+B_{m,d}) \\ 
      &\lesssim_{n,m}
      \sum_{m=0}^{n-1}
      \left[
      \alpha_d^{-\frac{m-1}{2}}
      d^{m}
      \int_{\mathbb{R}^n}
      q(\bx)^{-\frac{m}{2}}
      \|\bxi-\bx\|^2 D_{\bx}^{-\frac{n-m+1}{2}}
      \, d \bx \right. \\ 
      &\qquad \qquad \qquad \left. + 
      \alpha_d^{\frac{1}{2}}
      \int_{\mathbb{R}^n}
      q(\bx)^{-\frac{m}{2}}
      \|\bxi-\bx\|^2 D_{\bx}^{-\frac{n-m+1}{2}}
      \, d \bx
      \right] \\ 
      & \lesssim_{n,m, \bxi}
      \sum_{m=0}^{n-1} \alpha_d^{-\frac{m-1}{2}} d^m
      +
      \sum_{m=0}^{n-1} \alpha_d^{\frac{1}{2}} < \sum_{m=0}^{n-1} (\dfrac{d}{2})^{-\frac{m-1}{2}} d^m
      +
      \sum_{m=0}^{n-1} d^{\frac{1}{2}}
      \lesssim_{n} d^{\frac{n}{2}},
    \end{align*}
    where second-to-last inequality uses the fact \(\dfrac{d}{2} <  \alpha_d = \dfrac{d^2}{2d-1} \leq d\).
    The proof is complete.
  \end{proof}

  \subsection{Proof of \cref{prop:ed-kac-rice}}
  \begingroup
  \renewcommand{\thetheorem}{\ref{prop:ed-kac-rice}}
  \renewcommand{\theHtheorem}{restated.\ref{prop:ed-kac-rice}}
  %
  \coredkacrice* \endgroup
  \begin{proof}
    When the input dimension of $f$ is $n=1$, the hypersurface
    $\{x \in \R: f(x) = 0\}$ degenerates to a finite set of points.
    In this the case, the number of real ED critical points coincides with the
    number of real zeros of the random function $f(x) = 0$.
    By \cref{prop:nngp-shallow-poly}, $f$ is a centered Gaussian process with
    kernel $K(x,x')$ defined as
    \[
      K(x,x') := \sum_{s=0}^{\left\lfloor \frac{d}{2} \right\rfloor}c(s,d) \left(
      x^{2}+ 1\right)^{s}\left((x')^{2}+ 1\right)^{s}(xx' + 1)^{d-2s},
    \]
    where $c(s,d) = \dfrac{(d!)^{2}}{2^{2s}(s!)^{2}(d-2s)!}$.
    When $x' = x$, the kernel simplifies to
    \[
      K(x,x) = \operatorname{Var}[f(x)] = (2d - 1)!! (x^{2}+ 1)^{d}.
    \]
    According to Kac--Rice formula, the expected number of real zeros of $f$ in
    an interval $U \subseteq \R$ is given by
    \[
      \mathbb{E}\Big[\#\{x \in U:\ f(x)=0\}\Big] = \int_{U}\mathbb{E}\left[ |f'(x
      )| \ \middle|\ f(x)=0 \right] p_{f(x)}(0)\, dx.
    \]
    We define the Kac--Rice density as
    \[
      \rho(x) := \mathbb{E}\left[|f'(x)| \ \middle|\ f(x)=0 \right] p_{f(x)}(0).
    \]
    Expressing the conditional expectation using the joint density of $(f(x), f'(
    x))$, we have
    \begin{align*}
      \rho(x) & = \int_{\mathbb{R}}|y| \, p_{(f(x), f'(x))}(0, y)\, dy                                   \\
              & = \int_{\mathbb{R}}|y| \, \frac{p_{(f(x), f'(x))}(0, y)}{p_{f(x)}(0)}\, p_{f(x)}(0)\, dy \\
              & = \int_{\mathbb{R}}|y| \, p_{(f(x), f'(x))}(0, y) \, dy.
    \end{align*}
    The joint density $p_{(f(x), f'(x))}(0, y)$ is a bivariate Gaussian density
    with mean zero and covariance
    \[
      K_{F}(x,x) :=
      \begin{bmatrix}
        K(x,x)                             & \partial_{x'}K(x,x') \big|_{x' = x}             \\
        \partial_{x}K(x,x') \big|_{x' = x} & \partial_{x}\partial_{x'}K(x,x') \big|_{x' = x}
      \end{bmatrix}.
    \]
    Then we have
    \begin{align*}
      \rho(x) & = \int_{\mathbb{R}}|y| \, \frac{1}{2\pi\sqrt{\det K_{F}(x,x)}}\exp\left(-\frac{1}{2}\begin{bmatrix}0 \\ y\end{bmatrix}^{\top}K_{F}(x,x)^{-1}\begin{bmatrix}0 \\ y\end{bmatrix}\right) dy \\
              & = \int_{\mathbb{R}}|y| \, \frac{1}{2\pi\sqrt{\det K_{F}(x,x)}}\exp\left(-\frac{y^{2}}{2}\cdot \frac{K(x,x)}{\det K_{F}(x,x)}\right) dy                                                   \\
              & = \frac{1}{2\pi\sqrt{\det K_{F}(x,x)}}\cdot \frac{2 \det K_{F}(x,x)}{K(x,x)}                                                                                                             \\
              & = \frac{\sqrt{\det K_{F}(x,x)}}{\pi K(x,x)}.
    \end{align*}
    By \cref{prop:poly-nngp-kernel}, we have
    \begin{align*}
      \partial_{x}K(x,x') \big|_{x' = x}              & = d(2d - 1)!! \, x (x^{2}+ 1)^{d-1},                                             \\
      \partial_{x}\partial_{x'}K(x,x') \big|_{x' = x} & = 2d^{2}(d-1)(2d - 3)!! \, x^{2}(x^{2}+ 1)^{d-2}+ d^{2}(2d-3)!!(x^{2}+ 1)^{d-1}.
    \end{align*}
    The determinant of $K_{F}(x,x)$ can be computed as
    \begin{align*}
      \det K_{F}(x,x) & = K(x,x) \cdot \partial_{x}\partial_{x'}K(x,x') \big|_{x' = x}- \left(\partial_{x'}K(x,x') \big|_{x' = x}\right)^{2}        \\
                      & = (2d - 1)!! (x^{2}+ 1)^{d}\cdot \left[2d^{2}(d-1)(2d - 3)!! \, x^{2}(x^{2}+ 1)^{d-2}+ d^{2}(2d-3)!!(x^{2}+ 1)^{d-1}\right] \\
                      & \quad - \left[d(2d - 1)!! \, x (x^{2}+ 1)^{d-1}\right]^{2}                                                                  \\
                      & = d^{2}(2d-1){((2d - 3)!!)}^{2}(x^{2}+ 1)^{2d - 2}.
    \end{align*}
    The Kac--Rice density can then be found by
    \[
      \rho(x) = \frac{\sqrt{\det K_{F}(x,x)}}{\pi K(x,x)}= \frac{d\sqrt{(2d-1)}{(2d - 3)!!}(x^{2}+
      1)^{d - 1}}{\pi (2d-1)!!(x^{2}+1)^{d}}= \frac{d}{\sqrt{2d-1}}\cdot \frac{1}{\pi(x^{2}+1)} . 
    \]
    Therefore, the expected number of real zeros of $f$ in an $\R$ is given by
    \[
      \mathbb{E}\Big[\#\{x \in U:\ f(x)=0\}\Big] = \int_{\R}\rho(x) dx = \int_{\R}
      \frac{d}{\sqrt{2d-1}}\cdot \frac{1}{\pi(x^{2}+1)}dx = \frac{d}{\sqrt{2d-1}} . 
    \]

    \cref{fig:kac-rice-real-roots} shows a simulation study that verifies the
    above theoretical result.
    \begin{figure}[ht]
      \vskip 0.2in
      \begin{center}
        \centerline{\includegraphics[width=0.6\columnwidth]{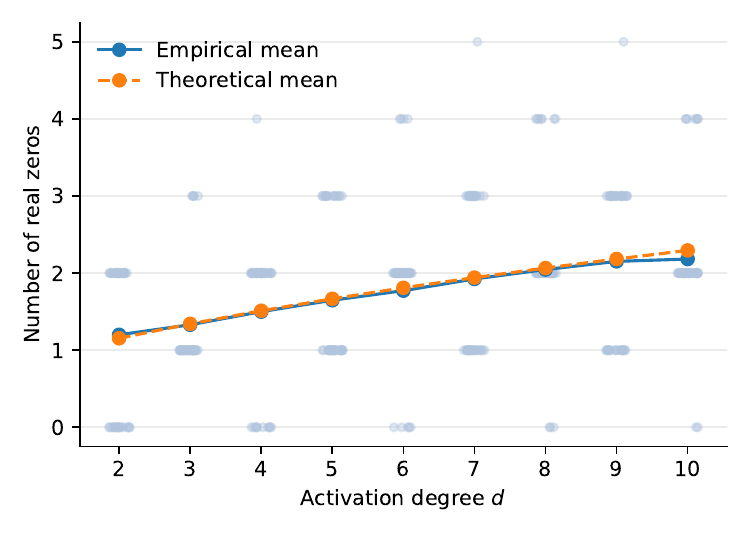}}
        \caption{ Number of real zeros vs.\ activation degree.
        Light points show individual simulations; solid and dashed lines denote
        empirical and theoretical means ($n=1,000$ samples, $m=20,000$ hidden
        units). }
        \label{fig:kac-rice-real-roots}
      \end{center}
    \end{figure}
  \end{proof}

  \section{Auxiliary Lemmas and Proofs for Expected Real ED Degree}  
  \subsection{Remark of \cref{thm:ed-kac-rice}}
  \begin{remark}
    When the random field is a Gaussian process, we provide some simplification
    of the above result.
    Let $f : \mathbb{R}^{n}\to \mathbb{R}$ be a Gaussian process, i.e.,
    $f \sim \mathcal{GP}(0, K),$ with a covariance kernel
    $K_{f}: \mathbb{R}^{n}\times \mathbb{R}^{n}\to \mathbb{R}$ that is $C^{4}$.
    Then the joint density term $p_{\left(f(\bx), \nabla f(\bx)\right)}(0, \frac{\bxi
    - \bx}{\lambda})$ can be found by
    \begin{equation*}
      p_{(f(\bx), \nabla f(\mathbf{x}))}\left(0, \frac{\bxi - \bx}{\lambda}\right
      ) = \left((2\pi)^{n+1}K_{f}(\bx, \bx) \det S(\bx)\right)^{-\frac{1}{2}}\exp
      {\left(-\frac{\left(\bxi - \bx\right)^{\top}S(\bx)^{-1}\left(\bxi - \bx\right)}{2\lambda^{2}} \right)}
      ,
    \end{equation*}
    where $S(\bx) \in \R^{n \times n}$ is defined as
    \begin{equation*}
      S(\bx) := \nabla_{\bx}\nabla_{\bx'}^{\top}K_{f}(\bx,\bx') \big|_{\bx'=\bx}-
      \frac{\nabla_{\bx}K_{f}(\bx,\bx') \big|_{\bx'=\bx}\left(\nabla_{\bx'}K_{f}(\bx,\bx')
      \big|_{\bx'=\bx}\right)^{\top}}{K_{f}(\bx,\bx)}.
    \end{equation*}

    By \cref{lem:augmented-gp}, the joint distribution of $(f(\bx), \nabla f( \bx
    ))$ is Gaussian with mean zero and covariance
    \[
      K_{F}(\bx,\bx) :=
      \begin{bmatrix}
        K_{f}(\bx,\bx)                               & \left(\nabla_{\bx'}K_{f}(\bx,\bx') \big|_{\bx'=\bx}\right)^{\top} \\
        \nabla_{\bx}K_{f}(\bx,\bx') \big|_{\bx'=\bx} & \nabla_{\bx}\nabla_{\bx'}^{\top}K_{f}(\bx,\bx') \big|_{\bx'=\bx}
      \end{bmatrix}.
    \]
    Therefore, the density
    $p_{\left(f(\bx), \nabla f(\bx)\right)}(0, \frac{\bxi - \bx}{\lambda})$ is
    given by
    \begin{align}
      p_{\left(f(\bx), \nabla f(\bx)\right)}(0, \frac{\bxi - \bx}{\lambda}) & = \frac{1}{(2\pi)^{(n+1)/2}\sqrt{\det K_F(\bx,\bx)}}\exp\left(-\frac{1}{2}\begin{bmatrix}0 \\ \frac{\bxi - \bx}{\lambda}\end{bmatrix}^{\top}K_{F}(\bx,\bx)^{-1}\begin{bmatrix}0 \\ \frac{\bxi - \bx}{\lambda}\end{bmatrix}\right) \nonumber \\
                                                                            & = \frac{1}{\sqrt{(2\pi)^{n+1}K_{f}(\bx, \bx) \det S(\bx)}}\exp\left(-\frac{(\bxi - \bx)^{\top}S(\bx)^{-1}(\bxi - \bx)}{2\lambda^{2}}\right),
    \end{align}
    where the last equality follows from the block matrix inversion formula via
    Schur complement.
    To find $\mathcal{I}(\bx,\lambda)$, we need to find the conditional
    distribution of $\nabla^{2}f(\bx)$ given $f(\bx) = 0$ and
    $\nabla f(\bx) = \frac{\bxi - \bx}{\lambda}$.
    Since the kernel $K_{f}$ is $C^{4}$, by a similar argument as in
    \cref{lem:augmented-gp}, the joint distribution of $\left(f(\bx), \nabla f(\bx
    ), \operatorname{vech}{(\nabla^{2} f(\bx))}\right)$ is still Gaussian.
    The covariance matrix can be computed using the derivatives of the kernel
    $K_{f}$, i.e.,
    \begin{align*}
      \Cov \bigl(\partial_{ij}f(\bx),\, f(\bx)\bigr)                 & = \left.\partial_{x_i}\partial_{x_j}K(\bx,\bx')\right|_{\bx'=\bx},                                  \\[0.5em]
      \Cov \bigl(\partial_{ij}f(\bx),\, \partial_{k}f(\bx)\bigr)     & = \left.\partial_{x_i}\partial_{x_j}\partial_{x'_k}K(\bx,\bx')\right|_{\bx'=\bx},                   \\[0.5em]
      \Cov \bigl(\partial_{ij}f(\bx),\, \partial_{k\ell}f(\bx)\bigr) & = \left.\partial_{x_i}\partial_{x_j}\partial_{x'_k}\partial_{x'_\ell}K(\bx,\bx')\right|_{\bx'=\bx}.
    \end{align*}
    We don't provide the explicit formula here due to its complexity, but it can
    be derived using standard results on conditional distributions of
    multivariate Gaussians.
    Giving closed-form expressions for the expected number of real ED critical points
    would be challenging in general, but it may be possible for some specific
    kernels due to their special structures.
    For example, when $K_{f}$ is a stationary kernel, i.e., $K_{f}(\bx,\bx') = \kappa
    (\bx - \bx')$ for some function $\kappa:\R^{n}\to\R$, $f(\bx)$ and
    $\nabla f(\bx)$ are independent random variables, which simplifies the expression
    of $p_{(f(\bx), \nabla f(\bx))}(0, \frac{\bxi - \bx}{\lambda})$ significantly.
  \end{remark}

  \subsection{Joint Gaussianity of the Augmented Process}
  \begin{lemma}
        \label{lem:augmented-gp} Let $f: \R^{n}\to \R$ be a centered Gaussian process
    with kernel $K_{f}\in C^{2}$.
    Define the augmented process $g(\bx) =
    \begin{bmatrix}
      f(\bx)        \\
      \nabla f(\bx)
    \end{bmatrix}
    \in \R^{n+1}$, where $\nabla f(\bx)$ is the gradient of $f$ at $\bx$.
    Then, $g: \R^{n}\to \R^{n+1}$ is a vector-valued Gaussian process with mean zero
    and kernel
    \[
      K_{g}(\bx,\bx') =
      \begin{bmatrix}
        K_{f}(\bx,\bx')             & (\nabla_{\bx'}K_{f}(\bx,\bx'))^{\top}           \\
        \nabla_{\bx}K_{f}(\bx,\bx') & \nabla_{\bx}\nabla_{\bx'}^{\top}K_{f}(\bx,\bx')
      \end{bmatrix}.
    \]
  \end{lemma}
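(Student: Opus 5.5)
The lemma follows from the fact that Gaussianity is preserved under linear operations and limits. The gradient is an $L^2$-limit of difference quotients, which are finite linear combinations of values of $f$. The plan has three steps.

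\emph{Step 1: mean-square differentiability.} Fix $\bx$ and a coordinate direction $e_i$, and set
\[
  D_h f(\bx) := \frac{f(\bx+he_i)-f(\bx)}{h}.
\]
We expand $\mathbb{E}\big[(D_h f(\bx)-D_{h'} f(\bx))^2\big]$ in terms of $K_f$. Because $K_f\in C^2$, its mixed second derivative $\partial_{x_i}\partial_{x'_i}K_f$ is continuous. A second-order Taylor (mean value) argument then shows that this quantity tends to $0$ as $h,h'\to 0$. Hence $D_h f(\bx)$ is Cauchy in $L^2$ and converges to a limit, which we denote $\partial_i f(\bx)$. In the setting of the paper, where $f$ is a.s.\ $C^1$, this $L^2$-limit coincides a.s.\ with the pathwise derivative, because the pathwise derivative is the a.s.\ limit of the same quotients.

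\emph{Step 2: joint Gaussianity.} Take finitely many points $\bx^{(1)},\dots,\bx^{(m)}$. Any finite linear combination of the coordinates of $g(\bx^{(1)}),\dots,g(\bx^{(m)})$ is an $L^2$-limit of finite linear combinations of values of $f$. Each such combination is a centered Gaussian variable. The set of centered Gaussians, including degenerate ones, is closed under $L^2$-limits, since the characteristic functions $\exp(-\sigma_k^2 t^2/2)$ converge. Therefore every finite-dimensional marginal of $g$ is centered jointly Gaussian, and $g$ is a vector-valued Gaussian process with mean zero.

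\emph{Step 3: the covariance.} Covariance is continuous under $L^2$ convergence, so we may pass to the limit inside each entry. This gives
\[
  \operatorname{Cov}\big(f(\bx),\partial_{k} f(\bx')\big)=\lim_{h\to 0}\frac{K_f(\bx,\bx'+he_k)-K_f(\bx,\bx')}{h}=\partial_{x'_k}K_f(\bx,\bx').
\]
For two gradient entries we take a double limit:
\[
  \operatorname{Cov}\big(\partial_i f(\bx),\partial_k f(\bx')\big)=\lim_{h,h'\to 0}\Delta_{h,h'}K_f=\partial_{x_i}\partial_{x'_k}K_f(\bx,\bx'),
\]
where $\Delta_{h,h'}$ is the mixed second difference quotient. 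Assembling these entries gives exactly the block form of $K_g$ stated in the lemma.

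The main obstacle is justifying the double limit in Step 3, and the corresponding Cauchy estimate in Step 1. I would handle both by writing the mixed second difference as an average of $\partial_{x_i}\partial_{x'_k}K_f$ over a small rectangle, using the mean value theorem twice. Continuity of that derivative, which is where the hypothesis $K_f\in C^2$ is used, then gives convergence to its value at $(\bx,\bx')$.
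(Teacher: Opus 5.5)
Your proposal is correct and follows essentially the same route as the paper. Both arguments show that difference quotients are Gaussian and converge in $L^2$ to the partial derivatives, use the fact that $L^2$-limits of centered Gaussians are centered Gaussian, and obtain the covariance blocks by passing the limit through the expectation. You also supply details the paper only asserts, namely the Cauchy estimate via the mixed second difference quotient and the a.s.\ identification of the mean-square derivative with the pathwise gradient, which genuinely requires sample-path differentiability and not just $K_f\in C^2$.
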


  \begin{proof}
    We first show that $g(\bx)$ is a Gaussian process.
    We need to show that, if we fix any finite set of points $\bx^{(1)}, \dots ,
    \bx^{(m)}\in \R^{n}$, then any linear combination of the random variables
    $g(\bx^{(1)}), \dots, g(\bx^{(m)})$ is Gaussian.
    Let $\alpha^{(1)},\dots,\alpha^{(m)}\in\mathbb{R}$ and $\boldsymbol{\beta}^{(1)}
    ,\dots,\boldsymbol{\beta}^{(m)}\in\mathbb{R}^{n}$ be arbitrary.
    For each $i\in[m]$ and $j\in[n]$, define the difference quotient random variables
    \[
      D_{h}^{(i,j)}\;:=\; \frac{f\!\left(\bx^{(i)}+ h\,\mathbf{e}_{j}\right) - f\!\left(\bx^{(i)}\right)}{h}
      , \qquad h\neq 0,
    \]
    where $\mathbf{e}_{j}$ is the $j$-th standard basis vector in $\mathbb{R}^{n}$.
    Since $\bigl(f(\bx^{(i)}+ h\mathbf{e}_{j}), f(\bx^{(i)})\bigr)$ is jointly
    Gaussian, each $D_{h}^{(i,j)}$ is Gaussian, and hence any finite linear
    combination
    \[
      S_{h}\;:=\; \sum_{i=1}^{m}\alpha^{(i)}f(\bx^{(i)}) \;+\; \sum_{i=1}^{m}\sum
      _{j=1}^{n}\beta^{(i)}_{j}\, D_{h}^{(i,j)}
    \]
    is Gaussian for every fixed $h\neq 0$.

    Because $K_{f}$ is $C^{2}$, the mean-square derivatives of $f$ exist and coincide
    with the gradient components $\partial_{j}f(\bx^{(i)})$, and moreover
    \[
      D_{h}^{(i,j)}\xrightarrow[h\to 0]{}\partial_{j}f(\bx^{(i)}) \quad \text{in }
      L^{2}.
    \]
    Consequently, $S_{h}\to S$ in $L^{2}$, where
    \[
      S \;=\; \sum_{i=1}^{m}\alpha^{(i)}f(\bx^{(i)}) \;+\; \sum_{i=1}^{m}\sum_{j=1}
      ^{n}\beta^{(i)}_{j}\, \partial_{j}f(\bx^{(i)}).
    \]
    Since an $L^{2}$-limit of Gaussian random variables is Gaussian, it follows that
    $S$ is Gaussian.
    This proves that $\bigl(g(\bx^{(1)}),\dots,g(\bx^{(m)})\bigr)$ is jointly
    Gaussian, hence $g$ is a vector-valued Gaussian process.

    Next, we compute the mean and kernel of $g$.
    Since $f$ is centered, we have
    $\mathbb{E}[\nabla f(\bx)] = \nabla_{\bx}\mathbb{E}[f(\bx)] = 0$ due to the
    mean-square differentiability of $f$.
    Therefore, $\mathbb{E}[g(\bx)] = 0$.

    It remains to compute the covariance blocks.
    We have $\mathrm{Cov}(f(\bx),f(\bx')) = K_{f}(\bx,\bx')$ by definition.
    For the cross-covariance, using mean-square differentiability, we have
    \[
      \mathrm{Cov}\bigl(f(\bx),\nabla f(\bx')\bigr) = \mathbb{E}\bigl[f(\bx )\,(\nabla
      f(\bx'))^{\top}\bigr] = \nabla_{\bx'}\mathbb{E}\bigl[f(\bx )\,f(\bx')\bigr]
      = \nabla_{\bx'}K_{f}(\bx,\bx').
    \]
    which gives the top-right block as $(\nabla_{\bx'}K_{f}(\bx,\bx'))^{\top}$.
    Similarly,
    \[
      \mathrm{Cov}\bigl(\nabla f(\bx), f(\bx')\bigr) = \nabla_{\bx}K_{f}(\bx ,\bx
      '),
    \]
    and for the bottom-right block,
    \[
      \mathrm{Cov}\bigl(\nabla f(\bx), \nabla f(\bx')\bigr) = \mathbb{E}\bigl [\nabla
      _{\bx}f(\bx)\,(\nabla_{\bx'}f(\bx'))^{\top}\bigr] = \nabla_{\bx}\nabla_{\bx'}
      ^{\top}\mathbb{E}\bigl[f(\bx)\,f(\bx')\bigr] = \nabla_{\bx}\nabla_{\bx'}^{\top}
      K_{f}(\bx,\bx').
    \]
    Stacking these blocks yields the stated matrix-valued kernel $K_{g}$.
  \end{proof}

  \subsection{Operator Norm Bound for the Adjoint Matrix}
  \begin{lemma} \label{lem:adjoint-bound}
    For any $H \in \mathbb{R}^{n \times n}$ and $\lambda \in \mathbb{R}$,
    \[
    \|\operatorname{adj}(I_n+\lambda H)\|_{\mathrm{op}}
    \leq
    \bigl(1+|\lambda|\,\|H\|_{\mathrm{op}}\bigr)^{n-1}.
    \]
  \end{lemma}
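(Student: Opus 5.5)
We need to prove $\|\operatorname{adj}(I_n+\lambda H)\|_{\mathrm{op}}\le (1+|\lambda|\|H\|_{\mathrm{op}})^{n-1}$. The plan is to reduce the bound to a statement about singular values of $A:=I_n+\lambda H$ and then use Weyl's perturbation inequality for singular values.

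\textbf{Step 1: the invertible case via singular values.} First assume $A$ is invertible. Then $\operatorname{adj}(A)=\det(A)\,A^{-1}$. Let $A$ have singular values $s_1\ge\cdots\ge s_n>0$. Since $|\det A|=\prod_i s_i$ and $\|A^{-1}\|_{\mathrm{op}}=1/s_n$, we obtain
\[
\|\operatorname{adj}(A)\|_{\mathrm{op}}=\frac{\prod_{i=1}^n s_i}{s_n}=\prod_{i=1}^{n-1}s_i\le s_1^{\,n-1}=\|A\|_{\mathrm{op}}^{\,n-1}.
\]
Equivalently, $\operatorname{adj}(A)$ has the same singular vectors as $A$, with singular values $\prod_{j\ne i}s_j$; the largest of these omits the smallest $s_n$.

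\textbf{Step 2: the singular case by continuity.} The entries of $\operatorname{adj}(A)$ are $(n-1)$-minors of $A$, so $\operatorname{adj}$ is a polynomial, hence continuous, map. The inequality $\|\operatorname{adj}(A)\|_{\mathrm{op}}\le\|A\|_{\mathrm{op}}^{n-1}$ therefore extends from invertible matrices to all $A$ by density of invertible matrices. Alternatively, one can avoid the limit: for every $A$, the singular values of $\operatorname{adj}(A)$ are the products $\prod_{j\ne i}s_j$. This follows from the singular value decomposition $A=U\Sigma V^\top$ together with $\operatorname{adj}(XY)=\operatorname{adj}(Y)\operatorname{adj}(X)$ and $\operatorname{adj}(U)=\det(U)\,U^\top$ for orthogonal $U$.

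\textbf{Step 3: conclude with the triangle inequality.} Since $\|I_n+\lambda H\|_{\mathrm{op}}\le 1+|\lambda|\,\|H\|_{\mathrm{op}}$, raising to the power $n-1$ gives the claim. For $n=1$ the adjugate is $1$ and the bound is trivially $1$.

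No step is a real obstacle. The only point needing care is the singular case in Step 2, and the density/continuity argument handles it.
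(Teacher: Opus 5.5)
Your proposal is correct and follows essentially the same route as the paper: the singular values of $\operatorname{adj}(A)$ are the products $\prod_{j\neq i}\sigma_j(A)$, so $\|\operatorname{adj}(A)\|_{\mathrm{op}}\le\sigma_1(A)^{n-1}=\|A\|_{\mathrm{op}}^{n-1}$, and the triangle inequality finishes the bound. Your Step 2 (density, or the SVD identity $\operatorname{adj}(XY)=\operatorname{adj}(Y)\operatorname{adj}(X)$) also justifies the singular-value formula for singular $A$, which the paper simply asserts without proof.
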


  \begin{proof}
    Let $A := I_n+\lambda H$, and let $\sigma_1(A)\ge \cdots \ge \sigma_n(A)\ge 0$
    be its singular values. 
    The singular values of $\operatorname{adj}(A)$ are
    \[
      \left\{\prod_{j\neq i}\sigma_j(A):\, i=1,\dots,n\right\},
    \]
    hence
    \[
      \|\operatorname{adj}(A)\|_{\mathrm{op}}
      =
      \max_{1\le i\le n}\prod_{j\neq i}\sigma_j(A)
      \le
      \sigma_1(A)^{n-1}
      =
      \|A\|_{\mathrm{op}}^{\,n-1}.
    \]
    By the triangle inequality,
    \[
      \|A\|_{\mathrm{op}}
      =
      \|I_n+\lambda H\|_{\mathrm{op}}
      \le
      \|I_n\|_{\mathrm{op}}+|\lambda|\,\|H\|_{\mathrm{op}}
      =
      1+|\lambda|\,\|H\|_{\mathrm{op}}.
    \]
    Combining the two inequalities gives
    \[
      \|\operatorname{adj}(I_n+\lambda H)\|_{\mathrm{op}}
      \le
      \|I_n+\lambda H\|_{\mathrm{op}}^{\,n-1}
      \le
      \bigl(1+|\lambda|\,\|H\|_{\mathrm{op}}\bigr)^{n-1}.
    \]
    \qedhere
  \end{proof}

  \subsection{NNGP of Shallow Polynomial Networks}
  \begin{proposition}[NNGP of shallow polynomial networks]
    \label{prop:nngp-shallow-poly} Consider a two-layer network of width $m$ with
    element-wise polynomial activation $\sigma:\mathbb{R}\to\mathbb{R}$ given by
    $\sigma(z)=z^{d}$ for some integer $d\ge 1$.
    For an input $\bx\in\mathbb{R}^{n}$, the network output $f_{m}:\mathbb{R}^{n}
    \to\mathbb{R}$ is
    \[
      f_{m}(\bx) \;=\; \frac{1}{\sqrt{m}}\sum_{i=1}^{m}a_{i}\,\sigma(\bw_{i}^{\top}
      \bx + b_{i}) \;=\; \frac{1}{\sqrt{m}}\sum_{i=1}^{m}a_{i}\,(\bw_{i}^{\top}\bx
      + b_{i})^{d},
    \]
    where $\bw_{i}\in\mathbb{R}^{n}$ and $b_{i}$ are the hidden-layer weights and
    biases, respectively, and $a_{i}\in\mathbb{R}$ are the output-layer weights.
    Let the parameters $\{a_{i},\bw_{i},b_{i}\}_{i=1}^{m}$ be initialized i.i.d.\ from
    \[
      a_{i}\sim \mathcal{N}(0, 1),\quad b_{i}\sim \mathcal{N}(0, 1),\quad \bw_{i}
      \sim \mathcal{N}(0, \frac{1}{n}I_{n}).
    \]
    Then, as $m\to\infty$, the network output converges in distribution to a Gaussian
    process
    \[
      f(\bx) \sim \mathcal{GP}(0, K),
    \]
    where the kernel function is given by
    \begin{equation}
      K(\bx,\bx') := \sum_{s=0}^{\left\lfloor \frac{d}{2} \right\rfloor}c(s ,d) \left
      (\frac{1}{n}\|\bx\|_{2}^{2}+ 1\right)^{s}\left(\frac{1}{n}\| \bx' \|_{2}^{2}
      + 1\right)^{s}\left(\frac{1}{n}\bx^{\top}\bx' + 1\right )^{d-2s}, \label{eq:poly-nngp-kernel}
    \end{equation}
    with $c(s,d) = \dfrac{(d!)^{2}}{2^{2s}(s!)^{2}(d-2s)!}$.
    When $\bx = \bx'$, the kernel simplifies to
    \[
      K(\bx,\bx) = \operatorname{Var}[f(\bx)] = (2d - 1)!! \left(\frac{1}{n}\|\bx
      \|_{2}^{2}+ 1\right)^{d}.
    \]
  \end{proposition}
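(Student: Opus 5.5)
The plan is to prove the statement in two parts. First, show that the finite-width outputs $f_m$ converge to a centered Gaussian process. Then compute the covariance $\mathbb{E}[a\,(\bw^\top\bx+b)^d\,a\,(\bw^\top\bx'+b)^d]$ explicitly.

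\emph{Gaussian limit.} For fixed $\bx^{(1)},\dots,\bx^{(p)}$, the vectors
\[
\bigl(a_i(\bw_i^\top\bx^{(j)}+b_i)^d\bigr)_{j}
\]
are i.i.d.\ in $i$. Each has mean zero, because $a_i$ is centered and independent of $(\bw_i,b_i)$. Each also has finite second moments, since Gaussian polynomials have all moments. The multivariate central limit theorem, applied to the normalized sum $\tfrac{1}{\sqrt m}\sum_i$, therefore gives convergence of finite-dimensional distributions to a centered Gaussian vector. Its covariance is
\[
K(\bx,\bx')=\mathbb{E}[a^2]\,\mathbb{E}\bigl[(\bw^\top\bx+b)^d(\bw^\top\bx'+b)^d\bigr]=\mathbb{E}[U^dV^d],
\]
where $U=\bw^\top\bx+b$ and $V=\bw^\top\bx'+b$.

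\emph{Kernel computation.} The pair $(U,V)$ is centered bivariate Gaussian with
\[
\operatorname{Var}U=\sigma^2:=\tfrac1n\|\bx\|^2+1,\qquad \operatorname{Var}V=\tau^2:=\tfrac1n\|\bx'\|^2+1,\qquad \operatorname{Cov}(U,V)=\kappa:=\tfrac1n\bx^\top\bx'+1.
\]
The key step is the identity
\[
\mathbb{E}[U^dV^d]=\sum_{s}c(s,d)\,\sigma^{2s}\tau^{2s}\kappa^{d-2s}.
\]
I would obtain it by Wick/Isserlis: count perfect matchings of $d$ copies of $U$ and $d$ copies of $V$. A matching with $k=d-2s$ cross pairs $U$--$V$ has $s$ pairs $U$--$U$ and $s$ pairs $V$--$V$. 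The number of such matchings is
\[
\binom{d}{k}^2 k!\,\bigl((2s-1)!!\bigr)^2,
\]
and simplifying this gives $\frac{(d!)^2}{(d-2s)!\,(2s)!^2}\bigl(\tfrac{(2s)!}{2^s s!}\bigr)^2=\frac{(d!)^2}{2^{2s}(s!)^2(d-2s)!}=c(s,d)$. Parity forces $k\equiv d \pmod 2$, so $s$ runs from $0$ to $\lfloor d/2\rfloor$. As a cross-check, one can instead write $V=\tfrac{\kappa}{\sigma^2}U+W$ with $W$ independent of $U$ and expand binomially; this gives the same result.

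\emph{Diagonal.} On the diagonal, $\mathbb{E}[U^{2d}]=(2d-1)!!\,\sigma^{2d}$ directly. This is consistent with the identity $\sum_s c(s,d)=(2d-1)!!$, which counts all matchings of $2d$ points.

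The main obstacle is the combinatorial bookkeeping in the Wick count. Getting the factors $k!$ and $((2s-1)!!)^2$ right, and checking that they reduce to $c(s,d)$, needs care. The CLT step is routine given the finite moments.
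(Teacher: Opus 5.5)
Your proposal is correct and follows the paper's proof. Both use the CLT on the i.i.d.\ centered summands, then an Isserlis count of matchings with $d-2s$ cross pairs, and the paper reduces the same count $\binom{d}{d-2s}^2(d-2s)!\,((2s-1)!!)^2$ to $c(s,d)$. Your multivariate CLT over finitely many inputs is slightly more complete than the paper's single-point CLT, since it is what actually justifies the finite-dimensional Gaussian-process limit with covariance $\mathbb{E}[U^dV^d]$.
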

  \begin{proof}
    For fixed $\bx$, by the central limit theorem (CLT), as $m\to\infty$,
    \[
      f_{m}(\bx) \;\xrightarrow{d}\; \mathcal{N}\left(0, \mathbb{E}_{a,\bw,b}\left
      [a^{2}(\bw^{\top}\bx + b)^{2d}\right]\right).
    \]
    Since $a$, $\bw$, and $b$ are independent, we have
    \[
      \mathbb{E}_{a,\bw,b}\left[a^{2}(\bw^{\top}\bx + b)^{2d}\right] = \mathbb{E}
      _{a}[a^{2}] \cdot \mathbb{E}_{\bw,b}[(\bw^{\top}\bx + b)^{2d}] = \mathbb{E}
      _{\bw,b}[(\bw^{\top}\bx + b)^{2d}].
    \]
    Next, we compute $\mathbb{E}_{\bw,b}[(\bw^{\top}\bx + b)^{2d}]$.
    Note that $\bw^{\top}\bx + b$ is a Gaussian random variable with mean zero and
    variance $\frac{1}{n}\|\bx\|_{2}^{2}+ 1$.
    Therefore, using the moment formula for Gaussian random variables, we have
    \[
      \mathbb{E}_{\bw,b}[(\bw^{\top}\bx + b)^{2d}] = (2d - 1)!! \left(\frac{1}{n}
      \|\bx\|_{2}^{2}+ 1\right)^{d}.
    \]
    Thus, the variance of $f(\bx)$ is
    \[
      \operatorname{Var}[f(\bx)] = (2d - 1)!! \left(\frac{1}{n}\|\bx\|_{2}^{2}+ 1
      \right)^{d}.
    \]
    The kernel function $K(\bx,\bx')$ is given by
    \[
      K(\bx,\bx') = \operatorname{Cov}(f(\bx), f(\bx')) = \mathbb{E}_{a,\bw,b}[a^{2}
      (\bw^{\top}\bx + b)^{d}(\bw^{\top}\bx' + b)^{d}] = \mathbb{E}_{\bw,b}[(\bw^{\top}
      \bx + b)^{d}(\bw^{\top}\bx' + b)^{d}].
    \]
    Recall Isserlis' theorem \citep{isserlis_1918,janson_1997}, which states
    that for zero-mean Gaussian random variables $X_{1}, X_{2}, \ldots, X_{n}$,
    \[
      \mathbb{E}[X_{1}X_{2}\cdots X_{n}] = \sum_{p \in P_n^2}\prod_{\{i,j\} \in
      p}\mathbb{E}[X_{i}X_{j}],
    \]
    where $P_{n}^{2}$ is the set of all pairwise partitions of $\{1, 2, \ldots ,
    n\}$.
    If $(X,Y)$ is a bivariate zero-mean Gaussian vector, to find $\mathbb{E}[ X^{d}
    Y^{d}]$, we consider all pairwise partitions of the multiset
    \[
      \{\underbrace{X, X, \ldots, X}_{d \text{ times}}, \quad \underbrace{Y, Y,
      \ldots, Y}_{d \text{ times}}\}.
    \]
    The partition that contributes to $\mathbb{E}[X^{d}Y^{d}]$ must pair each
    $X$ with either another $X$ or a $Y$, and similarly for each $Y$.
    Let $s$ be the number of pairs of the form $(X,X)$ (or equivalently $(Y,Y )$)
    in the partition.
    Then there are $d - 2s$ pairs of the form $(X,Y)$.
    The number of such partitions is
    \[
      \binom{d}{d-2s}^{2}(d-2s)! \left((2s-1)!!\right)^{2},
    \]
    where $\binom{d}{d-2s}^{2}$ chooses which $X$'s and $Y$'s are paired
    together, $(d-2s)!$ counts the number of ways to pair the selected $X$'s with
    $Y$'s, and $\left((2s-1)!!\right)^{2}$ counts the number of ways to pair the
    remaining $X$'s and $Y$'s among themselves.
    Therefore, by Isserlis' theorem, we have
    \begin{align*}
      \mathbb{E}[X^{d}Y^{d}] & = \sum_{s=0}^{\left\lfloor \frac{d}{2} \right\rfloor}\binom{d}{d-2s}^{2}(d-2s)! \left((2s-1)!!\right)^{2}\mathbb{E}[X^{2}]^{s}\mathbb{E}[Y^{2}]^{s}\mathbb{E}[XY]^{d - 2s}                                                                                        \\
                             & = \sum_{s=0}^{\left\lfloor \frac{d}{2} \right\rfloor}\frac{(d!)^{2}}{\left((d-2s)!\right)^{2}\left((2s)!\right)^{2}}\cdot (d-2s)! \cdot \frac{\left((2s)!\right)^{2}}{2^{2s}\left(s!\right)^{2}}\mathbb{E}[X^{2}]^{s}\mathbb{E}[Y^{2}]^{s}\mathbb{E}[XY]^{d - 2s} \\
                             & = \sum_{s=0}^{\left\lfloor \frac{d}{2} \right\rfloor}\frac{(d!)^{2}}{2^{2s}(s!)^{2}(d-2s)!}\mathbb{E}[X^{2}]^{s}\mathbb{E}[Y^{2}]^{s}\mathbb{E}[XY]^{d - 2s}.
    \end{align*}
    Let $X = \bw^{\top}\bx + b$ and $Y = \bw^{\top}\bx' + b$.
    Then, $\mathbb{E}[X^{2}] = \frac{1}{n}\|\bx\|_{2}^{2}+ 1$, $\mathbb{E}[Y^{2}]
    = \frac{1}{n}\|\bx'\|_{2}^{2}+ 1$, and
    $\mathbb{E}[XY ] = \frac{1}{n}\bx^{\top}\bx' + 1$.
    Substituting these into the expression for $\mathbb{E}[X^{d}Y^{d}]$ yields the
    desired kernel formula.
  \end{proof}

  \begin{remark} \label{rmk:identities}
    We provide two identities about the coefficients $c(s,d)$ appearing in \cref{eq:poly-nngp-kernel}.
    These identites can be useful in simplifying certain expressions involving
    the kernel.
    \begin{align}
      \sum_{s=0}^{\left\lfloor \frac{d}{2} \right\rfloor}c_{s,d}        & = (2d - 1)!!,                    \\
      \sum_{s=0}^{\left\lfloor \frac{d}{2} \right\rfloor}c_{s,d}\cdot s & = \frac{d (d - 1)(2d - 3)!!}{2}.
    \end{align}
    The first identity follows from the expression of $K_{f}(\bx,\bx)$.
    To show the second identity, we note that
    \[
      \dfrac{c_{s,d}}{c_{s,d-1}}= \frac{d^{2}}{d-2s}.
    \]
    Therefore, we have
    \[
      \sum_{s=0}^{\left\lfloor \frac{d}{2} \right\rfloor}c_{s,d}\cdot (d-2s ) = d
      ^{2}\sum_{s=0}^{\left\lfloor \frac{d-1}{2} \right\rfloor}c_{s,d-1}.
    \]
    Re-arranging the above equation gives
    \[
      \sum_{s=0}^{\left\lfloor \frac{d}{2} \right\rfloor}c_{s,d}\cdot s = \frac{d}{2}
      \left( \sum_{s=0}^{\left\lfloor \frac{d}{2} \right\rfloor}c_{s,d}- d \sum_{s=0}
      ^{\left\lfloor \frac{d-1}{2} \right\rfloor}c_{s,d-1}\right) = \frac{d}{2}\left
      ( (2d - 1)!! - d (2d - 3)!! \right) = \frac{d (d - 1)(2d - 3)!!}{2}.
    \]
  \end{remark}

  \subsection{Derivatives of the NNGP Kernel}
  \begin{proposition}
    \label{prop:poly-nngp-kernel} Given the kernel function $K: \R^{n}\times \R^{n}
    \rightarrow \R$ defined as in \cref{eq:poly-nngp-kernel}, its first and
    second derivatives evaluated at the same point $\bx' = \bx$ are given by
    \begin{align*}
      \nabla_{\bx}K(\bx,\bx') \big|_{\bx'=\bx}                           & = \frac{d (2d - 1)!!}{n}\left(\frac{1}{n}\|\bx\|_{2}^{2}+ 1\right)^{d-1}\bx                                                                                                              \\
      \quad \nabla_{\bx}\nabla_{\bx'}^{\top}K(\bx,\bx') \big|_{\bx'=\bx} & = \dfrac{2d^2(d-1)(2d-3)!!}{n^2}\left(\frac{1}{n}\|\bx\|_{2}^{2}+ 1\right)^{d-2}\cdot \bx \bx^{\top}\\
      &\quad+\dfrac{d^2(2d-3)!!}{n}\left(\frac{1}{n}\|\bx\|_{2}^{2}+ 1\right)^{d-1}\cdot I_{n}.
    \end{align*}
  \end{proposition}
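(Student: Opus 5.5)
The plan is to differentiate the closed form of the kernel in \cref{eq:poly-nngp-kernel} term by term, writing $q(\bx)=1+\|\bx\|^2/n$ and $r(\bx,\bx')=1+\bx^\top\bx'/n$. Each summand is $c(s,d)\,q(\bx)^s q(\bx')^s r^{d-2s}$.

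For the first derivative, the product rule gives
\[
\nabla_{\bx}\bigl[q(\bx)^s r^{d-2s}\bigr] = \tfrac{2s}{n}q^{s-1}r^{d-2s}\bx + \tfrac{d-2s}{n}q^s r^{d-2s-1}\bx'.
\]
Multiply by $q(\bx')^s$ and set $\bx'=\bx$, so that $r=q$. Summand $s$ then contributes $c(s,d)\,\tfrac{d}{n}\,q^{d-1}\bx$, since $2s+(d-2s)=d$. Summing over $s$ and using the first identity of \cref{rmk:identities}, $\sum_s c(s,d)=(2d-1)!!$, gives the stated gradient.

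For the mixed Hessian $\nabla_{\bx}\nabla_{\bx'}^\top$, apply $\nabla_{\bx'}^\top$ to the two terms above before setting $\bx'=\bx$. The derivative $\nabla_{\bx'}q(\bx')^s$ contributes $\tfrac{2s}{n}\bx'$, and $\nabla_{\bx'}r^k$ contributes $\tfrac{k}{n}r^{k-1}\bx$. The only identity-matrix contribution comes from $\nabla_{\bx'}\bx'^\top=I_n$ in the second term, with coefficient $c(s,d)\tfrac{d-2s}{n}q^{d-1}$. All remaining terms are multiples of $\bx\bx^\top$ with coefficient $\tfrac{1}{n^2}q^{d-2}$ times
\[
2s\cdot 2s + 2s(d-2s) + (d-2s)(2s) + (d-2s)(d-2s-1).
\]
This equals $d^2-d+2s$.

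Summing over $s$ requires three weighted sums:
\begin{itemize}
\item $\sum_s c(s,d)(d-2s)=d^2(2d-3)!!$, already derived in the proof of \cref{thm:ed-kac-rice-high}.
\item $\sum_s c(s,d)(d^2-d+2s)=(d^2-d)(2d-1)!!+d(d-1)(2d-3)!!$, from the two identities in \cref{rmk:identities}. Factoring out $(2d-3)!!$ gives $d(d-1)(2d-3)!!\,[(2d-1)+1]=2d^2(d-1)(2d-3)!!$.
\end{itemize}
The identity coefficient is therefore $\tfrac{d^2(2d-3)!!}{n}q^{d-1}$ and the $\bx\bx^\top$ coefficient is $\tfrac{2d^2(d-1)(2d-3)!!}{n^2}q^{d-2}$, matching the statement.

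The main obstacle is the bookkeeping in the second derivative: tracking the four cross terms and checking that their coefficient collapses to $d^2-d+2s$. As a sanity check, I would specialize to $n=1$ and compare against the formulas used in the proof of \cref{prop:ed-kac-rice}.
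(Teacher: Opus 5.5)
Your proposal is correct and is essentially the paper's proof: differentiate each summand of \cref{eq:poly-nngp-kernel}, restrict to the diagonal to get the coefficients $\tfrac{d^2-d+2s}{n^2}q^{d-2}$ on $\bx\bx^\top$ and $\tfrac{d-2s}{n}q^{d-1}$ on $I_n$, then collapse the sums with the identities of \cref{rmk:identities}. You write out this final simplification, which the paper leaves implicit, and the only real deviation is in the gradient, where the paper uses the shortcut $\tfrac{\mathrm{d}}{\mathrm{d}\bx}K(\bx,\bx)=2\nabla_{\bx}K(\bx,\bx')|_{\bx'=\bx}$ with $K(\bx,\bx)=(2d-1)!!\,q(\bx)^d$ instead of summing term by term. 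One caveat: your proposed $n=1$ sanity check is not independent, because the formulas in the proof of \cref{prop:ed-kac-rice} are quoted from this very proposition.
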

  \begin{proof}
    To compute the gradient block $\nabla_{\bx}K(\bx,\bx') \big|_{\bx'=\bx}$, we
    first compute the total derivative of $K(\bx,\bx)$ with respect to $\bx$.
    By the chain rule and symmetry of the kernel, we have
    \[
      \frac{d \, K(\bx, \bx)}{d \, \bx}= \left(\nabla_{\bx}K(\bx,\bx')+\nabla_{\bx'}
      K(\bx,\bx')\right) \Big |_{\bx'=\bx}= 2 \cdot \nabla_{\bx}K(\bx ,\bx') \Big
      |_{\bx'=\bx}.
    \]

    Therefore, we have
    \[
      \nabla_{\bx}K(\bx,\bx') \big|_{\bx'=\bx}= \frac{1}{2}\cdot \frac{d \, K(\bx,
      \bx)}{d \, \bx}= \frac{d (2d - 1)!!}{n}\left(\frac{1}{n}\|\bx \|_{2}^{2}+ 1
      \right)^{d-1}\bx.
    \]

    Next, we compute the Hessian block
    $\nabla_{\bx}\nabla_{\bx'}^{\top}K(\bx,\bx') \big|_{\bx'=\bx}$.
    $\nabla_{\bx}K_{f}(\bx,\bx')$ can be computed as
    \begin{align*}
      \nabla_{\bx}K_{f}(\bx,\bx') & = \sum_{s=0}^{\left\lfloor \frac{d}{2} \right\rfloor}c_{s,d}\left(\frac{1}{n}\|\bx'\|_{2}^{2}+ 1\right)^{s}\left( \frac{2s}{n}\left(\frac{1}{n}\|\bx\|_{2}^{2}+ 1\right)^{s-1}\left(\frac{1}{n}\bx^{\top}\bx' + 1\right)^{d-2s}\bx \right. \\
                                  & \quad \left. + \frac{d-2s}{n}\left(\frac{1}{n}\|\bx\|_{2}^{2}+ 1\right)^{s}\left(\frac{1}{n}\bx^{\top}\bx' + 1\right)^{d-2s-1}\bx' \right)                                                                                                 \\
                                  & = \sum_{s=0}^{\left\lfloor \frac{d}{2} \right\rfloor}c_{s,d}\left(\frac{1}{n}\|\bx'\|_{2}^{2}+ 1\right)^{s}\left(\frac{1}{n}\|\bx\|_{2}^{2}+ 1\right)^{s-1}\left(\frac{1}{n}\bx^{\top}\bx' + 1\right)^{d-2s-1}                             \\
                                  & \quad \cdot \left( \frac{2s}{n}\left(\frac{1}{n}\bx^{\top}\bx' + 1\right) \bx + \frac{d-2s}{n}\left(\frac{1}{n}\|\bx\|_{2}^{2}+ 1\right) \bx' \right).
    \end{align*}
    Furthermore, we can compute
    $\nabla_{\bx}\nabla_{\bx'}^{\top}K_{f}(\bx,\bx')$ by differentiating
    $\nabla_{\bx}K_{f}(\bx,\bx')$ with respect to $\bx'$.
    Applying the product rule and chain rule, we obtain
    \begin{align*}
      &\quad \nabla_{\bx}\nabla_{\bx'}^{\top}K_{f}(\bx,\bx') \\
      & = \sum_{s=0}^{\left\lfloor \frac{d}{2} \right\rfloor}c_{s,d}\left(\frac{1}{n}\|\bx\|_{2}^{2}+ 1\right)^{s-1}\left(\frac{1}{n}\|\bx'\|_{2}^{2}+ 1\right)^{s-1}\left(\frac{1}{n}\bx^{\top}\bx' + 1\right)^{d-2s-2}                                      \\
                                                      & \quad \cdot \left[\frac{2s}{n}\left(\frac{1}{n}\bx^{\top}\bx' + 1\right) \left(\frac{2s}{n}\left(\frac{1}{n}\bx^{\top}\bx' + 1\right) \bx + \frac{d-2s}{n}\left(\frac{1}{n}\|\bx\|_{2}^{2}+ 1\right) \bx' \right){\bx'}^{\top}\right.                 \\
                                                      & \quad + \frac{d-2s-1}{n}\left(\frac{1}{n}\|\bx'\|_{2}^{2}+ 1\right) \left( \frac{2s}{n}\left(\frac{1}{n}\bx^{\top}\bx' + 1\right) \bx + \frac{d-2s}{n}\left(\frac{1}{n}\|\bx\|_{2}^{2}+ 1\right) \bx' \right) \bx^{\top}                              \\
                                                      & \quad + \left. \left(\frac{1}{n}\bx^{\top}\bx' + 1\right) \left(\frac{1}{n}\|\bx'\|_{2}^{2}+ 1\right) \left(\frac{2s}{n}\cdot \frac{1}{n}\cdot \bx{\bx'}^{\top}+ \frac{d-2s}{n}\left(\frac{1}{n}\|\bx'\|_{2}^{2}+ 1\right) \cdot I_{n}\right)\right].
    \end{align*}
    Evaluating the above expression at $\bx' = \bx$ and simplifying using the
    identities for $c_{s,d}$ yields
    \begin{align}
      & \quad \nabla_{\bx}\nabla_{\bx'}^{\top}K_{f}(\bx,\bx') \big|_{\bx'=\bx} \nonumber \\
      & = \sum_{s=0}^{\left\lfloor \frac{d}{2} \right\rfloor}c_{s,d}\left(\frac{d(d-1) + 2s}{n^{2}}\cdot \left(\frac{1}{n}\|\bx\|_{2}^{2}+ 1\right)^{d-2}\cdot \bx \bx^{\top}+ \frac{d-2s}{n}\cdot \left(\frac{1}{n}\|\bx\|_{2}^{2}+ 1\right)^{d-1}\cdot I_{n}\right) \nonumber \\
                                                                             & = \dfrac{2d^2(d-1)(2d-3)!!}{n^2}\left(\frac{1}{n}\|\bx\|_{2}^{2}+ 1\right)^{d-2}\cdot \bx \bx^{\top}+ \dfrac{d^2(2d-3)!!}{n}\left(\frac{1}{n}\|\bx\|_{2}^{2}+ 1\right)^{d-1}\cdot I_{n}.
    \end{align}
  \end{proof}

\section{Parameter Discriminant}

  \subsection{Proof of \cref{prop:quadratic-ed-degree}}
  \begingroup
  \renewcommand{\thetheorem}{\ref{prop:quadratic-ed-degree}}
  \renewcommand{\theHtheorem}{restated.\ref{prop:quadratic-ed-degree}}
  %
  \propquadraticeddegree* \endgroup

  \begin{proof}
    The hypersurface $\mathcal{V}$ is the union of two hyperplanes $H_{1}= \{ \bx
    : \ell_{1}(\bx)=0\}$ and $H_{2}= \{\bx : \ell_{2}(\bx)=0\}$. For a generic
    data point $\bu$, the ED critical points on $\mathcal{V}$ are simply the orthogonal
    projections of $\bu$ onto $H_{1}$ and $H_{2}$. Let these projections be
    $p_{1}(\bu)$ and $p_{2}(\bu)$. Thus, there are exactly two critical points. These
    two solutions collide precisely when $p_{1}(\bu ) = p_{2}(\bu)$. Let this
    common point be $\bx^{*}$. Since $\bx^{*}\in H_{1}$ and $\bx^{*}\in H_{2}$,
    it must lie in the intersection $H_{1}\cap H_{2}$. The condition that $\bx^{*}$
    is the orthogonal projection of $\bu$ onto $H_{1}$ implies that the vector $\bu
    - \bx^{*}$ is normal to $H_{1}$. Similarly, $\bu - \bx^{*}$ must be normal to
    $H_{2}$.

    Since the normals of $H_{1}$ and $H_{2}$ are not parallel by assumption, the
    only vector simultaneously normal to two transverse hyperplanes is the zero
    vector. Therefore, we must have $\bu - \bx^{*}= \mathbf{0}$, which implies $\bu
    = \bx^{*}$. Thus, the critical points coincide if and only if $\bu \in H_{1}\cap
    H_{2}\subseteq \mathcal{V}$.
  \end{proof}

  \subsection{Proof of \cref{thm:ed-degree-conics-n-variables}}
  \begingroup
  \renewcommand{\thetheorem}{\ref{thm:ed-degree-conics-n-variables}}
  \renewcommand{\theHtheorem}{restated.\ref{thm:ed-degree-conics-n-variables}}
  %
  \eddegreeconicsnvariables* \endgroup

  \begin{proof}
    The ED degree of $\mathcal{V}_{c,c'}^{\bt}$ is the number of complex critical
    points of the squared distance function $d_{\bu}(\bx)$ restricted to the regular
    part $(\mathcal{V}_{c,c'}^{\bt})_{\text{reg}}$ for a generic data point $\bu
    \in \mathbb{C}^{n}$. The critical points satisfy the Lagrange multiplier
    condition, which states that the vector $\bu-\bx$ is normal to the tangent space
    of $\mathcal{V}_{c,c'}^{\bt}$ at $\bx$. Algebraically, this implies $\bu - \bx
    = \lambda \nabla B_{\bt}(\bx)$ for some scalar $\lambda \in \mathbb{C}^{*}$.
    Substituting the gradient
    $\nabla B_{\bt}(\bx) = 2A_{\bt}\bx + \boldsymbol{b}_{\bt}$, we obtain the system:
    \[
      \bu - \bx = \lambda (2A_{\bt}\bx + \boldsymbol{b}_{\bt}).
    \]
    Rearranging terms yields a linear system for $\bx$ parameterized by
    $\lambda$:
    \begin{equation}
      \label{eq:linear_system}(I + 2\lambda A_{\bt})\bx = \bu - \lambda \boldsymbol
      {b}_{\bt}.
    \end{equation}
    For a generic $\bu$, the matrix $(I + 2\lambda A_{\bt})$ is invertible for
    all solutions $\lambda$. We solve for $\bx$ by changing the coordinate system
    (rotating) to the eigenbasis of~$A_{\bt}$.

    Since $A_{\bt}$ is symmetric, it is orthogonally diagonalizable over
    $\mathbb{R}$. Let $A_{\bt}= U \Sigma U^{\top}$ be its spectral decomposition,
    where $U$ is an orthogonal matrix ($U^{\top}U = I$) and
    $\Sigma = \text{diag}(\sigma_{1}, \dots, \sigma_{n})$ contains the
    eigenvalues. We rotate the coordinate system to the eigenbasis of $A_{\bt}$ by
    defining
    \[
      \tilde{\bx}= U^{\top}\bx, \quad \tilde{\bu}= U^{\top}\bu, \quad \tilde{\boldsymbol{b}}
      = U^{\top}\boldsymbol{b}_{\bt}.
    \]
    In these coordinates $A_{\bt}$ is diagonal, and the linear system decouples
    into scalar equations for each component~$i \in [n]$:
    \[
      (1 + 2\lambda \sigma_{i})\tilde{x}_{i}= \tilde{u}_{i}- \lambda \tilde{b}_{i}
      \implies \tilde{x}_{i}(\lambda) = \frac{\tilde{u}_{i}- \lambda \tilde{b}_{i}}{1
      + 2\lambda \sigma_{i}}.
    \]
    Substituting these parameterized coordinates back into the defining equation
    $B_{\bt}(\bx) = \tilde{\bx}^{\top}\Sigma \tilde{\bx}+ \tilde{\boldsymbol{b}}^{\top}
    \tilde{\bx}+ c_{\bt}= 0$ yields a univariate rational equation in $\lambda$,
    which we denote by $E (\lambda) = 0$:
    \begin{equation}
      \label{eq:rational_E}E(\lambda) := \sum_{i=1}^{n}\sigma_{i}\left( \frac{\tilde{u}_{i}-
      \lambda \tilde{b}_{i}}{1 + 2\lambda \sigma_{i}}\right)^{2}+ \sum_{i=1}^{n}\tilde
      {b}_{i}\left( \frac{\tilde{u}_{i}- \lambda \tilde{b}_{i}}{1 + 2\lambda \sigma_{i}}
      \right) + c_{\bt}= 0.
    \end{equation}
    Let $r$ be the number of distinct non-zero eigenvalues of $A_{\bt}$, denoted
    $\mu_{1}, \dots, \mu_{r}$. The common denominator of $E(\lambda)$ is $D(\lambda
    ) = \prod_{j=1}^{r}(1 + 2\lambda \mu_{j})^{2}$, which has degree $2r$. The ED
    degree of $\mathcal{V}_{c,c'}^{\bt}$ is given by the degree of the polynomial
    $P(\lambda)$, which is obtained from $E(\lambda)$ by clearing the
    denominators:
    \[
      P(\lambda) = E(\lambda) \cdot D(\lambda).
    \]
    Note that for generic $\bu$, the correspondence between roots $\lambda$ and
    critical points $\bx$ is bijective, as $D(\lambda)$ does not vanish at the roots
    of $P(\lambda)$ and the linear system uniquely determines $\bx$.

    To determine $\deg(P)$, we analyze the behavior of the rational function at
    infinity. Since $E(\lambda) ={P(\lambda)}/{D(\lambda)}$, the function scales
    as $E(\lambda) \sim \lambda^{\deg(P) - \deg(D)}$ as $\lambda \to \infty$. Letting
    $\text{ord}_{\infty}(E)$ denote this exponent of growth or decay, we obtain
    the relation:
    \[
      \deg(P) - \deg(D) = \text{ord}_{\infty}(E) \implies \text{EDdegree}(\mathcal{V}
      _{c,c'}^{\bt}) = 2r + \text{ord}_{\infty}(E).
    \]

    To determine the asymptotic behavior of $E(\lambda)$, we analyze the rank of
    the augmented matrix $M_{\bt}$ associated with the quadric. We partition the
    indices $\{1, \dots, n\}$ into two sets: $I_{\text{range}}= \{i \mid \sigma_{i}
    \neq 0\}$ and $I_{\text{ker}}= \{i \mid \sigma_{i}= 0\}$. The augmented
    matrix is given by:
    \[
      M_{\bt}=
      \begin{pmatrix}
        A_{\bt}                                & \frac{1}{2}\boldsymbol{b}_{\bt} \\
        \frac{1}{2}\boldsymbol{b}_{\bt}^{\top} & c_{\bt}
      \end{pmatrix}.
    \]

    First, we apply the orthogonal change of basis $U$ that diagonalizes $A_{\bt}$.
    Second, we permute the coordinates to group the indices into the two sets $I_{\text{range}}$
    and $I_{\text{ker}}$. This transformation yields the partitioned matrix:

    \[
      M'_{\bt}= \left(
      \begin{array}{ccc}
        \Sigma_{\text{range}}                                 & 0                                                   & \frac{1}{2}\tilde{\boldsymbol{b}}_{\text{range}} \\
        0                                                     & 0                                                   & \frac{1}{2}\tilde{\boldsymbol{b}}_{\text{ker}}   \\
        \frac{1}{2}\tilde{\boldsymbol{b}}_{\text{range}}^\top & \frac{1}{2}\tilde{\boldsymbol{b}}_{\text{ker}}^\top & c_{\bt}
      \end{array}
      \right).
    \]

    We perform symmetric block Gaussian elimination to eliminate the couplings between
    the range components and the affine part. Specifically, we subtract the projection
    of the first block-row onto the last row using the invertible pivot $\Sigma_{\text{range}}$.
    This operation zeroes out $\frac{1}{2}\tilde{\boldsymbol{b}}_{\text{range}}$
    and transforms the constant $c_{\bt}$ into the Schur complement $S$:
    \[
      S = c_{\bt}- (\frac{1}{2}\tilde{\boldsymbol{b}}_{\text{range}}^{\top}) \Sigma
      _{\text{range}}^{-1}(\frac{1}{2}\tilde{\boldsymbol{b}}_{\text{range}}) = c_{\bt}
      - \sum_{j \in I_{\text{range}}}\frac{\tilde{b}_{j}^{2}}{4\sigma_{j}}.
    \]The resulting matrix is block-diagonal, decoupling the range variables from
    the kernel and constant terms:
    \[
      M_{\bt}\cong
      \begin{pmatrix}
        \Sigma_{\text{range}} & 0                                                     & 0                                              \\
        0                     & 0                                                     & \frac{1}{2}\tilde{\boldsymbol{b}}_{\text{ker}} \\
        0                     & \frac{1}{2}\tilde{\boldsymbol{b}}_{\text{ker}}^{\top} & S
      \end{pmatrix}.
    \]
    The rank of $M_{\bt}$ is the sum of
    $\text{rank}(\Sigma_{\text{range}}) = \rank A_{\bt}$ and the rank of the
    residual block $K =
    \begin{pmatrix}
      0                                            & \tilde{\boldsymbol{b}}_{\text{ker}}/2 \\
      \tilde{\boldsymbol{b}}_{\text{ker}}^{\top}/2 & S
    \end{pmatrix}$.

    We now analyze the cases.

    \textbf{Case 1:} $\rank M_{\bt}= \rank A_{\bt}+ 1$.

    Since $\rank M_{\bt}= \rank A_{\bt}+ \rank K$, this condition implies that
    $\rank K = 1$. This, in turn, forces $\tilde{\boldsymbol{b}}_{\text{ker}}$ to
    be zero. Indeed, if $\tilde{\boldsymbol{b}}_{\text{ker}}$ were not zero, we
    could pick a non-zero component $(\tilde{\boldsymbol{b}}_{\text{ker}})_{i}$.
    The $2 \times 2$ submatrix formed by this component and $S$ would have determinant
    $-(\tilde{\boldsymbol{b}}_{\text{ker}})_{i}^{2}/4 \neq 0$, implying
    $\rank (K) \ge 2$. This contradicts our assumption. Therefore, $\tilde{\boldsymbol{b}}
    _{\text{ker}}= 0$. With $\tilde{\boldsymbol{b}}_{\text{ker}}= 0$, the matrix
    becomes diagonal $K = \text{diag}(0, \dots, 0, S)$. Its rank is 1 if and
    only if $S \neq 0$.

    We use these facts to find the degree of $E(\lambda)$. The terms in the sum (\ref{eq:rational_E})
    behave differently depending on whether the eigenvalue $\sigma_{i}$ is zero or
    not. We formally split the sum into two parts:
    \[
      E(\lambda) = E_{\text{range}}(\lambda) + E_{\text{ker}}(\lambda) + c_{\bt},
    \]
    where $E_{\text{range}}$ sums over indices with $\sigma_{i}\neq 0$, and
    $E_{\text{ker}}$ sums over indices with $\sigma_{i}= 0$:
    \[
      E_{\text{range}}(\lambda) := \sum_{i \in I_{\text{range}}}\left[ \sigma_{i}
      \left( \frac{\tilde{u}_{i}- \lambda \tilde{b}_{i}}{1 + 2\lambda \sigma_{i}}
      \right)^{2}+ \tilde{b}_{i}\left( \frac{\tilde{u}_{i}- \lambda \tilde{b}_{i}}{1
      + 2\lambda \sigma_{i}}\right) \right],
    \]
    \[
      E_{\text{ker}}(\lambda) := \sum_{i \in I_{\text{ker}}}\tilde{b}_{i}( \tilde
      {u}_{i}- \lambda \tilde{b}_{i}).
    \]

    Since $\tilde{\boldsymbol{b}}_{\text{ker}}= 0$, the coefficients in the kernel
    sum are all zero, so $E_{\text{ker}}(\lambda) = 0$. We just need the limit of
    the remaining parts as $\lambda \to \infty$:
    \[
      \lim_{\lambda \to \infty}E(\lambda) = \lim_{\lambda \to \infty}(E_{\text{range}}
      (\lambda) + c_{\bt}) = c_{\bt}-\sum_{j\in I_{\text{range}}}\frac{\tilde{b}_{j}^{2}}{4\sigma_{j}}
      = S.
    \]
    Because $S \neq 0$, $E(\lambda)$ approaches a non-zero constant. This means
    the numerator polynomial $P(\lambda)$ has the same degree as the denominator
    $D(\lambda)$:
    \[
      \text{EDdegree}(\mathcal{V}_{c,c'}^{\bt}) = \text{deg}(D) = 2r.
    \]

    \textbf{Case 2:} $\rank M_{\bt}= \rank A_{\bt}+ 2$.

    In this case, we get that $\rank K = 2$. This implies $\tilde{\boldsymbol{b}}
    _{\text{ker}}\neq 0$, as vanishing kernel components would restrict the rank
    of $K$ to at most 1. We analyze the behavior of
    $E (\lambda) = E_{\text{range}}(\lambda) + E_{\text{ker}}(\lambda) + c_{\bt}$
    as $\lambda \to \infty$. While the range terms converge to a constant, the
    non-zero kernel vector grows linearly:
    \[
      E(\lambda) = -\lambda \sum_{j \in I_{\text{ker}}}\tilde{b}_{j}^{2}+ O (1).
    \]

    Since the coefficient
    $-\sum \tilde{b}_{j}^{2}= -\|\tilde{\boldsymbol{b}}_{\text{ker}}\|^{2}$ is
    non-zero, this linear term dominates the behavior at infinity, implying
    $\text{ord}_{\infty}(E) = 1$. Thus, the degree of the numerator is one
    higher than the denominator:
    \[
      \text{EDdegree}(\mathcal{V}_{c,c'}^{\bt}) = \deg(D) + 1 = 2r + 1.
    \]

    \textbf{Case 3:} $\rank M_{\bt}= \rank A_{\bt}$.

    The condition $\rank M_{\bt}= \rank A_{\bt}+ \rank K$ implies that the residual
    block $K$ must be the zero matrix. This forces both
    $\tilde{\boldsymbol{b}}_{\text{ker}}= 0$ and $S = 0$. We analyze the asymptotic
    behavior of the range terms $E(\lambda) = c_{\bt}+ \sum_{i \in I_{\text{range}}}
    Q_{i}(T_{i}(\lambda))$, where $Q_{i}(t) = \sigma_{i}t^{2}+ \tilde{b}_{i}t$
    and
    $T_{i}(\lambda) = \frac{\tilde{u}_{i}- \lambda \tilde{b}_{i}}{1 + 2\lambda \sigma_{i}}$.

    We rewrite the numerator of $T_{i}(\lambda)$ to match the denominator:
    \[
      T_{i}(\lambda) = \frac{-\frac{\tilde{b}_i}{2\sigma_i}(1 + 2\lambda \sigma_{i})
      + \frac{\tilde{b}_i}{2\sigma_i}+ \tilde{u}_{i}}{1 + 2\lambda \sigma_{i}}= -
      \frac{\tilde{b}_{i}}{2\sigma_{i}}+ \frac{C_{i}}{1 + 2\lambda \sigma_{i}},
    \]
    where $C_{i}= \frac{\tilde{b}_{i}}{2\sigma_{i}}+ \tilde{u}_{i}$ is a constant.
    Let $t^{*}_{i}= -\frac{\tilde{b}_{i}}{2\sigma_{i}}$. Since the denominator $(
    1 + 2\lambda \sigma_{i})$ grows linearly with $\lambda$, the second term
    decays as $\lambda^{-1}$:
    \[
      T_{i}(\lambda) = t^{*}_{i}+ O(\lambda^{-1}).
    \]
    We substitute this into the quadratic $Q_{i}$. Note that $t^{*}_{i}$ is a
    critical point of $Q_{i}$, satisfying $Q_{i}'(t^{*}_{i}) = 2\sigma_{i}t^{*}_{i}
    + \tilde{b}_{i}= 0$. Using the Taylor expansion around $t^{*}_{i}$:
    \[
      Q_{i}(T_{i}(\lambda)) = Q_{i}(t^{*}_{i}) + \underbrace{Q_i'(t^*_i)}_{0}\cdot
      O(\lambda^{-1}) + O(\lambda^{-2}).
    \]

    Summing these terms:
    \[
      E(\lambda) = c_{\bt}+ \sum_{i \in I_{\text{range}}}Q_{i}(t^{*}_{i}) + O(\lambda
      ^{-2}) = S + O(\lambda^{-2}).
    \]
    Since $S=0$, the rational function decays as $O(\lambda^{-2})$. This implies
    that the degree of the numerator $P(\lambda)$ is exactly two less than the denominator:
    \[
      \text{EDdegree}(\mathcal{V}_{c,c'}^{\bt}) = \deg(D) - 2 = 2r - 2.
    \]
  \end{proof}

  \subsection{Proof of \cref{cor:conic-generic}}
  \begingroup
  \renewcommand{\thetheorem}{\ref{cor:conic-generic}}
  \renewcommand{\theHtheorem}{restated.\ref{cor:conic-generic}}
  %
  \conicgeneric* \endgroup

  \begin{proof}
    We examine the three cases from Theorem \ref{thm:ed-degree-conics-n-variables}.
    First, consider the case when $\text{rank}(M_{\bt}) = \text{rank}(A_{\bt}) +
    2$. The degree is given by $2r+1$. Since this value is always odd, it cannot
    equal the even integer $2n$. Next, consider the case when
    $\text{rank}(M_{\bt}) = \text{rank}(A_{\bt})$. The degree is given by $2r -2$.
    Because the number of distinct eigenvalues $r$ cannot exceed the dimension
    $n$, the maximum possible degree in this case is $2n-2$, which is strictly
    less than~$2n$. Therefore, the degree can equal $2n$ only in the case when
    $\text{rank}(M_{\bt}) = \text{rank}(A_{\bt}) + 1$. In this scenario, the
    degree is $2r$. Setting $2r = 2n$ implies that $r=n$. This means that the
    matrix $A_{\bt}$ must have exactly $n$ distinct non-zero eigenvalues. Hence,
    $A_{\bt}$ has full rank with no zero or repeated eigenvalues. Conversely, if
    $A_{\bt}$ is non-singular with distinct eigenvalues and
    $\mathcal{V}_{c,c'}^{\bt}$ is non-singular, we are in the case with $r=n$,
    yielding the ED degree of $2n$.
  \end{proof}

\section{ReLU networks}\label{sec:relu}
Although our results primarily concern algebraic classifiers, the metric-projection
formulation of Section~\ref{sec:verification-as-algebraic-problem} also gives a natural
geometric interpretation of ReLU networks. By
Proposition~\ref{prop:relaxation-equivalence}, which applies to arbitrary continuous
classifiers, the robustness margin of a test point $\bxi$ with a unique predicted class
may be computed by projecting $\bxi$ onto the relaxed boundary
$\V_{c,c'}^{\bt}$. For a ReLU network, the input space $\R^n$ is partitioned into
activation regions, and on each such region the map $f_{\bt}$ is affine linear. Thus, for
every full-dimensional activation region $P$ and every pair of classes $c \neq c'$, there
exist $a_{P,c,c'} \in \R^n$ and $b_{P,c,c'} \in \R$ such that
$$
f_{\bt,c'}(\bx)-f_{\bt,c}(\bx)=a_{P,c,c'}^\top \bx+b_{P,c,c'}
\qquad \text{for all } \bx \in P.
$$
Hence
$$
\V_{c,c'}^{\bt}\cap P
=
P\cap \{\bx \in \R^n : a_{P,c,c'}^\top \bx+b_{P,c,c'}=0\}.
$$
In particular, on each activation region, the relaxed boundary is an affine linear
variety, and the corresponding local metric-projection problem has ED degree $1$.

Thus the ReLU case separates local and global complexity in a particularly transparent
way. Locally, every piece of the relaxed boundary has ED degree 1, so there is no
nontrivial algebraic complexity on a fixed activation region. Globally, however, the
relaxed boundary is a union of such pieces across all activation regions, glued along
lower-dimensional faces; the difficulty of verification is therefore governed by the
combinatorics of this polyhedral stratification, namely by the number of relevant regions
and how their boundary pieces fit together. 
For algebraic classifiers, by contrast, the
relaxed boundary is already genuinely nonlinear on a single piece, and the ED degree
measures this intrinsic local complexity.

This separation of local and global complexity also suggests the correct analog of the ED
discriminant in the ReLU setting. On each fixed activation region, the relaxed boundary is an affine linear variety, so the local ED problem has a unique critical point and hence
empty ED discriminant. Thus any change in the verification landscape must come from the
global piecewise-linear geometry, not from local collisions of critical points: it occurs
when the nearest point switches from one affine stratum to another, or moves onto a
lower-dimensional face where strata meet. In this sense, the polyhedral separation of the
activation regions plays, for ReLU networks, the same global role that the ED discriminant
plays for algebraic decision boundaries. This also points to a natural intermediate class
for future study, namely piecewise-algebraic classifiers such as networks with polynomial
spline activations, where one expects both phenomena to coexist: nontrivial local ED
geometry on each piece together with global complexity coming from how the pieces fit
together.

\end{document}

%% file: math_commands.tex
\usepackage{amsmath,amsfonts,bm, amsthm}

\def\eqref#1{equation~\ref{#1}}

\def\1{\bm{1}}

\def\vtheta{{\bm{\theta}}}

\DeclareMathAlphabet{\mathsfit}{\encodingdefault}{\sfdefault}{m}{sl}
\SetMathAlphabet{\mathsfit}{bold}{\encodingdefault}{\sfdefault}{bx}{n}

\newcommand{\R}{\mathbb{R}}

\newcommand{\Cov}{\mathrm{Cov}}

\DeclareMathOperator*{\argmax}{arg\,max}

\newcommand{\rank}{\operatorname{rank}}

\newcommand{\bb}{\boldsymbol{b}}

\newcommand{\bx}{\boldsymbol{x}}
\newcommand{\bw}{\boldsymbol{w}}
\newcommand{\bu}{\boldsymbol{u}}
\newcommand{\bz}{\boldsymbol{z}}
\newcommand{\bt}{\boldsymbol{\theta}}
\newcommand{\bxi}{\boldsymbol{\xi}}
\newcommand{\V}{\mathcal{V}}
\newcommand{\B}{\mathcal{B}}